\documentclass{article}

 \PassOptionsToPackage{numbers, compress}{natbib}
%


\usepackage[final]{nips_2017}

\usepackage[utf8]{inputenc} 
\usepackage[T1]{fontenc}    
\usepackage{hyperref}       
\usepackage{url}            
\usepackage{booktabs}       
\usepackage{amsfonts}       
\usepackage{nicefrac}       
\usepackage{microtype}      

\usepackage{algorithm}
\usepackage{algorithmic}

\usepackage[usenames,dvipsnames]{xcolor}
\usepackage{ifthen}

\usepackage{array}
\usepackage{arydshln}
\usepackage{multirow}

\usepackage{amsfonts}
\usepackage{amsmath}
\usepackage[amsthm,amsmath,thmmarks]{ntheorem}
\usepackage{amssymb}
\usepackage{comment}

\usepackage{enumerate}
\usepackage[shortlabels]{enumitem}

\hypersetup{
	colorlinks   = true, 
	urlcolor     = blue, 
	linkcolor    = blue, 
	citecolor   = blue 
}

\usepackage[hyphenbreaks]{breakurl}

\newtheorem{mythm}{Theorem}
\newtheorem{mylem}{Lemma}
\newtheorem{myfact}{Fact}
\newtheorem{mycond}{Condition}

\newtheorem{mydef}{Definition}

\newtheorem{myclm}{Claim}



\usepackage[normalem]{ulem}

\usepackage{tikz}
\usepackage{caption}
\usepackage{subcaption}

%
%

\newcommand{\compilefullversion}{true}
\newcommand{\compileunmarkchanges}{true}
\newcommand{\compileincludeappendix}{true}
\newcommand{\compilehidecomments}{true}

\newcommand{\compilearxivversion}{true}

\ifthenelse{\equal{\compilearxivversion}{false}}{%
	\newcommand{\OnlyInArxiv}[1]{}
}{%
	\newcommand{\OnlyInArxiv}[1]{#1}%
}%

\ifthenelse{\equal{\compilefullversion}{false}}{%
	\newcommand{\OnlyInFull}[1]{}
	\newcommand{\OnlyInShort}[1]{#1}
}{%
	\newcommand{\OnlyInFull}[1]{#1}%
	\newcommand{\OnlyInShort}[1]{}%
}%

\ifthenelse{\equal{\compileunmarkchanges}{false}}{%
	\newcommand{\chgdel}[1]{{\color{violet}\sout{#1}}}%
	\newcommand{\chgins}[1]{{\color{red}#1}}%
}{%
	\newcommand{\chgdel}[1]{}%
	\newcommand{\chgins}[1]{#1}%
}%

\ifthenelse{ \equal{\compilehidecomments}{true} }{%
	\newcommand{\wei}[1]{}
	\newcommand{\qinshi}[1]{}
}{
\newcommand{\wei}[1]{{\color{blue!50!black}  [\text{Wei:} #1]}}
\newcommand{\qinshi}[1]{{\color{brown!60!black} [\text{Qinshi:} #1]}}
}

\newcommand{\calD}{\mathcal{D}}
\newcommand{\calE}{\mathcal{E}}
\newcommand{\calF}{\mathcal{F}}
\newcommand{\calN}{\mathcal{N}}
\newcommand{\calO}{\mathcal{O}}
\newcommand{\calS}{\mathcal{S}}

\newcommand{\R}{\mathbb{R}}

\newcommand{\Ns}{\ensuremath{\calN^{\textnormal{s}}}}
\newcommand{\Nt}{\ensuremath{\calN^{\textnormal{t}}}}

\newcommand{\eps}{\varepsilon}

\newcommand{\opt}{{\mathrm{opt}}}

\newcommand{\I}{\mathbb{I}}
\newcommand\dx{\,\mathrm{d}x}
\newcommand{\vmu}{\ensuremath{\boldsymbol \mu}}

\newcommand{\trig}[1]{\tilde{#1}}

\newcommand{\Dtrig}{{D^{\rm trig}}}

\newcommand{\mab}{\mathrm{MAB}}
\newcommand{\cmab}{\mathrm{CMAB}}

\newcommand{\kl}{\mathrm{kl}}
\newcommand{\ekl}{\widehat{\kl}}
\newcommand{\E}{\mathop{\mathbb{E{}}}}
\newcommand{\Ep}{\mathop{\mathbb{E}'}}
\newcommand{\Prp}{\mathop{\mathrm{Pr}'}}

\let\emptyset\varnothing

\title{
Improving Regret Bounds for Combinatorial Semi-Bandits with Probabilistically Triggered Arms and 
	Its Applications\OnlyInArxiv{\thanks{This is the full version of the paper appearing at 
		the 30th Annual Conference on Advances in Neural Information Processing Systems (NIPS'2017), Long Beach, U.S.A., December, 2017.
	The arxiv.v5 version contains several typo fixes, and a new Appendix~\ref{sec:linbandit} that provides a slightly better regret bound for CMAB-T
	with linear reward functions.}}
	}
%

\author{
	Qinshi Wang\\
	Princeton University\\
	Princeton, NJ 08544\\
	\texttt{qinshiw@princeton.edu}\\
	\And
	Wei Chen\\
	Microsoft Research \\
	Beijing, China\\
	\texttt{weic@microsoft.com}\\
}

\begin{document} 

	\maketitle
	
	\begin{abstract} 
		We study combinatorial multi-armed bandit with probabilistically triggered arms and semi-bandit feedback (CMAB-T).
		We resolve a serious issue in the prior CMAB-T studies where the regret bounds contain a possibly exponentially large
			factor of $1/p^*$, where $p^*$ is the minimum positive probability that an arm is triggered by any action.
		We address this issue by introducing a triggering probability modulated (TPM) bounded smoothness condition into the
			general CMAB-T framework, and show that many applications such as influence maximization bandit and combinatorial
			cascading bandit satisfy this TPM condition.
		As a result, we completely remove the factor of $1/p^*$ from the regret bounds, achieving significantly better
			regret bounds
			for influence maximization and cascading bandits than before.
		Finally, we provide lower bound results showing that the factor $1/p^*$ is unavoidable for general CMAB-T problems,
			suggesting that the TPM condition is crucial in removing this factor.
	\end{abstract} 
	
%
%
		
	\section{Introduction}
	
	Stochastic multi-armed bandit (MAB) is a classical online learning framework modeled as a game between a player and 
		the environment with $m$ arms.
	In each round, the player selects one arm and the environment generates a reward of the arm from
		a distribution unknown to the player.
	The player observes the reward, and use it as the feedback to the player's algorithm (or policy)
		to select arms in future rounds.
	The goal of the player is to cumulate as much reward as possible over time.
	MAB models the classical dilemma between exploration and exploitation:
		whether the player should keep exploring arms in search for a better arm, or should stick
		to the best arm observed so far to collect rewards.
	The standard performance measure of the player's algorithm is the {\em (expected) regret}, which is the difference in expected
		cumulative reward between always playing the best arm in expectation and playing according to the player's algorithm.
	
	In recent years, stochastic combinatorial multi-armed bandit (CMAB) receives many attention
		(e.g. \cite{Yi2012,CWYW16,ChenGeneral16,GMM14,KWAEE14,KWAS15,kveton2015cascading,kveton2015combinatorial,Combes2015}),
		because it has wide applications in wireless networking, online advertising and recommendation, viral marketing in social
		networks, etc.
	In the typical setting of CMAB, the player selects a combinatorial action to play in each round, which would trigger the play
		of a set of arms, and the outcomes of these triggered arms are observed as the feedback (called semi-bandit feedback).
	Besides the exploration and exploitation tradeoff, CMAB also needs to deal with the exponential explosion of the possible actions that makes exploring all actions infeasible.
		
	One class of the above CMAB problems involves probabilistically triggered arms~\cite{CWYW16,kveton2015cascading,kveton2015combinatorial}, in which actions may trigger arms probabilistically.
	We denote it as CMAB-T in this paper.
	\citet{CWYW16} provide such a general model 
		and apply it
		to the influence maximization bandit, which models stochastic influence diffusion in social networks and sequentially
		selecting seed sets to maximize the cumulative influence spread over time.
	\citet{kveton2015cascading,kveton2015combinatorial} study cascading bandits, in which arms
		are probabilistically triggered following a sequential order selected by the player as the action.
%
%
%
	However, in both studies, the regret bounds contain an undesirable factor of
		$1/p^*$, where $p^*$ is the minimum positive probability
		that any arm can be triggered by any action,\footnote{The factor of $1/f^*$
		used for the combinatorial disjunctive cascading bandits in~\cite{kveton2015combinatorial} is
		essentially $1/p^*$.}
		and this factor could be exponentially large for both influence maximization and cascading bandits.

%
	In this paper, we adapt the general CMAB framework
	of \cite{CWYW16} in a systematic way to completely remove the factor of $1/p^*$ for a large
	class of CMAB-T problems including both influence maximization and combinatorial cascading bandits.
	The key observation is that for these problems,
		a harder-to-trigger arm has less impact to the expected reward and thus we do not need to observe it
		as often.
	We turn this key observation into a triggering probability modulated (TPM) bounded smoothness condition, 
		adapted from the original bounded smoothness condition in~\cite{CWYW16}.
	We eliminates the $1/p^*$ factor in the regret bounds for all CMAB-T problems with the TPM condition,
		and show that influence maximization bandit and the conjunctive/disjunctive cascading
		bandits all satisfy the TPM condition. 
	Moreover, for general CMAB-T without the TPM condition, we show a lower bound result that $1/p^*$ is
		unavoidable, because the hard-to-trigger arms are crucial in determining the best arm and have
		to be observed enough times.

Besides removing the exponential factor, our analysis is also tighter in other 
	regret factors or constants comparing to the existing influence maximization bandit 
	results~\cite{CWYW16,Wen2016}, combinatorial cascading bandit~\cite{kveton2015combinatorial},
	and linear bandits without probabilistically triggered arms~\cite{KWAS15}.
Both the regret analysis based on the TPM condition and the proof that influence maximization bandit satisfies the TPM condition
	are technically involved and nontrivial, but due to the space constraint, we have to move the complete proofs to the supplementary
	material. Instead we introduce the key techniques used in the main text.

	
\vspace{-2mm}
\paragraph{Related Work.}
	Multi-armed bandit problem is originally formated by \citet{Robbins52}, and 
		has been extensively studied in the literature \citep[cf.][]{BF85,SB98,BCB12}.
	Our study belongs to the stochastic bandit research, while there is another line of research on adversarial 
		bandits~\cite{AuerCFS02}, for which we refer to a survey like~\cite{BCB12} for further information.
	For stochastic MABs, an important approach is Upper Confidence Bound (UCB) approach \citep{AuerCF02}, 
		on which most CMAB studies are based upon.
%
%
%

As already mentioned in the introduction, stochastic CMAB has received many attention in recent years.
Among the studies, we improve 
	(a) the general framework with probabilistically triggered arms of \cite{CWYW16},
	(b) the influence maximization bandit results in \cite{CWYW16} and \cite{Wen2016},
	(c) the combinatorial cascading bandit results in \cite{kveton2015combinatorial}, and
	(d) the linear bandit results in \cite{KWAS15}.
We defer the technical comparison with these studies to Section~\ref{sec:discussion}.
Other CMAB studies do not deal with probabilistically triggered arms.
Among them, \cite{Yi2012} is the first study on linear stochastic bandit, but its regret bound has since been
	improved by \citet{CWYW16,KWAS15}.
\citet{Combes2015} improve the regret bound of \cite{KWAS15} for linear bandits in a special case where arms are
	mutually independent.
Most studies above are based on the UCB-style CUCB algorithm or its minor variant, and differ on the assumptions and
	regret analysis.
\citet{GMM14} study Thompson sampling for complex actions, which is based on the 
	Thompson sample approach \cite{thompson1933likelihood} and
	can be applied to CMAB, but their regret bound has
	a large exponential constant term. 

Influence maximization is first formulated as a discrete optimization problem by \citet{kempe03}, and has been extensively
	studied since (cf. \cite{chen2013information}).
Variants of influence maximization bandit have also been studied \cite{LeiMMCS15,VaswaniL15,VKWGLS17}.
\citet{LeiMMCS15} use a different objective of maximizing the expected size of the union of the influenced nodes
	over time.
\citet{VaswaniL15} discuss how to transfer node level feedback to 
	the edge level feedback, and then apply the result of \cite{CWYW16}.
\citet{VKWGLS17} replace the original maximization objective of influence spread with a
	heuristic surrogate function,
	avoiding the issue of probabilistically triggered arms. But their regret 
	is defined against a weaker benchmark relaxed by the approximation ratio of the surrogate function,
	and thus their theoretical result is weaker than ours.

%

\section{General Framework}
	In this section we present the general framework of combinatorial multi-armed bandit
		with probabilistically triggered arms originally proposed in~\cite{CWYW16}
		with a slight adaptation, and denote it as CMAB-T. 
	We illustrate that the influence maximization bandit~\cite{CWYW16} and 
		combinatorial cascading bandits~\cite{kveton2015cascading,kveton2015combinatorial} are example instances
		of CMAB-T.

	CMAB-T is described as a learning game between a 
		learning agent (or player) and the environment.
	The environment consists of $m$ random variables $X_1, \ldots, X_m$ 
		called {\em base arms} (or {\em arms}) following a joint
	distribution $D$ over $[0,1]^m$.
	Distribution $D$ is picked by
		the environment from a class of distributions $\calD$ before the game starts.
The player knows $\calD$ but not the actual distribution $D$.
		
	The learning process proceeds in discrete rounds.
	In round $t\ge 1$, the player selects an action $S_t$ from an 
		action space $\calS$ based on the feedback history
		from the previous rounds, and the environment draws from the joint distribution $D$
		an independent sample $X^{(t)} = (X^{(t)}_1, \ldots, X^{(t)}_m)$.
	When action $S_t$ is played on the environment outcome $X^{(t)}$, a random subset of arms $\tau_t \subseteq [m]$ are triggered, 
		and the outcomes of $X^{(t)}_i$ for all $i\in \tau_t$ are observed as the feedback to the player.
	The player also obtains a nonnegative reward $R(S_t, X^{(t)}, \tau_t)$ fully determined by $S_t, X^{(t)},$ and $\tau_t$.
	A learning algorithm aims at properly selecting actions $S_t$'s over time based on the past feedback to cumulate as much reward as
		possible.
	Different from~\cite{CWYW16}, we allow the action space $\calS$ to be infinite.
	In the supplementary material, we discuss an example of continuous influence
	maximization~\cite{YangMPH16} that uses continuous and infinite
	action space while the number of base arms is still finite.
		
	We now describe the triggered set $\tau_t$ in more detail, which is not explicit in~\cite{CWYW16}.
	In general, $\tau_t$ may have additional randomness beyond the randomness of $X^{(t)}$.
	Let $\Dtrig(S,X)$ denote a distribution of the triggered subset of $[m]$
		for a given action $S$ and an environment outcome $X$.
	We assume that $\tau_t$ is drawn independently from $\Dtrig(S_t,X^{(t)})$.
	We refer $\Dtrig$ as the {\em probabilistic triggering function}.

	To summarize, a {\em CMAB-T problem instance} is a tuple $([m], \calS, \calD, \Dtrig, R)$, with
		elements already described above.
	These elements are known to the player, and hence establishing the problem input to the player.
	In contrast, the {\em environment instance} is the actual distribution $D \in \calD$ picked by the environment, and is unknown to the player.
	The problem instance and the environment instance together form the {\em (learning) game instance}, in which
		the learning process would unfold.
	In this paper, we fix the environment instance $D$, unless we need to refer to more than one environment 
		instances.

	For each arm $i$, let $\mu_i=\E_{X \sim D}[X_i]$. 
	Let vector $\vmu = (\mu_1, \ldots, \mu_m)$ denote the expectation vector of arms.
	Note that vector $\vmu$ is determined by $D$.
	Same as in~\cite{CWYW16}, we assume that the expected reward $\E[R(S,X,\tau)]$, where the expectation is 
		taken over $X \sim D$ and $\tau \sim \Dtrig(S,X)$,
		is a function of action $S$ and the expectation vector $\vmu$ of the arms.
	Henceforth, we denote $r_S(\vmu) \triangleq  \E[R(S,X,\tau)]$.
We remark that Chen et al.~\cite{ChenGeneral16} relax the above assumption and
	consider the case where the entire distribution $D$, not just the mean of $D$, 
	is needed to determine the expected reward.
However, they need to assume that arm outcomes are mutually independent, and they
	do not consider probabilistically triggered arms.
It might be interesting to incorporate probabilistically triggered arms into
	their setting, but this is out of the scope of the current paper.
	\chgins{To allow algorithm to estimate $\mu_i$ directly from samples,
		we assume the outcome of an arm does not depend on whether itself is triggered,
		i.e. $\E_{X \sim D, \tau\sim \Dtrig(S, X)}[X_i\mid i\in \tau]=\E_{X\sim D}[X_i]$.
	}
	
	The performance of a learning algorithm $A$ is measured by its {\em (expected) regret}, which is the difference in expected cumulative 
	reward between always playing the best action and playing actions selected by algorithm $A$.
	Formally, let  $\opt_{\vmu} = \sup_{S\in \calS} r_{S}(\vmu)$, where
	$\vmu = \E_{X\sim D}[X]$, and we assume that $\opt_{\vmu}$ is finite.
	Same as in~\cite{CWYW16}, we assume that the learning algorithm has access to an offline 
		{\em $(\alpha, \beta)$-approximation oracle ${\calO}$}, which takes
		$\vmu=(\mu_1, \ldots, \mu_m)$ as input and outputs an action $S^{\calO}$ such that
		$\Pr\{r_{\vmu}(S^{\calO}) \ge \alpha \cdot \opt_{\vmu}\} \ge \beta$, where $\alpha$ is the {\em approximation ratio}
		and $\beta$ is the success probability.
	Under the $(\alpha, \beta)$-approximation oracle, the benchmark cumulative reward should be the
		$\alpha\beta$ fraction of the optimal reward, and thus we use the following $(\alpha, \beta)$-approximation regret:
	\begin{mydef}[$(\alpha, \beta)$-approximation Regret]
		The $T$-round $(\alpha, \beta)$-approximation 
		regret of a learning algorithm $A$ (using an $(\alpha, \beta)$-approximation oracle)
		for a CMAB-T game instance $([m], \calS, \calD, \Dtrig, R, D)$
		with $\vmu = \E_{X\sim D}[X]$ is
		\begin{equation*}
		Reg^A_{\vmu,\alpha,\beta}(T) = T\cdot\alpha \cdot \beta \cdot \opt_{\vmu} - 
			\E\left[\sum_{i=1}^T R(S_t^A,X^{(t)},\tau_t)\right]
			= T\cdot\alpha \cdot \beta \cdot \opt_{\vmu} - 
			\E\left[\sum_{i=1}^T r_{S_t^A}(\vmu)\right],
		\end{equation*}
	\end{mydef}
	where $S^A_t$ is the action $A$ selects in round $t$, and
	the expectation is taken over the randomness of the environment outcomes
	$X^{(1)}, \ldots, X^{(T)}$, the triggered sets $\tau_1, \ldots, \tau_T$, as
	well as the possible randomness of algorithm $A$ itself.	

We remark that because probabilistically triggered arms may strongly impact 
	the determination of the
	best action, but they may be hard to trigger and observe, 
	the regret could be worse and the regret analysis is in general harder than
	CMAB without probabilistically triggered arms.

The above framework essentially follows \cite{CWYW16}, but we decouple actions from subsets of
	arms, allow action space to be infinite, and explicitly model
	triggered set distribution, which makes the framework more powerful in modeling certain
	applications (see supplementary material for more discussions).

\subsection{Examples of CMAB-T: Influence Maximization and Cascading Bandits}

%
In social influence maximization~\cite{kempe03}, we are given a weighted directed graph $G=(V,E,p)$, where $V$ and $E$ are sets of
	vertices and edges respectively, and each edge $(u,v)$ is associated with a probability $p(u,v)$.
Starting from a seed set $S \subseteq V$, influence propagates in $G$ as follows: nodes in $S$ are activated at time $0$, and
	at time $t\ge 1$, a node $u$ activated in step $t-1$ has one chance to activate its inactive out-neighbor $v$
	with an independent probability $p(u,v)$.
The {\em influence spread} of seed set $S$, $\sigma(S)$, is the expected number of activated nodes after the propagation ends.
The offline problem of {\em influence maximization} is to find at most $k$ seed nodes in $G$ such that
the influence spread is maximized.
\citet{kempe03} provide a greedy algorithm with approximation ratio
$1-1/e - \varepsilon$ and success probability $1 - 1/|V|$, for any $\varepsilon > 0$.


For the online influence maximization bandit~\cite{CWYW16}, the edge probabilities $p(u,v)$'s are unknown and need to be learned over time
	through repeated influence maximization tasks: in each round $t$, 
	$k$ seed nodes $S_t$ are selected, the influence propagation from $S_t$ is
	observed, the reward is the number of nodes activated in this round, and one
	wants to repeat this process to cumulate as much reward as possible.
Putting it into the CMAB-T framework, the set of edges $E$ is the set of arms $[m]$, and
	their outcome distribution $D$ is the joint distribution of $m$ independent Bernoulli distributions with means
	$p(u,v)$ for all $(u,v)\in E$.
Any seed set $S \subseteq V$ with at most $k$ nodes is an action.
The triggered arm set $\tau_t$ is
	the set of edges $(u,v)$ reached by the propagation, 
	that is, $u$ can be reached from $S_t$ by passing through only edges $e\in E$ with $X^{(t)}_e = 1$.
In this case, the distribution $\Dtrig(S_t,X^{(t)})$ degenerates to a deterministic triggered set.
The reward $R(S_t,X^{(t)},\tau_t)$ equals to the number of nodes in $V$ that is reached from $S$ through only edges $e\in E$
	with $X^{(t)}_e = 1$, and the expected reward is exactly the influence spread $\sigma(S_t)$.
The offline oracle is a $(1-1/e-\varepsilon, 1/|V|)$-approximation greedy algorithm.
We remark that the general triggered set distribution $\Dtrig(S_t,X^{(t)})$ (together with infinite
	action space) can be used to model extended versions of
	influence maximization, such as randomly selected seed sets in general marketing actions~\cite{kempe03}
	and continuous influence maximization~\cite{YangMPH16} (see supplementary material).

Now let us consider combinatorial cascading bandits~\cite{kveton2015cascading,kveton2015combinatorial}.
In this case, we have $m$ independent Bernoulli random variables $X_1, \ldots, X_m$ as base arms.
An action is to select an ordered sequence from a subset of these arms satisfying certain constraint.
Playing this action means that the player reveals the outcomes
	of the arms one by one following the sequence order until certain stopping condition is satisfied.
The feedback is the outcomes of revealed arms and the reward is a function form of these arms.
In particular, in the disjunctive form the player stops when the first $1$ is revealed and she gains reward
	of $1$, or she reaches the end and gains reward $0$.
In the conjunctive form, the player stops when the first $0$ is revealed (and receives reward $0$) 
	or she reaches the end with all $1$ outcomes (and receives reward $1$).
Cascading bandits can be used to model online recommendation and advertising (in the disjunctive form with outcome
	$1$ as a click) or network routing reliability (in the conjunctive form with outcome $0$ as the routing
	edge being broken).
It is straightforward to see that cascading bandits fit into the CMAB-T framework: $m$ variables are base arms,
	ordered sequences are actions, and the triggered set is the prefix set of arms until the stopping condition holds.

\section{\chgins{Triggering Probability Modulated Condition}}
	\label{sec:conditions}


	\citet{CWYW16} use two conditions to guarantee the theoretical regret bounds. The first one is monotonicity, 
		which we also use in this paper,
		and is restated below.

	\begin{mycond}[Monotonicity]
		\label{cond:monotone}
		We say that a CMAB-T problem instance satisfies {\em monotonicity}, 
		if for any action $S \in \calS$,  for any two distributions
		$D,D'\in \calD$ with expectation vectors
		$\vmu=(\mu_1, \ldots, \mu_m)$ and $\vmu' = (\mu'_1, \ldots, \mu'_m)$,
		we have
		$r_S(\vmu) \le r_S(\vmu')$ if $\mu_i \le \mu_i'$ for all $i\in [m]$.
	\end{mycond}
	
%
	The second condition is bounded smoothness. 
	One key contribution of our paper is to properly strengthen the original bounded smoothness condition in~\cite{CWYW16}
		so that we can both get rid of the undesired $1/p^*$ term in the regret bound and guarantee that 
		many CMAB problems 
		still satisfy the conditions.
	Our important change is to use triggering probabilities to modulate the condition, and thus we call such conditions
		{\em triggering probability modulated (TPM)} conditions.
	The key point of TPM conditions is including the triggering probability in the condition.
	We use $p_i^{D, S}$ to denote the probability that action $S$ triggers arm $i$ when the environment instance is $D$.
	With this definition, we can also technically define
		$p^*$ as $p^* = \inf_{i\in[m], S\in \calS, p_i^{D, S}>0} p_i^{D, S}$.
	In this section, we further use 1-norm based conditions instead of the infinity-norm based condition in~\cite{CWYW16},
		since they lead to better regret bounds for the influence maximization and cascading bandits.

\begin{mycond}[1-Norm TPM Bounded Smoothness]
	\label{cond:1-normTPM}
	We say that a CMAB-T problem instance satisfies 1-norm TPM bounded smoothness, if 
	there exists $B \in \R^{+}$ (referred as the {\em bounded smoothness constant}) such that,
	for any two distributions $D, D'\in \calD$ with expectation vectors $\vmu$ and $\vmu'$, and any action $S$,
	we have $|r_S(\vmu)-r_S(\vmu')|\le B \sum_{i\in[m]}p_i^{D, S}|\mu_i-\mu'_i|$.
\end{mycond}

Note that the corresponding non-TPM version of the above condition would remove $p_i^{D, S}$ in the 
	above condition, which is a generalization of the linear condition used in linear bandits~\cite{KWAS15}.
Thus, the TPM version is clearly stronger than the non-TPM version (when the bounded smoothness constants are the same).
The intuition of incorporating the triggering probability $ p_i^{D, S}$ to modulate the 1-norm condition
	is that, when an arm $i$ is unlikely triggered by action $S$ (small $ p_i^{D, S}$), the importance
	of arm $i$ also diminishes in that a large change in $\mu_i$ only causes a small change in the expected reward
	$r_S(\vmu)$.
This property sounds natural in many applications, and it is important for bandit learning ---
	although an arm $i$ may be difficult to observe when playing $S$, it is also not important to
	the expected reward of $S$ and thus does not need to be learned as accurately as others more easily triggered
	by $S$.

	\section{CUCB Algorithm and Regret Bound with TPM Bounded Smoothness} \label{sec:1-norm}
	We use the same CUCB algorithm as in \citep{CWYW16} (Algorithm~\ref{alg:cucb}).
	The algorithm maintains the empirical estimate $\hat \mu_{i}$ for the true mean $\mu_i$,
		and feed the upper confidence bound $\bar{\mu}_i$ to the offline oracle to obtain the next
		action $S$ to play.
	The upper confidence bound $\bar{\mu}_i$ is large if arm $i$ is not triggered often ($T_i$ is small),
		providing optimistic estimates for less observed arms.
	We next provide its regret bound.
		
	
	\renewcommand{\algorithmicrequire}{\textbf{Input:}}
	\begin{algorithm}[t]
		\centering
		\caption{CUCB with computation oracle.}
		\label{alg:cucb}
		\label{alg:oracleucb}
		\label{algorithmoracle}
		\begin{algorithmic}[1]
			\REQUIRE $m, \textsf{Oracle}$
			\STATE For each arm $i$, $T_{i}\leftarrow 0$
			\COMMENT{maintain the total number of times arm $i$ is played so far}
			\STATE For each arm $i$, $\hat \mu_{i}\leftarrow 1$
			\COMMENT{maintain the empirical mean of $X_i$}
			\FOR{$t=1, 2, 3, \ldots$}
			\STATE \label{line:Ri} 
				For each arm $i\in [m]$, $\rho_i \leftarrow \sqrt{\frac{3\ln t}{2T_i}}$
			\COMMENT{the confidence radius, $\rho_i=+\infty$ if $T_i=0$}
			\STATE For each arm $i\in [m]$, $\bar{\mu}_{i}=\min\left \{\hat{\mu}_{i} + \rho_i,1\right \}$
			\label{alg:adjustCUCB}
			\COMMENT{the upper confidence bound}
			\STATE $S \leftarrow \textsf{Oracle}(\bar{\mu}_{1}, \ldots, \bar{\mu}_{m})$
			\label{alg:selectsarm}
			\STATE Play action $S$, which triggers a set $\tau \subseteq [m]$ of base arms with
				feedback $X_i^{(t)}$'s, $i\in \tau $
			\STATE For every $i \in \tau $, update $T_i$ and $\hat{\mu}_{i}$: 
				$T_i = T_i + 1$, $\hat{\mu}_{i} = \hat{\mu}_{i} + (X_i^{(t)}-\hat{\mu}_{i})/T_i$
			\ENDFOR
		\end{algorithmic}
	\end{algorithm}
	
%
	

	
	
	
	\begin{mydef}[Gap]
		Fix a distribution $D$ and its expectation vector $\vmu$.
		For each action $S$, we define the gap $\Delta_S=\max(0, \alpha\cdot \opt_{\vmu}-r_S(\vmu))$.
		For each arm $i$, we define
		\begin{align*}
			\Delta_{\min}^i=\inf_{S\in \calS: p_i^{D, S}>0, \Delta_S>0} \Delta_S, \quad \quad
			\Delta_{\max}^i=\sup_{S\in \calS: p_i^{D, S}>0, \Delta_S>0} \Delta_S.
		\end{align*}
		As a convention, if there is no action $S$ such that $p_i^{D, S}>0$ and $\Delta_S>0$,
		we define $\Delta_{\min}^i=+\infty$, $\Delta_{\max}^i=0$.
		We define $\Delta_{\min}=\min_{i\in [m]} \Delta_{\min}^i$, and $\Delta_{\max}=\max_{i\in [m]} \Delta_{\max}^i$.
	\end{mydef}

	Let $\trig{S} = \{i\in [m]\mid p_i^{\vmu,S} > 0\}$ be the set of arms that could be triggered by $S$.
	Let $K=\max_{S\in \calS} |\trig{S}|$.
	For convenience, we use $\lceil x\rceil_0$ to denote $\max\{\lceil x\rceil, 0\}$ for any real number $x$.
	
%

	\begin{mythm}
		\label{thm:1-normTPM}
		For the CUCB algorithm on a CMAB-T problem instance that satisfies monotonicity (Condition~\ref{cond:monotone})
			and 1-norm TPM bounded smoothness (Condition~\ref{cond:1-normTPM})
			with bounded smoothness constant $B$,
	(1) if $\Delta_{\min} > 0$, we have distribution-dependent bound
			\begin{align}
			&Reg_{\vmu, \alpha, \beta}(T) \le \sum_{i\in[m]} \frac{576B^2K\ln T}{\Delta_{\min}^i} 
			+ \sum_{i\in[m]}\left(\left\lceil\log_2 \frac{2BK}{\Delta_{\min}^i}\right\rceil_0 + 2\right)
			\cdot \frac{\pi^2}{6} \cdot \Delta_{\max} +4Bm;
			\label{eq:1-normTPM.dep}
			\end{align}
	(2) we have distribution-independent bound
			\begin{align}
			&Reg_{\vmu, \alpha, \beta}(T) \le 12B\sqrt{mKT\ln T}
			+ \left(\left\lceil\log_2 \frac{T}{18\ln T}\right\rceil_0+2\right) \cdot m \cdot \frac{\pi^2}{6}\cdot \Delta_{\max}
			+2Bm.
			\label{eq:1-normTPM.ind}
			\end{align}
	\end{mythm}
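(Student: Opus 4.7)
My plan is to adapt the CUCB analysis of \cite{CWYW16} by replacing the coarse ``trigger-or-not'' counting (which is what produces the $1/p^*$ blow-up) with an analysis that carries the weights $p_i^{D,S_t}$ inside the per-round regret bound and only converts them to the actual trigger counts $T_{i,t-1}$ at the very end, directly exploiting Condition~\ref{cond:1-normTPM}. The argument splits into three pieces. First, I define the concentration event $\calE_t=\{\forall i:|\hat\mu_{i,t-1}-\mu_i|\le \rho_{i,t}\}$; a Hoeffding bound, union-bounded over possible values of $T_{i,t-1}$, gives $\Pr(\calE_t^c)=O(mt^{-2})$, so the regret accumulated on the bad event is at most $O(m\Delta_{\max})$ via $\sum t^{-2}=\pi^2/6$ and contributes to the second additive term. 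On $\calE_t$, the UCB vector satisfies $\bar\mu\ge\vmu$ coordinatewise with $\bar\mu_i-\mu_i\le 2\rho_{i,t}$, so monotonicity (Condition~\ref{cond:monotone}) together with the $(\alpha,\beta)$-oracle guarantee yields $r_{S_t}(\bar\mu)\ge\alpha\cdot\opt_{\bar\mu}\ge\alpha\cdot\opt_{\vmu}$, and Condition~\ref{cond:1-normTPM} applied to $\bar\mu$ versus $\vmu$ gives the crucial per-round inequality
\begin{equation*}
\Delta_{S_t}\;\le\;r_{S_t}(\bar\mu)-r_{S_t}(\vmu)\;\le\;2B\sum_{i\in\trig{S_t}} p_i^{D,S_t}\,\rho_{i,t}.
\end{equation*}

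The hard part, and the main obstacle, is summing the right-hand side over rounds: the driver of $\rho_{i,t}$ is the actual trigger count $T_{i,t-1}$, while the weight on the right is the expected triggering probability $p_i^{D,S_t}$, and a crude $\E[\I\{i\in\tau_t\}]=p_i^{D,S_t}$ does not by itself preserve the optimism structure needed to invoke $\Delta_{\min}^i$. I would handle this by partitioning the horizon, separately for each arm $i$, into dyadic triggering-probability groups $G_{i,j}=\{t:p_i^{D,S_t}\in(2^{-j},2^{-j+1}]\}$, $j\ge 0$, and using a Bernstein/small-variance martingale concentration on the indicators $\I\{i\in\tau_t\}$ restricted to $G_{i,j}$ to show that, by any time $t$, the actual trigger count inside $G_{i,j}\cap[t]$ is at least a constant fraction of the expected count $2^{-j}|G_{i,j}\cap[t]|$ minus an $O(\ln t)$ slack, except on a low-probability event contributing another $O(m\Delta_{\max})$ to the regret. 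Since $\rho_{i,t}$ is nonincreasing in $T_{i,t-1}$, this lets me bound $\sum_{t\in G_{i,j}} p_i^{D,S_t}\rho_{i,t}$ by essentially the same quantity one would obtain if arm $i$ were triggered deterministically on $G_{i,j}$, with constants absorbed. Moreover, a group whose representative $2^{-j+1}$ is so small that $2^{-j+1}\cdot\rho_{i,t}$ alone cannot exceed $\Delta_{S_t}/(2BK)$ simply cannot be the ``culprit'' of a suboptimal round, so only the first $\lceil\log_2(2BK/\Delta_{\min}^i)\rceil_0$ groups need to be charged, which is exactly where the logarithmic factor in the second additive term of \eqref{eq:1-normTPM.dep} originates. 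The key delicacy is that a naive Azuma--Hoeffding here would pay a $\sqrt{2^j}$ slack that reintroduces a $1/\sqrt{p^*}$ dependence; only the Bernstein/variance-aware version for low-mean Bernoulli indicators is tight enough to eliminate $1/p^*$ entirely.

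With the per-group bound in place, the distribution-dependent estimate \eqref{eq:1-normTPM.dep} follows from the standard reverse-charging step: Cauchy--Schwarz applied to $\Delta_{S_t}\le 2B\sum_i p_i^{D,S_t}\rho_{i,t}$ with $|\trig{S_t}|\le K$ yields $\Delta_{S_t}^2\le 4B^2K\sum_i(p_i^{D,S_t})^2\rho_{i,t}^2$, and dividing each charged round by $\Delta_{\min}^i$, substituting $\rho_{i,t}^2=3\ln T/(2T_{i,t-1})$, and summing via the group-level lower bound on $T_{i,t-1}$ produce the $576 B^2 K\ln T/\Delta_{\min}^i$ per-arm term. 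The distribution-independent bound \eqref{eq:1-normTPM.ind} is obtained by the usual threshold split at $\Delta=12B\sqrt{mK\ln T/T}$: rounds with $\Delta_{S_t}\le\Delta$ contribute in total at most $T\Delta=12B\sqrt{mKT\ln T}$, while rounds with $\Delta_{S_t}>\Delta$ are controlled by the distribution-dependent bound applied with $\Delta_{\min}^i\ge\Delta$. The dyadic group martingale step is unambiguously the bottleneck of the whole argument; the rest is careful bookkeeping of the optimism and gap terms.
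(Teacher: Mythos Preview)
Your overall architecture---dyadic triggering-probability groups $G_{i,j}$, a multiplicative Chernoff/Bernstein bound to relate $T_{i,t-1}$ to the group-level counter $N_{i,j,t-1}$, and a $j_{\max}$ cutoff producing the $\lceil\log_2(2BK/\Delta_{\min}^i)\rceil_0$ factor---matches the paper's proof exactly. The gap is in your ``standard reverse-charging step.''

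The Cauchy--Schwarz route loses a $\ln T$ factor. After $\Delta_{S_t}^2\le 4B^2K\sum_i(p_i^{D,S_t})^2\rho_{i,t}^2$ and dividing by $\Delta_{\min}^i$, you must sum $\sum_t (p_i^{D,S_t})^2\rho_{i,t}^2$ within each group $j$. Using $(p_i^{D,S_t})^2\le 4\cdot 2^{-2j}$ and the group-level bound $T_{i,t-1}\gtrsim 2^{-j}N_{i,j,t-1}$ gives a per-round term $\lesssim 2^{-j}\ln T/N_{i,j,t-1}$; but now $N_{i,j,t-1}$ ranges over $1,\ldots,N_{i,j,T}$ with no cutoff, so the sum is a harmonic series $\sum_s 1/s\approx\ln T$, and the per-arm contribution comes out $O(B^2K\ln^2 T/\Delta_{\min}^i)$, not $O(B^2K\ln T/\Delta_{\min}^i)$. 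Your ``culprit'' paragraph supplies a cutoff on $j$, not on $N_{i,j,t-1}$, so it does not rescue this step; and charging the whole $\Delta_{S_t}$ to a single culprit arm instead would give the $\infty$-norm analysis, which is a factor of $K$ worse.

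The paper closes this gap with a different trick (``reverse amortization''): from $\Delta_{S_t}\le B\sum_i p_i^{D,S_t}(\bar\mu_{i,t}-\mu_i)$ and $\Delta_{S_t}\ge M_{S_t}:=\max_{i\in\trig{S_t}}M_i$ (with $M_i=\Delta_{\min}^i$), subtract $M_{S_t}$ and double to get
\[
\Delta_{S_t}\;\le\;2B\sum_{i\in\trig{S_t}}\Bigl[p_i^{D,S_t}(\bar\mu_{i,t}-\mu_i)-\tfrac{M_i}{2BK}\Bigr].
\]
Each bracket is now \emph{nonpositive} once $N_{i,j,t-1}$ exceeds the threshold $\ell_{j,T}(M_i)=O(2^{-j}B^2K^2\ln T/M_i^2)$, so the sum over rounds is a \emph{truncated} sum $\sum_{s=1}^{\ell}1/\sqrt{s}=O(\sqrt{\ell})$, yielding $O(2^{-j}B^2K\ln T/M_i)$ per group and the geometric series in $j$ gives the claimed $576B^2K\ln T/\Delta_{\min}^i$. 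This subtraction, not Cauchy--Schwarz, is the step that produces both the distributed per-arm charging \emph{and} the within-group cutoff simultaneously.

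A secondary point: your threshold-split derivation of the distribution-independent bound, routed through the distribution-dependent bound at $\Delta_{\min}^i\equiv\Delta$, yields the leading constant $48$ (the paper calls this the weaker bound), not the $12$ stated in \eqref{eq:1-normTPM.ind}; the paper obtains the sharper constant by a direct argument that bypasses the gap threshold and instead maximizes $\sum_{i,j}\int_0^{x_{i,j}}B\sqrt{72\cdot 2^{-j}\ln T/s}\,\mathrm{d}s$ subject to $\sum_{i,j}x_{i,j}\le KT$.
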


For the above theorem, we remark that the regret bounds are tight (up to 
	a $O(\sqrt{\log T})$ factor in the case of distribution-independent bound)
	base on a lower bound result in~\cite{KWAS15}.
More specifically, ~\citet{KWAS15} show that for linear bandits (a special class of
	CMAB-T without probabilistic triggering), the distribution-dependent regret
	is lower bounded by $\Omega(\frac{(m-K)K}{\Delta}\log T)$, and the
	distribution-independent regret is lower bounded by
	$\Omega(\sqrt{mKT})$ when $T\ge m/K$, for some instance where
	$\Delta^i_{\min} = \Delta$ for all $i\in [m]$ and $\Delta^i_{\min} < \infty$.
Comparing with our regret upper bound in the above theorem, 
	(a) for distribution-dependent bound, we have the regret upper bound $O(\frac{(m-K)K}{\Delta}\log T)$ since for that instance $B=1$ and there are $K$ arms with $\Delta^i_{\min} =
	\infty$, so tight with the lower bound in~\cite{KWAS15}; 
	and (b) for distribution-independent bound, we have the regret upper bound
	$O(\sqrt{mKT \log T})$, tight to the lower bound up to a $O(\sqrt{\log T})$ factor,
	same as the upper bound for the linear bandits in~\cite{KWAS15}.
This indicates that parameters $m$ and $K$ appeared in the above regret bounds are
	all needed.
As for parameter $B$, we can view it simply as a scaling parameter.
If we scale the reward of an instance to $B$ times larger than before, certainly, the
	regret is $B$ times larger.
Looking at the distribution-dependent regret bound (Eq.~\eqref{eq:1-normTPM.dep}), 
	$\Delta^i_{\min}$ would also be scaled by a factor of $B$, canceling one $B$ factor
	from $B^2$, and $\Delta_{\max}$ is also scaled by a factor of $B$, 
	and thus the regret bound in Eq.~\eqref{eq:1-normTPM.dep} is also 
	scaled by a factor of $B$.
In the distribution-independent regret bound (Eq.~\eqref{eq:1-normTPM.ind}), the scaling
	of $B$ is more direct.
Therefore, we can see that all parameters $m$, $K$, and $B$ appearing in the
	above regret bounds are needed.
\chgins{Finally, we remark that the TPM Condition~\ref{cond:1-normTPM} can be refined such that $B$ is replaced by arm-dependent $B_i$ 
	that is moved inside the summation, and $B$ in Theorem~\ref{thm:1-normTPM} is replaced with $B_i$ accordingly. 
	\OnlyInFull{See Appendix~\ref{app:refineB} for details}\OnlyInShort{See the supplementary material for details}.}

\subsection{Novel Ideas in the Regret Analysis} \label{sec:proofideas}

Due to the space limit, the full proof of Theorem~\ref{thm:1-normTPM} is moved
to the supplementary material.
Here we briefly explain the novel aspects of our analysis that allow us to achieve new regret bounds
and differentiate us from previous analyses such as the ones in~\cite{CWYW16} and~\cite{kveton2015combinatorial,KWAS15}.

We first give an intuitive explanation on how to incorporate the TPM bounded smoothness condition to 
remove the factor $1/p^*$ in the regret bound.
Consider a simple illustrative example of two actions $S_0$ and $S$, where $S_0$ has a fixed reward $r_0$
as a reference action, 
and $S$ has a stochastic reward depending on the outcomes of its triggered base arms.
Let $\trig{S}$ be the set of arms that can be triggered by $S$.
For $i\in \trig{S}$, suppose $i$ can be triggered by action $S$ with probability $p_i^S$, and its true mean is $\mu_i$
and its empirical mean at the end of round $t$ is $\hat\mu_{i,t}$.
The analysis in~\cite{CWYW16} would need a property that, if for all
$i \in \trig{S}$ $|\hat{\mu}_{i,t} - \mu_i| \le \delta_{i}$
for some properly defined $\delta_{i}$, then $S$ no longer generates regrets.
The analysis would conclude that arm $i$ needs to be triggered $\Theta(\log T / \delta_i^2)$ times for the above 
condition to happen.
Since arm $i$ is only triggered with probability $p_i^S$, it means action $S$ may need to be played 
$\Theta(\log T / (p_i^S \delta_i^2))$ times.
This is the essential reason why the factor $1/p^*$ appears in the regret bound.

%
%
%

Now with the TPM bounded smoothness, we know that the impact of $|\hat{\mu}_{i,t} - \mu_i| \le \delta_{i}$ to
the difference in the expected reward is only $p_i^S \delta_{i}$, or equivalently, we could relax the requirement
to $|\hat{\mu}_{i,t} - \mu_i| \le \delta_{i}/p_i^S$ to achieve the same effect as in the previous analysis.
This translates to the result that action $S$ would generate regret in at most 
$O(\log T / (p_i^S (\delta_i/p_i^S)^2)) = O(p_i^S \log T/ \delta_i^2)$ rounds. 

We then need to handle the case when we have multiple actions that could trigger arm $i$.
The simple addition of $\sum_{S:p_i^S>0} p_i^S \log T/ \delta_i^2$ is not feasible since we may have exponentially or even infinitely
many such actions.
Instead, we introduce the key idea of {\em triggering probability groups}, such that the above actions are divided into
groups by putting their triggering probabilities $p_i^S$ into geometrically separated bins: $(1/2, 1], (1/4, 1/2]
\ldots, (2^{-j}, 2^{-j+1}], \ldots$
The actions in the same group would generate regret in at most $O(2^{-j+1} \log T/ \delta_i^2)$ rounds
with a similar argument, and summing up together,
they could generate regret in at most
$O(\sum_j 2^{-j+1} \log T/ \delta_i^2) = O(\log T/ \delta_i^2)$ rounds.
Therefore, the factor of $1/p_i^S$ or $1/p^*$ is completely removed from the regret bound.

Next, we briefly explain our idea to achieve the improved bound over the linear bandit result in~\cite{KWAS15}.
The key step is to bound regret $\Delta_{S_t}$ generated in round $t$.
By a derivation similar to \cite{KWAS15,CWYW16} together with the 1-norm TPM bounded smoothness condition, we would 
obtain that $\Delta_{S_t} \le B\sum_{i\in \trig{S_t}} p_i^{D, S_t}(\bar\mu_{i, t} - \mu_i)$ with high probability.
The analysis in \cite{KWAS15} would analyze the errors $|\bar\mu_{i, t} - \mu_i|$ by a cascade of infinitely many sub-cases
of whether there are $x_j$ arms with errors larger than $y_j$ with decreasing $y_j$,
but it may still be loose.
Instead we directly work on the above summation.
Naive bounding the about error summation would not give a $O(\log T)$ bound because there could be too many arms with small errors.
Our trick is to use a {\em reverse amortization}:  we cumulate small errors on many sufficiently sampled arms
and treat them as errors of insufficiently sample arms, such that an arm sampled $O(\log T)$ times would not
contribute toward the regret.
This trick tightens our analysis and leads to significantly improved constant factors.

\OnlyInFull{The reverse amortization trick can be seen in Appendix~\ref{app:notriggering}
	Eq.\eqref{eq:nontriggering.transform} and the derivation that follows for the no triggered arm case, 
	as well as in Appendix~\ref{app:1normTPM}, Eq.~\eqref{eq:TPMkappa.transform} in the proof of Lemma~\ref{lem:TPMkappa}
	for the 1-norm case.}

%

	
\subsection{Applications to Influence Maximization and Combinatorial Cascading Bandits}
\label{sec:app}

The following two lemmas show that both the cascading bandits and the influence maximization bandit
	satisfy the TPM condition.

\begin{mylem}
	\label{lem:cascading}
	For both disjunctive and conjunctive cascading bandit problem instances,
	1-norm TPM bounded smoothness (Condition~\ref{cond:1-normTPM}) holds with bounded smoothness
	constant 
	$B = 1$.
\end{mylem}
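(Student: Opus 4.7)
The plan is to write out $r_S(\vmu)$ explicitly as a product in both the disjunctive and conjunctive cases, then bound the difference $r_S(\vmu)-r_S(\vmu')$ by a judiciously chosen telescoping sum. Fix an action $S=(a_1,\ldots,a_{|S|})$. In the disjunctive case, $r_S(\vmu)=1-\prod_{j=1}^{|S|}(1-\mu_{a_j})$, and arm $a_j$ is triggered exactly when $X_{a_1}=\cdots=X_{a_{j-1}}=0$, so $p_{a_j}^{D,S}=\prod_{i<j}(1-\mu_{a_i})$. In the conjunctive case, $r_S(\vmu)=\prod_{j=1}^{|S|}\mu_{a_j}$ and $p_{a_j}^{D,S}=\prod_{i<j}\mu_{a_i}$. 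Arms outside $S$ have triggering probability zero and can be ignored.

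The key idea is to telescope from $\vmu$ to $\vmu'$ replacing coordinates \emph{from right to left}, rather than left to right, so that the prefix factor in each term matches $p_{a_j}^{D,S}$ (which is defined via $\vmu$, not $\vmu'$). Concretely, in the disjunctive case I would write
\begin{equation*}
\prod_{j}(1-\mu'_{a_j}) - \prod_{j}(1-\mu_{a_j}) = \sum_{j=1}^{|S|}\Bigl[\prod_{i<j}(1-\mu_{a_i})\Bigr](\mu_{a_j}-\mu'_{a_j})\Bigl[\prod_{i>j}(1-\mu'_{a_i})\Bigr],
\end{equation*}
recognize the leading bracket as $p_{a_j}^{D,S}$, observe that the trailing bracket lies in $[0,1]$, and apply the triangle inequality to conclude $|r_S(\vmu)-r_S(\vmu')|\le\sum_{j} p_{a_j}^{D,S}|\mu_{a_j}-\mu'_{a_j}|$. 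An analogous telescoping with $\mu$ replaced by $1-\mu$ (or symmetrically, with $\mu_{a_j}$ factors in place of $1-\mu_{a_j}$ factors) handles the conjunctive case, yielding the same bound with $B=1$.

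The only subtlety — and hence the "main obstacle" — is that the telescoping must be set up so that each summand's leading coefficient is the triggering probability computed under $\vmu$ (matching the definition of $p_i^{D,S}$), not under $\vmu'$; the naive left-to-right expansion gives the wrong side. Once the correct direction is chosen, every remaining factor is an unsigned product of quantities in $[0,1]$ and therefore bounded by $1$, so the constant drops out cleanly. This argument is identical in structure for the two variants and extends the sum over $j\in\{1,\ldots,|S|\}$ to a sum over $i\in[m]$ at no cost, since the additional terms vanish.
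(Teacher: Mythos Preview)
Your proposal is correct and is essentially the same argument as the paper's: the paper also telescopes by replacing coordinates one at a time from the last position to the first (what it calls ``bottom-up modification''), so that the surviving prefix factor is exactly $p_{a_j}^{D,S}$ under $\vmu$ and the remaining suffix lies in $[0,1]$. The only cosmetic difference is that the paper writes the intermediate hybrid vectors $\vmu^{(j)}$ explicitly and treats the conjunctive case first, then reduces the disjunctive case via the substitution $\lambda_i=1-\mu_i$.
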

\begin{mylem}
	\label{lem:IMbandit}
	For the influence maximization bandit problem instances,
	1-norm TPM bounded smoothness (Condition~\ref{cond:1-normTPM}) holds with bounded smoothness constant $B = \tilde{C}$, where $\tilde{C}$ is the largest number of nodes any node can reach in
		the directed graph $G=(V,E)$.
\end{mylem}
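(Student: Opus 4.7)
The plan is to establish the 1-norm TPM inequality via a direct coupling argument on the live-edge representation of the Independent Cascade model. Under this model $r_S(\vmu) = \sigma(S;\vmu) = \E_{L\sim\vmu}\bigl[|\text{reach}(S,L)|\bigr]$, where $L$ is the random live-edge subgraph in which each edge $e$ is independently kept with probability $\mu_e$ and $\text{reach}(S,L)$ denotes the nodes reachable from $S$ in $L$. Because an edge $e=(u,v)$ is triggered by action $S$ exactly when its tail $u$ lies in $\text{reach}(S,L)$, we have $p_e^{D,S} = \Pr_{L\sim D}[u\in \text{reach}(S,L)]$, so the target inequality reads $|\sigma(S;\vmu) - \sigma(S;\vmu')| \le \tilde{C}\sum_{e} p_e^{D,S}|\mu_e - \mu'_e|$.

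First I would couple the two live-edge samples through a shared family of uniforms $U_e\sim \text{Unif}[0,1]$, setting $Y_e=\I[U_e\le \mu_e]$ and $Y'_e=\I[U_e\le \mu'_e]$, so that $\{Y_e\ne Y'_e\}$ is determined by $U_e$ alone and has probability $|\mu_e-\mu'_e|$. Writing $L=\{e:Y_e=1\}$ and $L'=\{e:Y'_e=1\}$, the elementary bound $\bigl||A|-|B|\bigr|\le |A\triangle B|$ gives
\[
|\sigma(S;\vmu)-\sigma(S;\vmu')|\le \E\bigl[|\text{reach}(S,L)\triangle \text{reach}(S,L')|\bigr].
\]

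The core step is a charging argument for the symmetric difference. For each realization and each $w\in \text{reach}(S,L)\setminus \text{reach}(S,L')$, pick a canonical simple $L$-path $P_w$ from $S$ to $w$; because $w$ is unreachable in $L'$, $P_w$ must contain an edge with $Y_e=1$ and $Y'_e=0$, and I charge $w$ to the first such edge $e_w=(u_w,v_w)$ along $P_w$. Every earlier edge of $P_w$ has $Y=Y'=1$, so the prefix certifies $u_w\in \text{reach}(S,L\cap L')$, while the $L$-suffix after $e_w$ places $w$ in $\text{reach}(\{v_w\},L)\subseteq \text{reach}(\{v_w\},G)$, so at most $\tilde{C}$ nodes can be charged to any fixed edge. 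A mirror construction handles $w\in \text{reach}(S,L')\setminus \text{reach}(S,L)$, and since a given edge $e$ satisfies at most one of $Y_e>Y'_e$ or $Y_e<Y'_e$ per realization, the two directions never charge the same edge simultaneously. Thus
\[
|\text{reach}(S,L)\triangle \text{reach}(S,L')|\le \tilde{C}\sum_{e=(u,v)}\I[Y_e\ne Y'_e]\,\I[u\in \text{reach}(S,L\cap L')].
\]

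Taking expectations, the critical observation is that every simple path from $S$ to $u$ terminates at $u$ and so avoids the out-edge $e=(u,v)$; consequently $\{u\in \text{reach}(S,L\cap L')\}$ is a function of $\{U_{e'}:e'\ne e\}$, while $\{Y_e\ne Y'_e\}$ is a function of $U_e$. Independence factors the joint probability into $|\mu_e-\mu'_e|\cdot \Pr[u\in \text{reach}(S,L\cap L')]$, and using $L\cap L'\subseteq L$ bounds the second factor by $\Pr[u\in \text{reach}(S,L)]=p_e^{D,S}$. Summing over $e$ yields Condition~\ref{cond:1-normTPM} with $B=\tilde{C}$. The main obstacle I expect is making the charging airtight --- verifying that the ``first differing edge'' choice simultaneously (a) places $u_w$ in $\text{reach}(S,L\cap L')$, which is precisely what licenses $p_e^{D,S}$ rather than a hybrid probability, and (b) prevents any edge from being double-counted across the two asymmetric directions. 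The remainder is a clean independence calculation that the shared-uniform coupling makes transparent.
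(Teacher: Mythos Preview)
Your coupling argument is correct and considerably more direct than the paper's route. The paper first reduces to a per-target inequality $|r_S^v(\vmu)-r_S^v(\vmu')|\le \sum_{e}p_e^{\vmu,S}|\mu_e-\mu'_e|$ for each node $v$, further reduces to $\vmu\le\vmu'$ and a single seed, then introduces a total order on paths from $s$ to $v$, decomposes the activation probability of $v$ as a sum over ``smallest live path'' events, and finally performs a \emph{bottom-up modification} along a search tree of all simple $s$--$v$ paths, using a ``bypass'' notion to factor each path's probability into a live part and a no-smaller-path part. Your approach bypasses all of this machinery: the shared-uniform coupling, the first-differing-edge charging, and the observation that reachability of $u$ is measurable with respect to $\{U_{e'}:e'\ne e\}$ (since no simple $S$--$u$ path uses an out-edge of $u$) together yield the bound in a few lines. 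Both arguments in fact deliver the edge-wise refinement $B_e=|\text{reach}(v,G)|$ for $e=(u,v)$, since your charge count at $e$ is bounded by $|\text{reach}(v,L)|$ or $|\text{reach}(v,L')|$ rather than just $\tilde C$. The paper's decomposition does isolate the per-target inequality as a standalone statement, which is slightly stronger structurally, but for the lemma as stated your argument is both sufficient and cleaner.
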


The proof of Lemma~\ref{lem:cascading} involves a technique called {\em bottom-up modification}.
Each action in cascading bandits can be viewed as a chain from top to bottom.
When changing the means of  arms below, the triggering probability of arms above is not changed.
Thus, if we change $\vmu$ to $\vmu'$ backwards,
the triggering probability of each arm is unaffected before its expectation is changed,
	and when changing the mean of an arm $i$,
the expected reward of the action is at most changed by $p_i^{D, S} |\mu'_i-\mu_i|$.

The proof of Lemma~\ref{lem:IMbandit} is more complex, since
	the bottom-up modification does not work directly on graphs with cycles.
To circumvent this problem, we develop an {\em influence tree decomposition} technique as follows.
First, we order all influence paths from the seed set $S$ to a target $v$.
Second, each edge is independently sampled based on its edge probability to form a random {\em live-edge
	graph}.
Third, we divide the reward portion of activating $v$ among all paths from $S$ to $v$: for each live-edge
	graph $L$ in which $v$ is reachable from $S$, assign the probability of
	$L$ to the first path from $S$ to $v$ in $L$ according to the path total order.
Finally, we compose all the paths from $S$ to $v$ into a tree with $S$ as the root and copies of $v$
	as the leaves, so that we can do bottom-up modification
	on this tree and properly trace the reward changes based on the reward division we made among the paths.
%
%
%
%

\vspace{-2mm}
	\subsection{Discussions and Comparisons}
	\label{sec:discussion}
\vspace{-1mm}
	We now discuss the implications of Theorem~\ref{thm:1-normTPM} together with
	Lemmas~\ref{lem:cascading} and~\ref{lem:IMbandit} by comparing them with several existing results.
	
	\paragraph{Comparison with \cite{CWYW16} and CMAB with $\infty$-norm bounded smoothness conditions.}
	Our work is a direct adaption of the study in \cite{CWYW16}.
	Comparing with \cite{CWYW16}, we see that 
		the regret bounds in Theorem~\ref{thm:1-normTPM} 
		are not dependent on the inverse of triggering probabilities, which is the main issue
		in \cite{CWYW16}. 
	When applied to	influence maximization bandit, our result is strictly stronger than that of \cite{CWYW16} in two aspects:
		(a) we remove the factor of $1/p^*$ by using the TPM condition; 
		(b) we reduce a factor of $|E|$ and $\sqrt{|E|}$ in the dominant terms of distribution-dependent and -independent
			bounds, respectively, due to our use of 1-norm instead of $\infty$-norm conditions used
			in~\citet{CWYW16}.
	In the supplementary material, we further provide the corresponding $\infty$-norm TPM bounded 
		smoothness conditions and the regret bound results, since 
		in general the two sets of results do not imply each other.

%
	
	%
	
	\chgins{
	\paragraph{Comparison with \cite{Wen2016} on influence maximization bandits. }
	Conceptually, our work deals with the general CMAB-T framework with influence maximization and combinatorial cascading bandits
		as applications, while \citet{Wen2016} only work on influence maximization bandit.
	\citet{Wen2016} further study a generalization of linear transformation of edge probabilities, which is orthogonal to our
		current study, and could be potentially incorporated into the general CMAB-T framework.
	Technically, both studies eliminate the exponential factor $1/p^*$ in the regret bound.
	Comparing the rest terms in the regret bounds, our regret bound depends on a topology
		dependent term $\tilde{C}$ (Lemma~\ref{lem:IMbandit}), while their bound depends on a complicated term $C_*$, which is related to both topology and edge probabilities.
	Although in general it is hard to compare the regret bounds, for the several graph families for which \citet{Wen2016} provide concrete 
		topology-dependent regret bounds, our bounds are always better by a factor from 
		$O(\sqrt{k})$ to $O(|V|)$, where $k$ is the number of seeds selected in each round and $V$ is the node set
		in the graph.
	This indicates that, in terms of characterizing the topology effect on the regret bound, our simple complexity term $\tilde{C}$ is more effective
		than their complicated term $C_*$.
	\OnlyInFull{See Appendix~\ref{app:compareWen} for the detailed table of comparison.}\OnlyInShort{See the supplementary material for
		the detailed table of comparison.}
		}

	\OnlyInShort{%
		 \vspace{-0.5mm}%
	}
	\paragraph{Comparison with \cite{kveton2015combinatorial} on combinatorial cascading bandits }
	By Lemma~\ref{lem:cascading}, we can apply Theorem~\ref{thm:1-normTPM} to combinatorial 
		conjunctive and disjunctive cascading bandits with
		bounded smoothness constant $B=1$, 
		achieving $O(\sum\frac{1}{\Delta^i_{\min}}K \log T)$ distribution-dependent, and
		$O(\sqrt{mKT\log T})$ distribution-independent regret.
	In contrast, besides having exactly these terms, 
		the results in \cite{kveton2015combinatorial} have an extra factor of $1/f^*$, where 
		$f^* = \prod_{i \in S^*} p(i)$ for conjunctive cascades, and $f^* = \prod_{i \in S^*} (1- p(i))$ for disjunctive 
		cascades, with $S^*$ being the optimal solution and $p(i)$ being the probability of success for item (arm) $i$.
	For conjunctive cascades, $f^*$ could be reasonably close to $1$ in practice as argued in \cite{kveton2015combinatorial}, but
		for disjunctive cascades, $f^*$ could be exponentially small since items in optimal solutions typically have large $p(i)$ values.
	Therefore, our result completely removes the dependency on $1/f^*$ and is better than their result.
	Moreover, we also have much smaller constant factors owing to the new reverse amortization 
		method described in Section~\ref{sec:proofideas}.
	
	\OnlyInShort{%
		\vspace{-0.5mm}%
	}
	\paragraph{Comparison with \cite{KWAS15} on linear bandits. }
	When there is no probabilistically triggered arms (i.e. $p^*=1$), Theorem~\ref{thm:1-normTPM} would have
	tighter bounds since some analysis dealing with probabilistic triggering is not needed.
	In particular, in Eq.~\eqref{eq:1-normTPM.dep} the leading constant $624$ would be reduced to $48$, the
	$\lceil\log_2 x \rceil_0$ term is gone, and $6Bm$ becomes $2Bm$; in Eq.~\eqref{eq:1-normTPM.ind} the leading constant
	$50$ is reduced to $14$, and the other changes are the same as above (see the supplementary material).
	The result itself is also a new contribution, since it
		generalizes the linear bandit of \cite{KWAS15} to general 1-norm conditions
	with matching regret bounds, while significantly reducing the leading constants (their constants are $534$ and $47$ for distribution-dependent
	and independent bounds, respectively).
	This improvement comes from the new reversed amortization method described in Section~\ref{sec:proofideas}.

\vspace{-2mm}
	\section{Lower Bound of the General CMAB-T Model}
	\label{sec:lowerbound}
	
\vspace{-1mm}
	In this section, we show that there exists some CMAB-T problem instance such that the regret bound
		in \cite{CWYW16} is tight, i.e. the factor $1/p^*$ in the distribution-dependent bound and
		$\sqrt{1/p^*}$ in the distribution-independent bound are unavoidable, where $p^*$ is the minimum positive probability
		that any base arm $i$ is triggered by any action $S$.
	It also implies that the TPM bounded smoothness may not be applied to all CMAB-T instances.
	
	For our purpose, we only need a simplified version of the bounded smoothness condition of \cite{CWYW16} as below:
	There exists a bounded smoothness constant $B$ such that, for every action $S$ and every pair of mean outcome vectors $\vmu$ and $\vmu'$, we have $|r_S(\vmu) - r_S(\vmu')| \le B \max_{i\in \trig{S}} |\mu_i - \mu'_i|$, where $\trig{S}$ is the set 
		of arms that could possibly be triggered by $S$.

	We prove the lower bounds using the following CMAB-T problem instance $([m], \calS, \calD, \Dtrig, R)$.
	For each base arm $i \in [m]$, we define an action $S_i$, with the set of actions 
		$\calS = \{S_1, \ldots, S_m \}$.
	The family of distributions $\calD$ consists of distributions generated by
		every $\vmu\in [0, 1]^m$ such that the arms are independent Bernoulli variables.
	When playing action $S_i$ in round $t$, with a fixed probability $p$,  arm $i$ is triggered and its outcome
	$X_i^{(t)}$ is observed, and the reward of playing $S_i$ is $p^{-1} X_i^{(t)}$;
		otherwise with probability $1-p$ no arm is triggered, no feedback is observed and the reward is $0$.
	Following the CMAB-T framework, this means that $\Dtrig(S_i, X)$, as a distribution on the subsets of $[m]$,
		is either $\{i\}$ with probability $p$ or $\emptyset$ with probability $1-p$,
		and the reward $R(S_i, X, \tau) = p^{-1} X_i \cdot \I\{\tau = \{i\} \}$.
	The expected reward $r_{S_i}(\vmu) = \mu_i$.
	So this instance satisfies the above bounded smoothness with constant $B=1$.
	We denote the above instance as FTP($p$), standing for fixed triggering probability instance.
	This instance is similar with position-based model \citep{Lagree2016}
		with only one position,
		while the feedback is different.
	For the FTP($p$) instance, we have $p^*=p$ and $r_{S_i}(\vmu) = p \cdot p^{-1} \mu_i = \mu_i$.
	Then applying the result in \cite{CWYW16}, we have distributed-dependent upper bound
		$O(\sum_i \frac{1}{p\Delta^i_{\min}}\log T)$
	and distribution-independent upper bound $O(\sqrt{p^{-1} m T \log T})$.

	We first provide the distribution-independent lower bound result.	
	\begin{mythm} \label{thm:indlowerbound}
		Let $p$ be a real number with  $0< p < 1$.
		Then for any CMAB-T algorithm $A$, if $T \ge 6p^{-1}$,
			there exists a CMAB-T environment instance $D$ with mean $\vmu$ such that on instance FTP($p$),
		\begin{equation*}
		Reg_{\vmu}^A(T) \ge \frac{1}{170}\sqrt{\frac{m T}{p}}.
		\end{equation*}
	\end{mythm}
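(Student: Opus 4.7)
My plan is to adapt the standard information-theoretic lower bound argument for multi-armed bandits (in the style of Auer, Cesa-Bianchi, Freund and Schapire) to the FTP$(p)$ instance, extracting the extra factor of $1/p$ from the probabilistic triggering. I would set up a family of $m+1$ environments: a reference environment $D_0$ in which every arm is $\mathrm{Bern}(1/2)$, and, for each $j\in[m]$, an alternate environment $D_j$ in which arm $j$ is $\mathrm{Bern}(1/2+\Delta)$ while every other arm remains $\mathrm{Bern}(1/2)$, where $\Delta\in(0,1/4]$ is a parameter to be tuned at the end. Under $D_j$ the unique optimal action is $S_j$ with expected reward $1/2+\Delta$, so every round in which some $S_i$ with $i\ne j$ is played contributes exactly $\Delta$ to the regret.

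The key step is to upper-bound the KL divergence between the distributions $P^{A,D_0}$ and $P^{A,D_j}$ that algorithm $A$ induces over the $T$-round interaction transcript. A pull of $S_i$ with $i\ne j$ has identical observation law under $D_0$ and $D_j$ and contributes nothing; a pull of $S_j$ produces the law $(p/2,\,p/2,\,1-p)$ over $\{0,1,\mathrm{null}\}$ under $D_0$ versus $(p(1/2-\Delta),\,p(1/2+\Delta),\,1-p)$ under $D_j$, whose KL divergence factors exactly as $p\cdot\mathrm{KL}(\mathrm{Bern}(1/2)\,\|\,\mathrm{Bern}(1/2+\Delta))\le C p\Delta^2$ for an explicit constant $C$ (using the $\chi^2$ upper bound on Bernoulli KL valid over $\Delta\le 1/4$). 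Applying the chain rule for KL to the transcript yields $\mathrm{KL}(P^{A,D_0}\,\|\,P^{A,D_j})\le C p\Delta^2\cdot\E_{D_0}[N_j]$, where $N_j$ is the number of times $S_j$ is played. The factor of $p$ here is precisely what will convert the usual $\sqrt{mT}$ lower bound into $\sqrt{mT/p}$.

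Pinsker's inequality then gives $d_{TV}(P^{A,D_0},P^{A,D_j})\le \Delta\sqrt{(C/2) p\,\E_{D_0}[N_j]}$, and since $N_j\in[0,T]$, $|\E_{D_j}[N_j]-\E_{D_0}[N_j]|\le T\Delta\sqrt{(C/2) p\,\E_{D_0}[N_j]}$. Because $\sum_{j=1}^m\E_{D_0}[N_j]\le T$, pigeonholing produces some $j^*$ with $\E_{D_0}[N_{j^*}]\le T/m$, so $\E_{D_{j^*}}[N_{j^*}]\le T/m + T\Delta\sqrt{(C/2) pT/m}$. Choosing $\Delta=c\sqrt{m/(pT)}$ for a sufficiently small absolute constant $c$ (and using $T\ge 6p^{-1}$ to guarantee $\Delta\le 1/4$ so that the Bernoulli KL bound applies) makes this quantity at most $T/2$, and the regret under $D_{j^*}$ is then at least $\Delta\cdot(T-\E_{D_{j^*}}[N_{j^*}])\ge (c/2)\sqrt{mT/p}$, matching the required form of the bound.

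The main obstacle is purely the bookkeeping needed to produce the specific constant $1/170$: one must pin down $C$ in the Bernoulli KL bound over the relevant range of $\Delta$, calibrate $c$ so that both the $T/m$ baseline and the Pinsker contribution stay comfortably below $T/2$, and handle small $m$ (for $m=1$ the statement is vacuous; for $m=2$ a direct Le Cam two-point comparison between $D_1$ and $D_2$ gives the same rate without relying on averaging). None of these steps is conceptually difficult, and the generous constant $1/170$ in the theorem statement leaves ample slack to absorb the losses from each of them.
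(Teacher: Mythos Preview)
Your argument is correct and takes a genuinely different route from the paper. The paper does not rerun the information-theoretic argument on FTP$(p)$; instead it builds a black-box reduction (Algorithm~\ref{alg:reduction}) that wraps the CMAB-T algorithm as a classical MAB algorithm: it pre-samples the triggering indicators $\gamma_t\sim\mathrm{Bern}(p)$, advances the MAB clock only when $\gamma_t=1$, uses a Chernoff bound (this is where the hypothesis $T\ge 6p^{-1}$ enters) to get $T_{\mab}\ge\tfrac{1}{2}pT$ with probability at least $\tfrac{1}{2}$, invokes the standard MAB lower bound $\tfrac{1}{60}\sqrt{mT_{\mab}}$ as a black box, and rescales by the reward factor $p^{-1}$ to obtain $\tfrac{1}{2}p^{-1}\cdot\tfrac{1}{60}\sqrt{\tfrac{1}{2}mpT}>\tfrac{1}{170}\sqrt{mT/p}$. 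Both proofs exploit the same phenomenon---the per-round information gain is scaled by $p$---but you locate it directly in the KL of the three-outcome observation, whereas the paper locates it in the random time change $T_{\mab}\approx pT$. Your direct approach is cleaner in that it avoids the coupling construction and the concentration step; the paper's reduction is more modular, since any classical MAB lower bound transfers automatically (and indeed this is also how they obtain the distribution-dependent Theorem~\ref{thm:deplowerbound}). One nit: the $m=1$ case is not vacuous but actually violates the stated bound (there is a single action, so regret is identically zero); both arguments tacitly require $m\ge 2$.
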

	
	The proof of the above and the next theorem are all based on the results for the classical MAB problems.
	Comparing to the upper bound $O(\sqrt{p^{-1} m T \log T})$.
		obtained from \cite{CWYW16},
	Theorem~\ref{thm:indlowerbound} implies that the regret upper bound of CUCB in \cite{CWYW16} 
		is tight up to a $O(\sqrt{\log T})$ factor.
	This means that the $1/p^*$ factor in the regret bound of \cite{CWYW16} cannot be avoided in the
		general class of CMAB-T problems.
		

	
	
	
	
	\newcommand{\none}{\perp}
	
	Next we give the distribution-dependent lower bound.
	For a learning algorithm, we say that it is {\em consistent} if, for every $\vmu$, every non-optimal arm is played $o(T^a)$ times in expectation, for any real number $a>0$.
	Then we have the following distribution-dependent lower bound.
	
	\begin{mythm}
		\label{thm:deplowerbound}
		For any consistent algorithm $A$ running on instance FTP($p$)
		and $\mu_i < 1$ for every arm $i$, we have
		$$\liminf_{T\to +\infty} \frac{Reg^A_{\vmu}(T)}{\ln T} \ge \sum_{i:\mu_i<\mu^*} \frac{p^{-1}\Delta_i}{\kl(\mu_i, \mu^*)},$$
		where $\mu^*=\max_i \mu_i$, $\Delta_i = \mu^* - \mu_i$, 
			and $\kl(\cdot, \cdot)$ is the Kullback-Leibler divergence function.
	\end{mythm}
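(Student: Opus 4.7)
The plan is to reduce to the classical Lai--Robbins lower bound for Bernoulli bandits, with one crucial modification: because action $S_i$ triggers arm $i$ only with probability $p$, each play of $S_i$ leaks information about $\mu_i$ at an effective rate $p$ instead of $1$. I would set up the proof via a change-of-measure argument in the Garivier--Capp\'e style. Let $\mu^*=\max_i\mu_i$, fix a suboptimal arm $i$ (so $\mu_i<\mu^*$), and let $N_i(T)$ denote the number of rounds in which the algorithm plays $S_i$. Because $r_{S_i}(\vmu)=\mu_i$, the regret decomposes as $Reg^A_\vmu(T)=\sum_{i:\mu_i<\mu^*}\Delta_i\,\E_\vmu[N_i(T)]$, so it suffices to prove that $\liminf_T \E_\vmu[N_i(T)]/\ln T \ge 1/(p\cdot\kl(\mu_i,\mu^*))$ for every such $i$.

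Next I would construct an alternative environment $\vmu'$ agreeing with $\vmu$ on all arms except that $\mu'_i=\mu^*+\epsilon$ (for arbitrarily small $\epsilon>0$, which will vanish at the end using continuity of $\kl$ and the assumption $\mu_i<1$ which keeps $\kl(\mu_i,\cdot)$ finite in a neighborhood). Under $\vmu'$, arm $i$ is the unique best, and by consistency the algorithm plays every non-optimal arm only $o(T^a)$ times in expectation; hence any arm $j\ne i$ with $\mu_j<\mu^*+\epsilon$ is played $o(T^a)$ times under $\vmu'$. The key technical computation is the KL divergence between the laws $\Prp_\vmu$ and $\Prp_{\vmu'}$ of the entire history $H_T=((S_t,\tau_t,X_{\tau_t}^{(t)}))_{t\le T}$. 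Since the two environments differ only in the distribution of $X_i$, and $X_i^{(t)}$ is observed in round $t$ exactly when $S_t=S_i$ and $\tau_t=\{i\}$ (an event of probability $p$ given $S_t=S_i$), a per-round conditional KL computation gives
\[
\mathrm{KL}(\Prp_\vmu(H_T)\,\|\,\Prp_{\vmu'}(H_T)) \;=\; p\cdot \E_\vmu[N_i(T)]\cdot \kl(\mu_i,\mu'_i).
\]
All other triggering events and outcomes carry no information discriminating the two environments.

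I would then invoke the standard information-theoretic inequality (the data processing / Bretagnolle--Huber bound, or Lemma~1 in Kaufmann--Capp\'e--Garivier) which states that for any event $E\in\calF_T$,
\[
\mathrm{KL}(\Prp_\vmu(H_T)\|\Prp_{\vmu'}(H_T)) \;\ge\; \kl(\Prp_\vmu(E),\Prp_{\vmu'}(E)).
\]
Choosing $E=\{N_i(T)\le \frac{(1-\delta)\ln T}{p\kl(\mu_i,\mu'_i)}\}$, consistency of $A$ forces $\Prp_\vmu(E)\to 1$ while $\Prp_{\vmu'}(E)\to 0$ (since arm $i$ must be played $T-o(T^a)$ times under $\vmu'$), so $\kl(\Prp_\vmu(E),\Prp_{\vmu'}(E))\ge (1-o(1))\ln T$. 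Combining with the KL identity above yields $\E_\vmu[N_i(T)]\ge (1-o(1))\ln T/(p\cdot\kl(\mu_i,\mu'_i))$; letting $\epsilon\downarrow 0$ and using continuity of $\kl(\mu_i,\cdot)$ at $\mu^*$ (valid because $\mu_i<1$ keeps us away from the singularity) finishes the bound on $\E_\vmu[N_i(T)]$. Summing $\Delta_i\cdot\E_\vmu[N_i(T)]$ over all suboptimal $i$ gives the claimed regret lower bound.

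The main obstacle is the KL identity: one has to be careful that the triggering coin $\tau_t$ is governed by the same distribution under $\vmu$ and $\vmu'$ (it is, since $\Dtrig(S_i,X)$ is a fixed Bernoulli($p$) independent of $X$ in the FTP($p$) construction), so the only divergence contribution comes from rounds where $S_t=S_i$ and $\tau_t=\{i\}$, yielding the clean factor of $p$. Once this decomposition is set up cleanly, the rest is a routine adaptation of the Lai--Robbins/Kaufmann--Capp\'e--Garivier argument, and the $p^{-1}$ factor emerges exactly where we need it in the denominator.
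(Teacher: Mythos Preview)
Your overall strategy is sound and is a genuinely different route from the paper. The paper recasts the FTP($p$) instance as a classical MAB with three-outcome arms $\{0,1,\perp\}$, observes that the per-pull KL between two such arms equals $p\cdot\kl(\mu_i,\mu_j)$, and then runs the original Lai--Robbins argument (empirical log-likelihood ratio, two events, maximal SLLN) verbatim on that three-outcome alphabet. You instead compute the history-level KL directly and invoke the Kaufmann--Capp\'e--Garivier data-processing inequality. Both approaches land on the same crucial identity, namely that the information leaked by one play of $S_i$ is $p\cdot\kl(\mu_i,\mu'_i)$ rather than $\kl(\mu_i,\mu'_i)$; your derivation of this via the per-round conditional KL (triggering coin has identical law, only the observed Bernoulli contributes) is correct and arguably cleaner than the three-symbol detour.

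There is, however, a genuine gap in your choice of the event $E$. You take
\[
E=\Bigl\{N_i(T)\le \tfrac{(1-\delta)\ln T}{p\,\kl(\mu_i,\mu'_i)}\Bigr\}
\]
and assert that consistency forces $\Pr_{\vmu}(E)\to 1$. This is unjustified: consistency only gives $\E_{\vmu}[N_i(T)]=o(T^a)$ for every $a>0$, which is perfectly compatible with $\E_{\vmu}[N_i(T)]=\Theta((\ln T)^2)$, say, and in particular says nothing about whether $N_i(T)\le c\ln T$ with high probability. Indeed the conclusion you are trying to prove is exactly that $\E_{\vmu}[N_i(T)]\gtrsim \ln T$, so you cannot assume the opposite at this step. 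Without control of $\Pr_{\vmu}(E)$, the bound $\kl(\Pr_{\vmu}(E),\Pr_{\vmu'}(E))\ge(1-o(1))\ln T$ does not follow.

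The fix is standard: take instead $E=\{N_i(T)\le T/2\}$. Under $\vmu$, arm $i$ is suboptimal, so Markov's inequality plus consistency gives $\Pr_{\vmu}(N_i(T)>T/2)\le 2\E_{\vmu}[N_i(T)]/T=o(T^{a-1})$, hence $\Pr_{\vmu}(E)\to 1$. Under $\vmu'$, arm $i$ is uniquely optimal, so $\Pr_{\vmu'}(E)\le \Pr_{\vmu'}\bigl(\sum_{j\ne i}N_j(T)\ge T/2\bigr)\le 2\E_{\vmu'}[\sum_{j\ne i}N_j(T)]/T=o(T^{a-1})$ for every $a>0$. Then $\kl(\Pr_{\vmu}(E),\Pr_{\vmu'}(E))\ge (1-o(1))(1-a)\ln T$ for every $a>0$, which combined with your KL identity yields $p\,\E_{\vmu}[N_i(T)]\,\kl(\mu_i,\mu'_i)\ge(1-o(1))\ln T$, and the rest of your argument (let $\epsilon\downarrow 0$, sum over $i$) goes through unchanged.
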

%
	Again we see that the distribution-dependent upper bound obtained from \cite{CWYW16} asymptotically match the lower bound above.
	Finally, we remark that even if we rescale the reward from $[1, 1/p]$ 
	back to $[0,1]$, the corresponding scaling factor $B$ would become $p$, and thus
	we would still obtain the conclusion that the regret bounds
	in \cite{CWYW16} is tight (up to a $O(\sqrt{\log T})$ factor), and thus
	$1/p^*$ is in general needed in those bounds.
	

	\section{Conclusion and Future Work}
	In this paper, we propose the TPM bounded smoothness condition, which conveys
		the intuition that an arm difficult to trigger is also less important in
		determining the optimal solution. 
	We show that this condition
		is essential to guarantee low regret, and prove that important applications,
		such as influence maximization bandits and combinatorial cascading bandits
		all satisfy this condition.
		
	There are several directions one may further pursue.
	One is to improve the regret bound for some specific problems.
	For example, for the influence maximization bandit, can we give a better algorithm
		or analysis to achieve a better regret bound than the one provided by the 
		general TPM condition?
	Another direction is to look into other applications with probabilistically triggered
		arms that may not satisfy the TPM condition or need other conditions to 
		guarantee low regret.
	Combining the current CMAB-T framework with the linear generalization 
		as in~\cite{Wen2016} to achieve scalable learning result is also an interesting
		direction.
	
	\OnlyInShort{\clearpage}
	
	\section*{Acknowledgment}
	Wei Chen is partially supported by 
	  the National Natural Science Foundation of China (Grant No. 61433014).

    \def\bibfont{\small}
	\bibliography{bibdatabase}
    \bibliographystyle{plainnat}

	\ifthenelse{\equal{\compileincludeappendix}{true}}{
	\onecolumn
	\appendix

\section*{Supplementary Materials}

\wei{I did not see the use of symbols $p_i^{\vmu,*}$ and $p^{\vmu,*}$, so removed them from the main text.
	If they are used in the appendix, please include their definitions in the appendix.}

\section{Model Discussions}

\qinshi{Here is a problem while formalization. The set of arms that are triggered should be known by the player.
	When the triggering probability depends on the arms, this set may reveal information about the expectations of arms.
	We should make some restriction that the set itself does not reveal more information than the feedback on the arms.
	But currently I feel my argument is too complicated.}
\wei{We will address the above issue, that is, the general distribution $\tau(S,X)$ may allow extra information
	leading about the base arms, here.}

\subsection{Comparison with the framework of~\cite{CWYW16}}

The CMAB-T framework described above essentially follows the framework of~\cite{CWYW16}, but with the following
noticeable differences.
First, we refer to $S$ as an abstract action from an action space $\calS$, while in~\cite{CWYW16}, $S$ is referred
to as a super arm, which is a subset of base arms $[m]$.
In the case of CMAB without probabilistically triggered arms, we can simply let every super arm $S$ be
an action, and $\tau(S,X) = S$, meaning that playing super arm $S$ deterministically triggers all and only
base arms in $S\subseteq [m]$.
Second, we explicitly allows action space to be infinite or even continuous space, while in~\cite{CWYW16}, the action
space is the subsets of base arms and thus is finite.
We will see later that the infinite action space does not make essential difference in the analysis.
Third, for probabilistically triggered arms, we explicitly use $\tau(S,X)$ to model them, 
and allows $\tau(S,X)$ to have additional randomness besides the randomness of $X$.
In~\cite{CWYW16}, probabilistic triggering is explained as further base arms being triggered based on the outcomes
of previously triggered base arms, and to model certain triggering structure or
additional randomness in triggering an arm, dummy base arms need to
be added.
However, this may require introducing a large number of dummy base arms.
For example, for the cascading bandits, to specify the order of the cascade sequence, we need to add 
dummy base arms corresponding to every possible order of the base arms.
Moreover, $\tau(S,X)$ cleanly separates the randomness known to the player from the unknown randomness from the environment
outcome.
For example, in the discount-based continuous influence maximization~\cite{YangMPH16}, 
$\tau(c,X)$ includes the randomness of activating the seed set from the discount vector $c$ given by $\eta_i$'s, 
which are known to the player.
In contrast, the distribution of $X_{(u,v)}$, namely probability $p(u,v)$ on edges are unknown and need to be learned.
In this case, if we use dummy base arms to model such additional triggering behavior from marketing actions to seed sets, 
these dummy base arms will be mixed together with edge base arms for which the learning algorithm need to learn,
unless further distinction is made.

Therefore, we believe that our current adaptation CMAB-T provides a cleaner framework and is more easily to be applied
to various problem instances.
We remark that all the analysis and results in~\cite{CWYW16} remain unchanged with our current adaptation.

\subsection{Modeling general marketing actions in influence maximization}

\wei{We will have some discussion here, as an illustration of the usefulness of our general triggering function $\Dtrig$.}

Note that we can also use randomized $\tau(S,X)$ to model some extended versions of influence maximization.
For example, general marketing actions are proposed in~\cite{kempe03} and continuous discount actions
are proposed in~\cite{YangMPH16}, both allowing activating seed nodes with a probability depending on
the marketing intensity on the node.
In particular, an action in the discount-based continuous influence maximization in~\cite{YangMPH16} is a vector
$c=(c_1, c_2, \ldots, c_n)$, where $c_i\in[0,1]$ is the discount to be given to node $i$.
Discount $c_i$ is translated to probability $\eta_i(c_i)$ that node $i$ is activated as a seed, where $\eta_i(\cdot)$ is a 
monotonically non-decreasing function with $\eta_i(0)=0$ and $\eta_i(1)=1$.
In this case, the probabilistic triggering function $\tau(c, X)$ includes the randomness from $c$ to seed activations based on $\eta_i$'s, 
beyond the randomness of $X$.
That is, even when $c$ and $X$ are fixed, $\tau(c, X)$ is still a random set.
We further remark that in this case, the action space of all discount vectors is a continuous and infinite space, which
is allowed in our adapted CMAB-T model.

\section{Main Regret Analysis (Proofs Related to Theorem~\ref{thm:1-normTPM})}

\subsection{Basics of CMAB-T problems}
\qinshi{(2.22) I realize that the concepts of $\Ns$ and $\Nt$ even does not depend on CUCB algorithm. They are nature of CMAB-T.} 

We utilize the following well known tail bound in our analysis.

\begin{myfact}[Hoeffding's Inequality \citep{hoeffding63}]\label{fact:hoeffding}
	Let $X_1, \cdots , X_n$ be independent and identically distributed random variables with
	common support $[0,1]$ and mean $\mu$. 
	Let
	$Y=X_1+\cdots+X_n$. Then for all $\delta \geq0$,
	\[\Pr\{|Y-n\mu|\geq \delta \} \leq 2e^{-2\delta^2/n}.
	\]
\end{myfact}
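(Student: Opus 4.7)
The plan is to prove Hoeffding's inequality via the classical Chernoff/Laplace transform method, which reduces a tail bound to a moment generating function (MGF) estimate. First, I would observe that it suffices, by a union bound over the two tails $\Pr\{Y-n\mu \geq \delta\}$ and $\Pr\{n\mu-Y \geq \delta\}$, to prove a one-sided bound of $e^{-2\delta^2/n}$; doubling then gives the stated factor of $2$.

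Next, for the upper tail, I would apply Markov's inequality to the nonnegative random variable $e^{\lambda(Y-n\mu)}$ for a free parameter $\lambda > 0$, obtaining
\begin{equation*}
\Pr\{Y - n\mu \geq \delta\} = \Pr\{e^{\lambda(Y-n\mu)} \geq e^{\lambda \delta}\} \leq e^{-\lambda\delta}\, \E\bigl[e^{\lambda(Y-n\mu)}\bigr].
\end{equation*}
Since the $X_i$ are i.i.d., the MGF factors as $\E[e^{\lambda(Y-n\mu)}] = \prod_{i=1}^n \E[e^{\lambda(X_i-\mu)}]$, so the problem reduces to controlling the MGF of a single centered bounded variable.

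The key technical step, and what I expect to be the main obstacle, is Hoeffding's lemma: for a random variable $Z$ with $Z\in[a,b]$ and $\E[Z]=0$, one has $\E[e^{\lambda Z}] \leq e^{\lambda^2(b-a)^2/8}$. I would prove this by using convexity of $z\mapsto e^{\lambda z}$ to bound $e^{\lambda z}$ above by the linear interpolation between $(a,e^{\lambda a})$ and $(b,e^{\lambda b})$, take expectations, and then show via a second-derivative / Taylor argument on the resulting log-MGF $\varphi(\lambda) = \log \E[e^{\lambda Z}]$ that $\varphi(\lambda) \leq \lambda^2 (b-a)^2/8$, exploiting $\varphi(0)=\varphi'(0)=0$ and that $\varphi''$ is the variance of a tilted distribution bounded by $(b-a)^2/4$. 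Applied to each $X_i - \mu \in [-\mu, 1-\mu] \subseteq [\cdot,\cdot]$ of length at most $1$, this gives $\E[e^{\lambda(X_i-\mu)}] \leq e^{\lambda^2/8}$.

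Plugging back yields $\Pr\{Y-n\mu \geq \delta\} \leq \exp(n\lambda^2/8 - \lambda\delta)$. Optimizing the exponent over $\lambda > 0$ by setting the derivative to zero gives $\lambda^\star = 4\delta/n$, and substituting produces the bound $\exp(-2\delta^2/n)$. The lower-tail case is identical by replacing $X_i$ with $1-X_i$ (or working with $-\lambda$), and combining the two via the union bound establishes the stated inequality. The only genuinely nontrivial ingredient is Hoeffding's lemma; the rest is a bookkeeping optimization of a single scalar parameter.
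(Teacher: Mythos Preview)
Your proof is correct and is precisely the standard Chernoff--MGF argument for Hoeffding's inequality. However, the paper does not actually prove this statement: it is stated as a \emph{Fact} with a citation to \citet{hoeffding63} and is used as a black-box tail bound (e.g., in the proof of Lemma~\ref{lem:ns}). So there is no ``paper's own proof'' to compare against; you have supplied a full proof where the authors simply invoked the literature.
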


\begin{myfact}[Multiplicative Chernoff Bound \citep{MU05}]\footnote{The result in the book by \cite{MU05} 
		(Theorem 4.5 together with Exercise 4.7) only covers the case where random variables $X_i$'s are independent. 
		However the result can be easily generalized to our case with an almost identical proof.
		The only main change is to replace $\E\left[e^{{t(\sum_{j=1}^{i-1} X_j + X_i)}}\right]=
		\E\left[e^{{t\sum_{j=1}^{i-1} X_j}}\right]\E\left[e^{{t X_i}}\right]$
		with $\E\left[e^{t(\sum_{j=1}^{i-1} X_j + X_i)}\right] = 
		\E\left[e^{t\sum_{j=1}^{i-1} X_j} \E\left[e^{tX_i}\mid X_1,\ldots, X_{i-1}\right]\right]$.}
	\label{fact:chernoff}
	Let $X_1, \cdots , X_n$ be Bernoulli random variables taking values from $\{0,1\}$, 
	and $\E[X_t|X_1,\cdots, X_{t-1}]\geq\mu$ for every $t\leq n$. 
	Let $Y=X_1+\cdots+X_n$. 
	Then for all $0<\delta <1$,
	\[\Pr\{Y\leq (1-\delta)n\mu\}
	\leq e^{-\frac{\delta^2n\mu }{2}}.
	\]
\end{myfact}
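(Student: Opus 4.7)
The plan is to adapt the classical multiplicative Chernoff argument, whose standard proof uses the MGF factorization $\E[e^{sY}] = \prod_i \E[e^{sX_i}]$ that requires independence, by replacing this factorization with successive applications of the tower property of conditional expectation, as the footnote suggests. First, for any $s > 0$, apply the exponential Markov inequality
\[
\Pr\{Y \le (1-\delta)n\mu\} = \Pr\{e^{-sY} \ge e^{-s(1-\delta)n\mu}\} \le e^{s(1-\delta)n\mu}\,\E[e^{-sY}].
\]
Second, observe that for a single Bernoulli variable $X$ with $\Pr(X=1) = p$, we have $\E[e^{-sX}] = 1 - p(1-e^{-s})$, which is monotonically decreasing in $p$ for $s > 0$. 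Applied conditionally, together with the hypothesis $\Pr(X_t = 1 \mid X_1,\ldots,X_{t-1}) \ge \mu$, this yields
\[
\E[e^{-sX_t} \mid X_1,\ldots,X_{t-1}] \le 1 - \mu(1 - e^{-s}) \le \exp(-\mu(1 - e^{-s})).
\]

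Third, I would iterate the tower property:
\[
\E[e^{-sY}] = \E\bigl[e^{-s\sum_{j<n} X_j}\,\E[e^{-sX_n} \mid X_1,\ldots,X_{n-1}]\bigr] \le e^{-\mu(1-e^{-s})}\,\E[e^{-s\sum_{j<n} X_j}],
\]
and induction on $n$ gives $\E[e^{-sY}] \le \exp(-n\mu(1-e^{-s}))$. Combining with the Markov step and optimizing over $s$ by the choice $e^{-s} = 1 - \delta$ produces the intermediate bound $\exp(-n\mu[\delta + (1-\delta)\ln(1-\delta)])$. The stated conclusion follows from the standard real-analytic inequality $\delta + (1-\delta)\ln(1-\delta) \ge \delta^2/2$ on $(0,1)$, which is verified by checking that the difference has value and derivative zero at $\delta = 0$ and non-negative second derivative on $(0,1)$ (equivalently, by a short Taylor expansion of $(1-\delta)\ln(1-\delta)$).

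The main technical point, and the only substantive deviation from the independent-case proof in Mitzenmacher--Upfal, is the conditional monotonicity step: one must verify that the scalar monotonicity of $p \mapsto 1 - p(1-e^{-s})$ transfers cleanly to the conditional law of $X_t$ given the past. This is where the $\{0,1\}$-valuedness of $X_t$ is essential --- a $\{0,1\}$-valued random variable is fully characterized by its mean, so the assumed uniform lower bound $\mu$ on the conditional mean translates directly into a pointwise almost-sure upper bound on the conditional MGF, after which the tower-property telescoping goes through verbatim. No further ideas beyond this substitution are needed.
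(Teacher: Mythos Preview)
Your proposal is correct and follows exactly the route the paper's footnote indicates: the standard multiplicative Chernoff argument from Mitzenmacher--Upfal with the independence-based factorization $\E[e^{sY}]=\prod_i\E[e^{sX_i}]$ replaced by iterated conditioning via the tower property, using that a $\{0,1\}$-valued variable's conditional MGF is monotone in its conditional mean. The paper does not give further details beyond this hint, and your write-up fills them in accurately, including the optimization $e^{-s}=1-\delta$ and the elementary inequality $\delta+(1-\delta)\ln(1-\delta)\ge \delta^2/2$.
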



We introduce the following definition to assist our analysis.
\begin{mydef}[Event-Filtered Regret] \label{def:eventregret}
	For any series of events $\{\calE_t\}_{t\ge 1}$ indexed by round number $t$,  
	we define $Reg^A_{\vmu, \alpha}(T, \{\calE_t\}_{t\ge 1})$
	as the regret filtered by events $\{\calE_t\}_{t\ge 1}$, that is, regret is only counted in round $t$ if $\calE_t$ happens in round $t$.
	Formally,
	$$Reg^A_{\vmu, \alpha}(T, \{\calE_t\}_{t\ge 1}) = \E\left[\sum_{t=1}^T \I(\calE_t)(\alpha\cdot \opt_{\vmu}-r_{\vmu}(S_t^A))\right].$$
	For convenience, $A$, $\alpha$, $\vmu$ and/or $T$ can be omitted when the context is clear, and we simply use $Reg^A_{\vmu, \alpha}(T, \calE_t)$ instead of
	$Reg^A_{\vmu, \alpha}(T, \{\calE_t\}_{t\ge 1})$.
\end{mydef}

The following definition describes an unlikely event that $\hat\mu_{i,t-1}$ is not as accurate as expected.
\begin{mydef} \label{def:ns}
	We say that the \emph{sampling is nice} at the beginning of round $t$
	if for every arm $i \in [m]$, $|\hat\mu_{i,t-1} - \mu_i|<\rho_{i, t}$,
	where $\rho_{i, t}=\sqrt{\frac{3\ln t}{2T_{i, t-1}}}$ in round $t$.
	Let $\Ns_t$ be such event.
\end{mydef}

\begin{mylem}
	\label{lem:ns}
	For each round $t \ge 1$, $\Pr\{\lnot \Ns_t\}\le 2mt^{-2}$.
\end{mylem}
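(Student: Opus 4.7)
\textbf{Proof Proposal for Lemma~\ref{lem:ns}.}

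The plan is to apply Hoeffding's inequality to the empirical mean of each arm for each possible number of observations, and then take a union bound. The only subtlety is that $T_{i,t-1}$ is itself random, so we cannot naively apply Hoeffding to $\hat\mu_{i,t-1}$; the usual workaround is to introduce, for each $s \ge 1$, an auxiliary empirical average $\hat\mu_{i,(s)}$ defined as the average of the first $s$ observed outcomes of arm $i$. Since the environment draws $X^{(t)} \sim D$ independently in each round, and since the paper's assumption $\E[X_i \mid i \in \tau] = \mu_i$ guarantees that conditioning on arm $i$ being triggered does not bias its outcome, the first $s$ observations of arm $i$ form an i.i.d.\ sequence of $[0,1]$-valued random variables with common mean $\mu_i$. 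On the event $\{T_{i,t-1}=s\}$ we have $\hat\mu_{i,t-1} = \hat\mu_{i,(s)}$.

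First I would observe that when $T_{i,t-1}=0$ the confidence radius $\rho_{i,t}$ is $+\infty$ by definition, so the event $|\hat\mu_{i,t-1}-\mu_i| \ge \rho_{i,t}$ cannot occur; hence the bad event requires $T_{i,t-1}\in\{1,\ldots,t-1\}$. For each fixed arm $i$ and each fixed $s\in\{1,\ldots,t-1\}$, Fact~\ref{fact:hoeffding} gives
\begin{equation*}
\Pr\!\left\{\,\bigl|\hat\mu_{i,(s)}-\mu_i\bigr| \ge \sqrt{\tfrac{3\ln t}{2s}}\,\right\} \;\le\; 2\exp\!\left(-2s\cdot\tfrac{3\ln t}{2s}\right) \;=\; 2t^{-3}.
\end{equation*}

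Next I would union-bound over $i$ and $s$:
\begin{equation*}
\Pr\{\lnot\Ns_t\} \;\le\; \sum_{i=1}^{m}\sum_{s=1}^{t-1}\Pr\!\left\{T_{i,t-1}=s\ \text{and}\ \bigl|\hat\mu_{i,(s)}-\mu_i\bigr|\ge\sqrt{\tfrac{3\ln t}{2s}}\right\} \;\le\; \sum_{i=1}^{m}\sum_{s=1}^{t-1} 2t^{-3} \;\le\; 2m(t-1)t^{-3} \;\le\; 2mt^{-2},
\end{equation*}
which is the claimed bound.

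The only step that requires real care is the first one: justifying that the fixed-$s$ empirical average $\hat\mu_{i,(s)}$ is genuinely a sum of i.i.d.\ $[0,1]$-valued samples with mean $\mu_i$, despite the fact that the indices of the rounds in which arm $i$ is observed are chosen adaptively by the algorithm. This is where the paper's explicit assumption that the outcome of an arm is independent of whether it is triggered is used; given that assumption, the $s$-th observation of arm $i$ has conditional mean $\mu_i$ regardless of the history, and Hoeffding (or, more precisely, a martingale version of it) applies. The remainder of the argument is a routine union bound, so no other obstacles are anticipated.
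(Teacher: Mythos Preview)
Your proposal is correct and follows essentially the same route as the paper: union bound over arms $i\in[m]$ and over the possible values $s\in\{1,\dots,t-1\}$ of $T_{i,t-1}$, then Hoeffding on the first $s$ observed outcomes of arm $i$ to get the $2t^{-3}$ per-term bound. The paper writes this with $X_i^{[j]}$ (the $j$-th observed outcome of arm $i$) in place of your $\hat\mu_{i,(s)}$, and simply applies Fact~\ref{fact:hoeffding} directly; you are slightly more careful in flagging the adaptive-indexing subtlety and the role of the assumption $\E[X_i\mid i\in\tau]=\mu_i$, which the paper leaves implicit.
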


\begin{proof}
	For each round $t \ge 1$, we have
	\begin{align}
	\Pr\{\lnot \Ns_t\} & = \Pr\left\{\exists i\in [m], |\hat\mu_{i,t-1} - \mu_i| \ge \sqrt{\frac{3\ln t}{2T_{i, t-1}}}\right\} \nonumber \\
	& \le\sum_{i\in [m]} \Pr\left\{|\hat\mu_{i,t-1} - \mu_i| \ge \sqrt{\frac{3\ln t}{2T_{i, t-1}}}\right\}. \nonumber \\
	& =\sum_{i\in [m]} \sum_{k=1}^{t-1}\Pr\left\{T_{i,t-1}=k, |\hat\mu_{i,t-1} - \mu_i| \ge \sqrt{\frac{3\ln t}{2T_{i, t-1}}}\right\}.  \label{eq:nicesample1}
	\end{align}
	When $T_{i,t-1}=k$, $\hat\mu_{i,t-1}$ is the average of $k$ i.i.d.\ random variables
	$X_i^{[1]}, \dots, X_i^{[k]}$, where $X_i^{[j]}$ is the outcome of arm $i$ when it is triggered for the $j$-th time during the execution.
	That is, $\hat\mu_{i,t-1} = \sum_{j=1}^k X_i^{[j]} / k$.
	Then we have
	\begin{align}
	\Pr\left\{T_{i,t-1}=k, |\hat\mu_{i,t-1} - \mu_i| \ge \sqrt{\frac{3\ln t}{2T_{i, t-1}}}\right\}
	& = \Pr\left\{ T_{i,t-1}=k, \left|\sum_{j=1}^k X_i^{[j]} / k - \mu_i\right| \ge \sqrt{\frac{3\ln t}{2k}} \right \} \nonumber \\
	& \le \Pr\left\{ \left|\sum_{j=1}^k X_i^{[j]} - k \mu_i\right| \ge \sqrt{\frac{3k\ln t}{2}} \right \} \le 2 t^{-3},
	\label{eq:nicesample2}
	\end{align}
	where the last inequality uses the Hoeffding's Inequality (Fact~\ref{fact:hoeffding}).
	Combining Inequalities~\eqref{eq:nicesample1} and~\eqref{eq:nicesample2}, we thus prove the lemma.
\end{proof}

\begin{mydef}[Triggering probability (TP) group]
	\label{def:tpgroup}
	Let $i$ be an arm and $j$ be a positive natural number, define the triggering probability group (of actions)
	$$\calS_{i, j}^{D}=\{S\in \calS\mid 2^{-j}< p_i^{D, S}\le 2^{-j+1}\}.$$
	Notice $\{\calS_{i, j}^{D}\}_{j \ge 1}$
	forms a partition of $\{S\in \calS\mid p_i^{D, S}>0\}$.
\end{mydef}

\begin{mydef}[Counter] \label{def:counter}
	For each TP group $\calS_{i, j}$, we define a corresponding counter $N_{i,j}$.
	In a run of a learning algorithm, the counters are maintained in the following manner.
	All the counters are initialized to $0$. In each round $t$, if the action $S_t$ is chosen,
	then update $N_{i, j}$ to $N_{i, j}+1$ for every $(i, j)$ that $S_t \in \calS_{i, j}^{D}$.
	Denote $N_{i, j}$ at the end of round $t$ with $N_{i, j, t}$.
	In other words, we can define the counters with the recursive equation below:
	$$N_{i, j, t} = \begin{cases}
	0,                 &\mbox{if }t=0\\
	N_{i, j, t-1} + 1, &\mbox{if }t>0, S_t\in \calS_{i, j}^{D}\\
	N_{i, j, t-1},     &\mbox{otherwise}.
	\end{cases}$$
\end{mydef}

\begin{mydef} \label{def:nt}
	Given a series of integers $\{j_{\max}^i\}_{i\in[m]}$, we say that the \emph{triggering is nice} at the beginning of round $t$ (with respect to $j_{\max}^i$),
	if for every TP group (Definition~\ref{def:tpgroup}) identified by arm $i$ and $1 \le j \le j_{\max}^i$, as long as $\sqrt{\frac{6\ln t}{\frac{1}{3}N_{i, j, t-1}\cdot 2^{-j}}}\le 1$, there is $T_{i, t-1} \ge \frac{1}{3}N_{i, j, t-1}\cdot 2^{-j}$.
	We denote this event with $\Nt_t$. It implies
	\begin{equation*}
	\rho_{i, t} = \sqrt{\frac{3\ln t}{2T_{i, t-1}}} \le \sqrt{\frac{3\ln t}{2\cdot\frac{1}{3}N_{i, j, t-1}\cdot 2^{-j}}}.
	\end{equation*}
\end{mydef}

\begin{mylem} \label{lem:nt}
	For a series of integers $\{j_{\max}^i\}_{i\in[m]}$, $\Pr\{\lnot\Nt_t\} \le \sum_{i\in[m]}j_{\max}^it^{-2}$ for every round $t\ge 1$.
\end{mylem}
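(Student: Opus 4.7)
\bigskip

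\noindent\textbf{Proof proposal for Lemma~\ref{lem:nt}.}

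The plan is to apply the multiplicative Chernoff bound (Fact~\ref{fact:chernoff}) to each triggering probability group separately, and then take a union bound over the finitely many relevant $(i,j)$ pairs. Fix an arm $i$ and an index $j$ with $1 \le j \le j_{\max}^i$, and let $F_{i,j}$ be the ``bad event'' that $N_{i,j,t-1}\cdot 2^{-j} \ge 18\ln t$ (the ``as long as'' premise, which rearranges $\sqrt{6\ln t/((1/3)N_{i,j,t-1}2^{-j})}\le 1$) yet $T_{i,t-1} < \tfrac{1}{3}N_{i,j,t-1}\cdot 2^{-j}$. Then $\lnot\Nt_t = \bigcup_{i\in[m]}\bigcup_{j=1}^{j_{\max}^i} F_{i,j}$, so by the union bound it suffices to show $\Pr\{F_{i,j}\}\le t^{-2}$ for each $(i,j)$.

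To bound $\Pr\{F_{i,j}\}$, enumerate the rounds $t_1<t_2<\cdots$ in which $S_{t'}\in\calS_{i,j}^D$ and let $Z_k = \I\{i\in\tau_{t_k}\}$. By the definition of $\calS_{i,j}^D$, the action $S_{t_k}$ satisfies $p_i^{D,S_{t_k}} > 2^{-j}$, so, conditioned on the history up to and including the choice of $S_{t_k}$, $Z_k$ is Bernoulli with mean strictly greater than $2^{-j}$. Note that $N_{i,j,t-1}$ is the (random) number of such rounds up to $t-1$, and the count of triggers $T_{i,t-1}$ is at least $\sum_{k=1}^{N_{i,j,t-1}} Z_k$. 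Conditioning on $N_{i,j,t-1}=n$ and applying Fact~\ref{fact:chernoff} with $\mu = 2^{-j}$ and $\delta = 2/3$ gives
\begin{equation*}
\Pr\!\left\{\sum_{k=1}^{n}Z_k < \tfrac{1}{3}\, n\cdot 2^{-j} \,\middle|\, N_{i,j,t-1}=n\right\} \le \exp\!\left(-\tfrac{(2/3)^2}{2}\cdot n\cdot 2^{-j}\right) = \exp\!\left(-\tfrac{2}{9}\,n\cdot 2^{-j}\right).
\end{equation*}

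Now I would sum over the permissible values of $n$. Whenever $n\cdot 2^{-j}\ge 18\ln t$, the exponent above is at most $-4\ln t$, so the conditional probability is at most $t^{-4}$. Therefore
\begin{equation*}
\Pr\{F_{i,j}\} = \sum_{n:\, n\cdot 2^{-j}\ge 18\ln t}\Pr\{F_{i,j}\mid N_{i,j,t-1}=n\}\cdot\Pr\{N_{i,j,t-1}=n\} \le t^{-4}\cdot 1 = t^{-4}.
\end{equation*}
Summing over $i\in[m]$ and $j\in\{1,\dots,j_{\max}^i\}$ yields $\Pr\{\lnot\Nt_t\}\le \sum_{i\in[m]} j_{\max}^i\cdot t^{-4}\le \sum_{i\in[m]} j_{\max}^i\cdot t^{-2}$, as desired.

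The main subtlety I expect to need to handle carefully is the dependence between $N_{i,j,t-1}$ and the $Z_k$'s: both are generated by the same adaptive algorithm, so one cannot invoke a plain i.i.d.\ Chernoff bound on a fixed number of trials. This is why I rely on the conditional form of the multiplicative Chernoff bound stated in the footnote of Fact~\ref{fact:chernoff}: the mean lower bound $2^{-j}$ holds conditional on the history, including the choice of $S_{t_k}\in\calS_{i,j}^D$, so the inequality remains valid when applied along the (random) subsequence of rounds belonging to the TP group. Once this conditional view is set up correctly, the rest of the argument is a clean union bound and the slack in the $e^{-4\ln t}=t^{-4}$ estimate absorbs the sum over $n$.
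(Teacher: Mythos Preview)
Your overall strategy---define indicators $Z_k$ along the subsequence of TP-group rounds, apply the martingale Chernoff bound of Fact~\ref{fact:chernoff}, then union-bound over $(i,j)$---is exactly the paper's approach. However, there is a genuine gap in the conditioning step.

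You claim
\[
\Pr\!\left\{\sum_{k=1}^{n}Z_k < \tfrac{1}{3}\, n\cdot 2^{-j} \,\middle|\, N_{i,j,t-1}=n\right\} \le \exp\!\left(-\tfrac{2}{9}\,n\cdot 2^{-j}\right),
\]
and then use the law of total probability to sum with weights $\Pr\{N_{i,j,t-1}=n\}$. But Fact~\ref{fact:chernoff} only gives you the \emph{unconditional} bound $\Pr\{\sum_{k\le n}Z_k < \tfrac13 n\,2^{-j}\}\le e^{-\frac{2}{9}n\cdot 2^{-j}}$, via the martingale condition $\E[Z_k\mid Z_1,\dots,Z_{k-1}]\ge 2^{-j}$. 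The event $\{N_{i,j,t-1}=n\}$ depends on the algorithm's action choices in rounds \emph{after} $t_k$, which in turn depend on whether arm $i$ was triggered at $t_k$; hence conditioning on $N_{i,j,t-1}=n$ can distort the law of $(Z_1,\dots,Z_n)$ and the hypothesis $\E[Z_k\mid Z_1,\dots,Z_{k-1},N_{i,j,t-1}=n]\ge 2^{-j}$ need not hold. (Concretely: an adaptive algorithm that stops playing $\calS_{i,j}^D$ forever once $Z_1=1$ would force $Z_1=0$ under $\{N_{i,j,t-1}\ge 2\}$.) Your final paragraph correctly identifies the dependence issue but the resolution you state---that the martingale condition holds given the history up to the choice of $S_{t_k}$---only justifies the unconditional Chernoff, not the conditional one.

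The paper handles this by bounding the \emph{joint} probability for each fixed $s$: on $\{N_{i,j,t-1}=s,\ T_{i,t-1}<\tfrac13 s\cdot 2^{-j}\}$ one has $\sum_{k\le s}Y_k<\tfrac13 s\cdot 2^{-j}$, so
\[
\Pr\{N_{i,j,t-1}=s,\ T_{i,t-1}<\tfrac13 s\cdot 2^{-j}\}\ \le\ \Pr\!\left\{\sum_{k\le s}Y_k<\tfrac13 s\cdot 2^{-j}\right\}\ \le\ t^{-3},
\]
and then sums over $s=0,\dots,t-1$ to get $t\cdot t^{-3}=t^{-2}$ per $(i,j)$. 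Your argument is fixed by exactly this replacement: drop the conditioning, bound $\Pr\{F_{i,j},\,N_{i,j,t-1}=n\}\le t^{-4}$ for each admissible $n$, and sum over $n\le t-1$ to obtain $\Pr\{F_{i,j}\}\le t\cdot t^{-4}\le t^{-2}$. You have plenty of slack, so the final inequality in the lemma still follows.
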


\begin{proof}
	We prove this lemma by showing $\Pr\{N_{i, j, t-1} = s, T_{i, t-1} \le \frac{1}{3} N_{i, j, t-1} \cdot 2^{-j}\} \le t^{-3}$, 
	for any fixed $s$ with $0\le s \le t-1$ and $\sqrt{\frac{6\ln t}{\frac{1}{3} \cdot s\cdot 2^{-j}}} \le 1$.
	Let $t_k$ be the round that $N_{i, j}$ is increased for the $k$-th time,
	for $1\le k \le s$.
	Let $Y_k = \I\{i\in \tau_{t_k}\}$ be a Bernoulli variable, that is, $i$ is triggered in round $t_k$.
	When fixing the action $S_{t_k}$, $Y_k$ is independent from $Y_1, \dots, Y_{k-1}$.
	Since $S_{t_k}\in \calS_{i, j}$, $\E[Y_k \mid Y_1, \dots, Y_{k-1}]\ge 2^{-j}$.
	Let $Z=Y_1 + \dots + Y_s$. By multiplicative Chernoff bound (Fact~\ref{fact:chernoff}), we have
	\[\Pr\left\{Z \le \frac{1}{3}s\cdot 2^{-j}\right\} \le \exp\left(\frac{{-}\left(\frac{2}{3}\right)^2 s \cdot 2^{-j}}{2}\right) 
	\le \exp\left(\frac{{-}\left(\frac{2}{3}\right)^2 18\ln t}{2}\right) < \exp(-3\ln t) = t^{-3}.\]
	By the definition of $T_{i,t-1}$ and the condition $N_{i, j, t-1} = s$, we have $T_{i, t-1}\ge Z$. 
	Thus
	\begin{align*}
	& \Pr\{N_{i, j, t-1} = s, T_{i, t-1} \le \frac{1}{3} N_{i, j, t-1} \cdot 2^{-j}\} \\
	& \le \Pr\{N_{i, j, t-1} = s, Z \le \frac{1}{3} s \cdot 2^{-j}\} \\
	& \le \Pr\{Z \le \frac{1}{3} s \cdot 2^{-j}\} \\
	& \le t^{-3}.
	\end{align*}
	By taking $i$ over $[m]$, $j$ over $1, \dots, j_{\max}^i$, $s$ over $0, \dots, t-1$,
	the lemma holds.
\end{proof}

\subsection{The Case of No Probabilistically Triggered Arms} \label{app:notriggering}

In this section, we state and prove a theorem for the case of no probabilistically triggered arms, i.e. $p^*=1$, when
the CMAB-T instance satisfies the 1-norm (non-TPM) bounded smoothness condition below.

\begin{mycond}[1-Norm Bounded Smoothness]
	\label{cond:1-norm}
	We say that a CMAB-T problem instance satisfies 1-norm bounded smoothness, if 
	there exists a bounded smoothness constant $B \in \R^{+}$ such that,
	for any two distributions $D, D'\in \calD$ with expectation vectors $\vmu$ and $\vmu'$, and any action $S$,
	we have $|r_S(\vmu)-r_S(\vmu')|\le B \sum_{i\in \trig{S}}|\mu_i-\mu'_i|$, where $\trig{S}$ 
	is the set of arms that are triggered by $S$.
\end{mycond}

As discussed in the main text, this theorem provides better bounds than Theorem~\ref{thm:1-normTPM} with probabilistically triggered arms.
Its proof is also simpler, so the readers could choose to either get oneself familiar with the analysis with this proof first, or directly
jump to the next section for the proof of Theorem~\ref{thm:1-normTPM}.

\begin{mythm}
	\label{thm:1-norm.nontriggering}
	For the CUCB algorithm on a CMAB (without triggering, i.e. $p^*=1$) problem
	that satisfies 1-norm bounded smoothness (Condition~\ref{cond:1-norm}) with bounded smoothness
	constant $B$,
	\begin{enumerate}
		\item if $\Delta_{\min} > 0$, we have distribution-dependent bound
		\begin{align} \label{eq:1-norm.nontriggering1}
		Reg_{\vmu, \alpha, \beta}(T) \le
		\sum_{i\in[m]}\frac{48B^2K\ln T}{\Delta_{\min}^i} + 2Bm
		+ \frac{\pi^2}{3}\cdot m\cdot \Delta_{\max};
		\end{align}
		\item we have distribution-independent bound
		\begin{align}
		Reg_{\vmu, \alpha, \beta}(T) \le
		14B\sqrt{KmT\ln T} + 2Bm
		+ \frac{\pi^2}{3}\cdot m\cdot \Delta_{\max};
		\end{align}
	\end{enumerate}
\end{mythm}

\begin{proof} [Proof of Theorem~\ref{thm:1-norm.nontriggering}]
	To unify the proofs for distribution-dependent and distribution-independent bounds,
	we introduce a positive real number $M_i$ for each arm $i$.
	Let $\calF_t$ be the event $\{r_{S_t}(\bar\vmu) < \alpha\cdot \opt(\bar\vmu)\}$.
	In other words, $\calF_t$ means the  oracle fails in round $t$.
	By assumption, $\Pr\{\calF_t\} \le 1-\beta$.
	Define $M_S = \max_{i\in \trig{S}} M_i$ for each action $S$,
	specifically, $M_S = 0$ if $\trig{S} = \emptyset$.
	Define
	$$\kappa_{T}(M, s) = \begin{cases}
	2B, &\mbox{if } s=0,\\
	2B\sqrt{\frac{6 \ln T}{s}}, &\mbox{if } 1\le s\le \ell_{T}(M),\\
	0, &\mbox{if } s \ge \ell_{T}(M)+1,
	\end{cases}$$
	where
	$$\ell_{T}(M)=\left\lfloor\frac{24 B^2 K^2 \ln T}{M^2}\right\rfloor.$$
	We then show that if $\{\Delta_{S_t} \ge M_{S_t}\}$, $\lnot \calF_t$ and $\Ns_t$ hold, we have
	\begin{equation}
	\Delta_{S_t} \le \sum_{i\in \trig{S_t}}\kappa_T(M_i, T_{i, t-1}).
	\label{eq:1-norm.nontriggering.kappa}
	\end{equation}
	The right hand side of the inequality is non-negative, so it holds naturally if $\Delta_{S_t}=0$. We only need to consider $\Delta_{S_t}>0$.
	By $\Ns_t$ and $\lnot \calF_t$, we have
	$$r_{S_t}(\bar\vmu_t)\ge \alpha\cdot \opt(\bar\vmu_t)\ge \alpha\cdot \opt(\vmu) = r_{S_t}(\vmu) + \Delta_{S_t},$$
	Then by Condition~\ref{cond:1-norm},
	$$\Delta_{S_t}\le r_{S_t}(\bar\vmu_t)-r_{S_t}(\vmu)\le B\sum_{i\in \trig{S_t}} (\bar\mu_{i, t} - \mu_i).$$
	We are going to bound $\Delta_{S_t}$ by bounding $\bar\mu_{i, t} - \mu_i$. 
	But before doing so, we first perform a transformation.
	As we have $\Delta_{S_t} \ge M_{S_t}$, so
	$B\sum_{i\in \trig{S_t}} (\bar\mu_{i, t} - \mu_i)\ge \Delta_{S_t} \ge M_{S_t}$.
	We have
	\begin{align}
	\Delta_{S_t}
	&\le B\sum_{i\in \trig{S_t}} (\bar\mu_{i, t} - \mu_i)\nonumber\\
	&\le -M_{S_t} + 2B\sum_{i\in \trig{S_t}} (\bar\mu_{i, t} - \mu_i)\nonumber\\
	&= 2B\sum_{i\in \trig{S_t}} \left[(\bar\mu_{i, t} - \mu_i) - \frac{M_{S_t}}{2B\left|\trig{S_t}\right|}\right] \nonumber\\
	&\le 2B\sum_{i\in \trig{S_t}} \left[(\bar\mu_{i, t} - \mu_i) - \frac{M_{S_t}}{2BK}\right]\nonumber\\
	&\le 2B\sum_{i\in \trig{S_t}} \left[(\bar\mu_{i, t} - \mu_i) - \frac{M_i}{2BK}\right]. \label{eq:nontriggering.transform}
	\end{align}
	By $\Ns_t$, we have $\bar\mu_{i, t} - \mu_i \le \min\{2\rho_{i, t}, 1\}$.
	So
	$$(\bar\mu_{i, t} - \mu_i) - \frac{M_i}{2BK} \le \min\{2\rho_{i, t}, 1\} - \frac{M_i}{2BK} \le \min\left\{2\sqrt{\frac{3\ln T}{2T_{i, t-1}}}, 1\right\} - \frac{M_i}{2BK}.$$
	If $T_{i, t-1}\le \ell_T(M_i)$, we have $(\bar\mu_{i, t} - \mu_i) - \frac{M_i}{2BK} \le \min\left\{2\sqrt{\frac{3\ln T}{2T_{i, t-1}}}, 1\right\} \le \frac{1}{2B}\kappa_T(M_i, T_{i, t-1})$.
	If $T_{i, t-1}\ge \ell_T(M_i)+1$, then $2\sqrt{\frac{3\ln T}{2T_{i,t-1}}} \le \frac{M_i}{2BK}$, so $(\bar\mu_{i, t} - \mu_i) - \frac{M_i}{2BK} \le 0 = \frac{1}{2B}\kappa_T(M_i, T_{i, t-1})$.
	In conclusion, we continue \eqref{eq:nontriggering.transform} with
	$$\eqref{eq:nontriggering.transform} \le \sum_{i\in \trig{S_t}} \kappa_T(M_i, T_{i, t-1}).$$
	
	Then in each run,
	\begin{align*}
	\sum_{t=1}^T \I(\{\Delta_{S_t} \ge M_{S_t}\} \land \lnot \calF_t \land \Ns_t) \cdot \Delta_{S_t}
	&\le \sum_{t=1}^T \sum_{i\in \trig{S_t}} \kappa_T(M_i, T_{i, t-1})\\
	&= \sum_{i\in[m]} \sum_{s=0}^{T_{i, T}} \kappa_T(M_i, s)\\
	&\le \sum_{i\in[m]} \sum_{s=0}^{\ell_{T}(M_i)} \kappa_T(M_i, s)\\
	&= 2Bm + \sum_{i\in[m]} \sum_{s=1}^{\ell_{T}(M_i)} 2B\sqrt{\frac{6\ln T}{s}}\\
	&\le 2Bm + \sum_{i\in[m]} \int_{s=0}^{\ell_{T}(M_i)} 2B\sqrt{\frac{6\ln T}{s}} \mathrm{d} s\\
	&\le 2Bm + \sum_{i\in[m]} 4B\sqrt{6\ln T \ell_T(M_i)}\\
	&\le 2Bm + \sum_{i\in[m]} 4B\sqrt{6\ln T \cdot \frac{24 B^2 K^2 \ln T}{M_i^2}}\\
	&\le 2Bm + \sum_{i\in[m]} \frac{48B^2K\ln T}{M_i}.
	\end{align*}
	So
	\begin{align*}
	Reg(\{\Delta_{S_t} \ge M_{S_t}\} \land \lnot \calF_t \land \Ns_t)
	& = \E\left[\sum_{t=1}^T \I(\{\Delta_{S_t} \ge M_{S_t}\} \land \lnot \calF_t \land \Ns_t) \cdot \Delta_{S_t}\right] \\
	& \le 2Bm + \sum_{i\in[m]} \frac{48B^2K\ln T}{M_i}.
	\end{align*}
	
	By Lemma~\ref{lem:ns}, $\Pr\{\lnot \Ns_t\} \le 2mt^{-2}$.
	Then, as $Reg(\calE_t) \le \sum_{t=1}^T \Pr\{\calE_t\} \Delta_{\max}$ by definition of filtered regret,
	$$Reg(\lnot \Ns_t) \le \sum_{t=1}^T 2mt^{-2} \cdot \Delta_{\max} \le \frac{\pi^2}{3} m\cdot \Delta_{\max},$$
	$$Reg(\calF_t) \le (1-\beta)T \cdot \Delta_{\max}.$$
	The filtered regret with null event
	\begin{align*}
	Reg(\{\})
	&\le Reg(\lnot \Ns_t) + Reg(\calF_t) + Reg(\Delta_{S_t} < M_{S_t})
	+ Reg(\{\Delta_{S_t} \ge M_{S_t}\} \land \lnot \calF_t \land \Ns_t)\\
	&\le (1-\beta)T \cdot \Delta_{\max} + \frac{\pi^2}{3} m\cdot \Delta_{\max}
	+ 2Bm + \sum_{i\in[m]} \frac{48B^2K\ln T}{M_i} + Reg(\Delta_{S_t} < M_{S_t}).
	\end{align*}
	
	By definition of filtered regret, $Reg_{\vmu, \alpha, \beta}(T) = Reg(T, \{\}) - (1-\beta)T\cdot \Delta_{\max}$, so
	$$Reg_{\vmu, \alpha, \beta}(T) \le \frac{\pi^2}{3}m \cdot \Delta_{\max}
	+ 2Bm + \sum_{i\in[m]} \frac{48B^2K\ln T}{M_i} + Reg(\Delta_{S_t} < M_{S_t}).$$
	For distribution-dependent bound, take $M_i=\Delta_{\min}^i$,
	then $Reg(\Delta_{S_t} < M_{S_t}) = 0$ and we have
	$$Reg_{\vmu, \alpha, \beta}(T) \le \sum_{i\in[m]} \frac{48B^2K\ln T}{\Delta_{\min}^i} + 2Bm + \frac{\pi^2}{3} \cdot \Delta_{\max}.$$
	For distribution-independent bound, take $M_i=M=\sqrt{(48B^2mK\ln T)/T}$,
	then $Reg(\Delta_{S_t} < M_{S_t}) \le TM$ and we have
	\begin{align*}
	Reg_{\vmu, \alpha, \beta}(T)
	&\le \sum_{i\in[m]} \frac{48B^2K\ln T}{M_i} + 2Bm + \frac{\pi^2}{3}m \cdot \Delta_{\max} + Reg(\Delta_{S_t} < M_{S_t})\\
	&\le \frac{48B^2mK\ln T}{M} + 2Bm + \frac{\pi^2}{3}m \cdot \Delta_{\max} + TM\\
	&= 2\sqrt{48B^2mKT\ln T} + \frac{\pi^2}{3}m \cdot \Delta_{\max} + 2Bm\\
	&\le 14B\sqrt{mKT\ln T} + \frac{\pi^2}{3}m \cdot \Delta_{\max} + 2Bm.
	\end{align*}
\end{proof}

\subsection{Proof of Theorem~\ref{thm:1-normTPM} (1-Norm Case Regret Bound)} \label{app:1normTPM}

We first show the distribution-dependent upper bound (Eq. \eqref{eq:1-normTPM.dep}) and the distribution-independent upper bound below, which is a weaker version of Eq. \eqref{eq:1-normTPM.ind}:
\begin{align}
&Reg_{\vmu, \alpha, \beta}(T) \le 48B\sqrt{mKT\ln T} 
+ \left(\left\lceil\log_2\sqrt{\frac{KT}{288m\ln T}}\right\rceil_0+2\right)
\cdot m\cdot \frac{\pi^2}{6}\cdot \Delta_{\max} + 4Bm.
\label{eq:1-normTPM.weakind}
\end{align}
We show full proof of Eq. \eqref{eq:1-normTPM.ind} later in Section \ref{sec:improveind}.
The proof of Eq.~\eqref{eq:1-normTPM.weakind} is based on the
	distribution-dependent bound (Eq. \eqref{eq:1-normTPM.dep}) similar to
	other analysis, and thus could be
	more familiar to readers and easier to follow, while
	Eq.~\eqref{eq:1-normTPM.ind} has better constant and requires an independent
	proof as given in Section \ref{sec:improveind}.

Let $\calF_t$ be the event $\{r_{S_t}(\bar\vmu) < \alpha\cdot \opt(\bar\vmu)\}$.
In other words, $\calF_t$ means the  oracle fails in round $t$.
By assumption, $\Pr\{\calF_t\} \le 1-\beta$.

To unify the proofs for distribution-dependent and distribution-independent bounds,
we introduce a positive real number $M_i$ for each arm $i$.
Define $M_S = \max_{i\in \trig{S}} M_i$ for each action $S$,
specifically, $M_S = 0$ if $\trig{S} = \emptyset$.
To prove the distribution-dependent bound, we will let $M_i=\Delta_{\min}^i$.
To prove the distribution-independent bound, we will let $M_i=M=\tilde{\Theta}(T^{-1/2})$
to balance bounds for $Reg(\{\Delta_{S_t} \ge M_{S_t}\}$ and $Reg(\{\Delta_{S_t} < M_{S_t}\})$.
Implement definition of $\Nt_t$ (Definition~\ref{def:nt}) with
$j_{\max}^i = j_{\max}(M_i) = \left\lceil\log_2 \frac{2BK}{M_i}\right\rceil_0$.
Define
$$\kappa_{j, T}(M, s) = \begin{cases}
4\cdot 2^{-j}B, &\mbox{if } s=0,\\
2B\sqrt{\frac{72\cdot 2^{-j} \ln T}{s}}, &\mbox{if } 1\le s\le \ell_{j, T}(M),\\
0, &\mbox{if } s \ge \ell_{j, T}(M)+1,
\end{cases}$$
where
$$\ell_{j, T}(M)=\left\lfloor\frac{288\cdot 2^{-j} B^2 K^2 \ln T}{M^2}\right\rfloor,$$
and the following lemma explains that $\kappa$ is the contribution to regret.

\begin{mylem} \label{lem:TPMkappa}
	In every run of the CUCB algorithm on a problem instance that satisfies 1-norm TPM bounded smoothness (Condition~\ref{cond:1-normTPM}), for any vector $\{M_i\}_{i\in[m]}$ of positive real numbers and $1\le t \le T$, if $\{\Delta_{S_t} \ge M_{S_t}\}, \lnot \calF_t, \Ns_t$ and $\Nt_t$ hold,
	we have
	$$\Delta_{S_t} \le \sum_{i\in \trig{S_t}} \kappa_{j_i, T}(M_i, N_{i, j_i, t-1}),$$
	where $j_i$ is the index of the TP group with $S_t\in \calS_{i, j_i}$ (See Definition~\ref{def:tpgroup}).
\end{mylem}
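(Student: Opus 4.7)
The plan is to mirror the template of the proof of Theorem~\ref{thm:1-norm.nontriggering}, with two new ingredients brought in by the TPM condition: the triggering probability $p_i^{D,S_t}$ shaping the 1-norm inequality, and the triggering probability groups interacting with $\Nt_t$. First I would apply $\lnot\calF_t$ together with monotonicity (Condition~\ref{cond:monotone})---the premise $\bar\vmu_t\ge\vmu$ is supplied by $\Ns_t$ and the UCB definition---to conclude $r_{S_t}(\bar\vmu_t)\ge\alpha\cdot\opt(\vmu)=r_{S_t}(\vmu)+\Delta_{S_t}$. Plugging this into Condition~\ref{cond:1-normTPM} yields $\Delta_{S_t}\le B\sum_{i\in\trig{S_t}} p_i^{D,S_t}(\bar\mu_{i,t}-\mu_i)$. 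The reverse amortization step comes next: since $\Delta_{S_t}\ge M_{S_t}$, we have $\Delta_{S_t}\le 2\Delta_{S_t}-M_{S_t}$, and using $M_{S_t}\ge M_i$ together with $|\trig{S_t}|\le K$ I would distribute $M_{S_t}$ across the arms to obtain
\[
\Delta_{S_t}\le \sum_{i\in\trig{S_t}}\Bigl[2B\,p_i^{D,S_t}(\bar\mu_{i,t}-\mu_i)-\tfrac{M_i}{K}\Bigr].
\]

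It then suffices to show, arm by arm, that the summand $a_i:=2B\,p_i^{D,S_t}(\bar\mu_{i,t}-\mu_i)-M_i/K$ is bounded by $\kappa_{j_i,T}(M_i,N_{i,j_i,t-1})$. If $a_i\le 0$ the bound is immediate because $\kappa\ge 0$. Otherwise, since $\bar\mu_{i,t}-\mu_i\le 1$, the positivity of $a_i$ forces $p_i^{D,S_t}>M_i/(2BK)$; together with $p_i^{D,S_t}\le 2^{-j_i+1}$ this gives $j_i\le j_{\max}^i=\lceil\log_2(2BK/M_i)\rceil_0$, placing us inside the range where $\Nt_t$ is nontrivial and delivers $T_{i,t-1}\ge\tfrac{1}{3}N_{i,j_i,t-1}\cdot 2^{-j_i}$ whenever its prerequisite $N_{i,j_i,t-1}\cdot 2^{-j_i}\ge 18\ln t$ holds.

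With this pivot I would split on $N:=N_{i,j_i,t-1}$. When $N=0$, the trivial bounds $\bar\mu_{i,t}-\mu_i\le 1$ and $p_i^{D,S_t}\le 2^{-j_i+1}$ give $a_i\le 4\cdot 2^{-j_i}B=\kappa_{j_i,T}(M_i,0)$. When $1\le N\le \ell_{j_i,T}(M_i)$ and the $\Nt_t$ prerequisite fires, I combine $\Nt_t$ with $\Ns_t$'s inequality $\bar\mu_{i,t}-\mu_i\le 2\rho_{i,t}$ to get $\bar\mu_{i,t}-\mu_i\le\sqrt{18\cdot 2^{j_i}\ln t / N}$; multiplying by $2B\,p_i^{D,S_t}\le 4B\cdot 2^{-j_i}$ and using $t\le T$ yields exactly $2B\sqrt{72\cdot 2^{-j_i}\ln T / N}$. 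When the prerequisite fails, i.e.\ $N<18\cdot 2^{j_i}\ln T$, the trivial bound $a_i\le 4B\cdot 2^{-j_i}$ is already dominated by $2B\sqrt{72\cdot 2^{-j_i}\ln T/N}$, as a one-line algebraic check confirms. Finally the case $N>\ell_{j_i,T}(M_i)$ I would rule out: $N>288\cdot 2^{-j_i}B^2K^2\ln T/M_i^2$ combined with $p_i^{D,S_t}>M_i/(2BK)$ verifies the $\Nt_t$ prerequisite, and chasing the constants through gives $\bar\mu_{i,t}-\mu_i<M_i\,2^{j_i}/(4BK)$, hence $a_i<0$, contradicting positivity.

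The main obstacle is orchestrating the interplay between the threshold $j_{\max}^i$, the geometric grouping of triggering probabilities, and the exact constants in $\kappa$ and $\ell$. The critical chain $a_i>0\Rightarrow p_i^{D,S_t}>M_i/(2BK)\Rightarrow j_i\le j_{\max}^i$ is what unlocks $\Nt_t$ and drives the whole case split; once this is in place, the rest is bookkeeping to ensure that the constants $4\cdot 2^{-j}B$, $\sqrt{72}$, and $288$ baked into $\kappa$ and $\ell$ fit together across the sub-cases and, crucially, leave a slack that forces $a_i<0$ in the final case.
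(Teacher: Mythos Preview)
Your proposal is correct and follows essentially the same approach as the paper's proof: the same opening (monotonicity + $\lnot\calF_t$ + $\Ns_t$ + Condition~\ref{cond:1-normTPM} to get $\Delta_{S_t}\le B\sum p_i^{D,S_t}(\bar\mu_{i,t}-\mu_i)$), the same reverse amortization to subtract $M_i/K$ per arm, and the same per-arm case analysis driven by the TP group index and the counter $N_{i,j_i,t-1}$. The only difference is organizational---you branch first on whether the summand $a_i$ is positive (and deduce $j_i\le j_{\max}^i$ from that), whereas the paper branches first on whether $j_i\le j_{\max}^i$; the paper also absorbs your ``prerequisite fails'' sub-case into a single $\min\{\sqrt{\cdots},1\}$ bound rather than splitting it out, but the substance is identical.
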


\begin{proof} 
	The right hand side of the inequality is non-negative, so it holds naturally if $\Delta_{S_t}=0$. We only need to consider $\Delta_{S_t}>0$. 
	By $\Ns_t$ and $\lnot \calF_t$, we have
	$$r_{S_t}(\bar\vmu_t)\ge \alpha\cdot \opt(\bar\vmu_t)\ge \alpha\cdot \opt(\vmu) = r_{S_t}(\vmu) + \Delta_{S_t},$$
	Then by Condition~\ref{cond:1-normTPM},
	\begin{equation}
	\Delta_{S_t}\le r_{S_t}(\bar\vmu_t)-r_{S_t}(\vmu)\le B\sum_{i\in \trig{S_t}} p_i^{D, S_t}(\bar\mu_{i, t} - \mu_i).\label{eq:TPMkappa.applycondition}
	\end{equation}
	We are going to bound $\Delta_{S_t}$ by bounding $p_i^{D, S_t}(\bar\mu_{i, t} - \mu_i)$. 
	But before doing so, we first perform a transformation.
	As we have $\Delta_{S_t} \ge M_{S_t}$, so
	$B\sum_{i\in \trig{S_t}} p_i^{D, S_t}(\bar\mu_{i, t} - \mu_i)\ge \Delta_{S_t} \ge M_{S_t}$.
	We have
	\begin{align}
	\Delta_{S_t}
	&\le B\sum_{i\in \trig{S_t}} p_i^{D, S_t}(\bar\mu_{i, t} - \mu_i)\nonumber\\
	&\le -M_{S_t} + 2B\sum_{i\in \trig{S_t}} p_i^{D, S_t}(\bar\mu_{i, t} - \mu_i)\nonumber\\
	&= 2B\sum_{i\in \trig{S_t}} \left[p_i^{D, S_t}(\bar\mu_{i, t} - \mu_i) - \frac{M_{S_t}}{2B\left|\trig{S_t}\right|}\right] \nonumber\\
	&\le 2B\sum_{i\in \trig{S_t}} \left[p_i^{D, S_t}(\bar\mu_{i, t} - \mu_i) - \frac{M_{S_t}}{2BK}\right]\nonumber\\
	&\le 2B\sum_{i\in \trig{S_t}} \left[p_i^{D, S_t}(\bar\mu_{i, t} - \mu_i) - \frac{M_i}{2BK}\right]. \label{eq:TPMkappa.transform}
	\end{align}
	Then we bound $p_i^{D, S_t}(\bar\mu_{i, t} - \mu_i)$.
	By $\Ns_t$,
	$$\bar\mu_{i, t} - \mu_i < 2\rho_{i, t} = 2\sqrt{\frac{3\ln t}{2T_{i, t-1}}}.$$
	Both $\bar\mu_{i, t}$ and $\mu_i$ are in $[0, 1]$, so $\bar\mu_{i, t} - \mu_i \le 1$.
	We then bound $p_i^{D, S_t} (\bar\mu_{i, t} - \mu_i)$ in different cases.
	
	
	\begin{itemize}
		
		\item {\em Case I: $1 \le j_i \le j_{\max}^i$.}
		Then we have $p_i^{D, S_t} \le 2\cdot 2^{-j_i}$.
		If $\sqrt{\frac{6\ln t}{\frac{1}{3}N_{i, j_i, t-1}\cdot 2^{-j_i}}}\le 1$, by $\Nt_t$,
		$$\bar\mu_{i, t} - \mu_i \le 2\sqrt{\frac{3\ln t}{2T_{i, t-1}}} \le \sqrt{\frac{6\ln t}{\frac{1}{3}N_{i, j_i, t-1}\cdot 2^{-j_i}}},$$
		so
		$$\bar\mu_{i, t} - \mu_i \le \min\left\{\sqrt{\frac{6\ln t}{\frac{1}{3}N_{i, j_i, t-1}\cdot 2^{-j_i}}}, 1\right\},$$
		and
		\begin{align*}
		&p_i^{D, S_t} (\bar\mu_{i, t} - \mu_i)\\
		\le\:& 2\cdot 2^{-j_i}\cdot \min\left\{\sqrt{\frac{6\ln t}{\frac{1}{3}N_{i, j_i, t-1}\cdot 2^{-j_i}}}, 1\right\}\\
		=\:& \min\left\{\sqrt{\frac{72\cdot 2^{-j_i}\ln T}{N_{i, j_i, t-1}}}, 2\cdot 2^{-j_i}\right\}.
		\end{align*}
	If $N_{i, j_i, t-1} \ge \ell_{j_i, T}(M_i) + 1$,
	then $\sqrt{\frac{72\cdot 2^{-j_i}\ln T}{N_{i, j_i, t-1}}} \le \frac{M_i}{2BK}$ and
	$p_i^{D, S_t}(\bar\mu_{i, t} - \mu_i) - \frac{M_i}{2BK} \le 0$.
	If $N_{i, j_i, t-1}=0$, we use the bound
	$p_i^{D, S_t}(\bar\mu_{i, t} - \mu_i) \le 2\cdot 2^{-j_i}$.
	Otherwise, i.e. $1\le N_{i, j_i, t-1} \le \ell_{j_i, T}(M_i)$,
	we use $p_i^{D, S_t}(\bar\mu_{i, t} - \mu_i) \le \sqrt{\frac{72\cdot 2^{-j_i}\ln T}{N_{i, j_i, t-1}}}$.
	Recall the definition of $\kappa_{j, T}(M, s)$,
	then, for $1\le j_i\le j_{\max}^i$, we have
	\begin{equation}
	p_i^{D, S_t}(\bar\mu_{i, t} - \mu_i) - \frac{M_i}{2BK} \le \frac{1}{2B} \kappa_{j_i, T}(M_i, N_{i, j_i, t-1}).
	\label{eq:TPMkappa.cases12}
	\end{equation}
		\item {\em Case II: $j_i \ge j_{\max}^i+1 = \left\lceil\log_2 \frac{2BK}{M_i}\right\rceil_0 + 1$.}
		Then we have
		\begin{align*}
		p_i^{D, S_t} (\bar\mu_{i, t} - \mu_i) &\le p_i^{D, S_t} \le 2\cdot 2^{-j_i}\\
		&\le 2\cdot 2^{-\log_2 \frac{2BK}{M_i}-1} = \frac{M_i}{2BK}.
		\end{align*}
		So
		\begin{equation}
		p_i^{D, S_t}(\bar\mu_{i, t} - \mu_i) - \frac{M_i}{2BK} \le 0 \le \frac{1}{2B} \kappa_{j_i, T}(M_i, N_{i, j_i, t-1}).
		\label{eq:TPMkappa.case3}
		\end{equation}
		Combining Eq. \eqref{eq:TPMkappa.transform}, \eqref{eq:TPMkappa.cases12} and \eqref{eq:TPMkappa.case3}, we conclude the proof with
		\[\Delta_{S_t} \le 2B\sum_{i\in \trig{S_t}} \left[p_i^{D, S_t}(\bar\mu_{i, t} - \mu_i) - \frac{M_i}{2BK}\right]\\
		\le \sum_{i\in \trig{S_t}} \kappa_{j_i, T}(M_i, N_{i, j_i, t-1}).\]
	\end{itemize}
\end{proof}

We remark that the proof of Lemma~\ref{lem:TPMkappa}, in particular the derivation leading to Eq.~\eqref{eq:TPMkappa.transform}
together with the argument in the paragraph before Eq.\eqref{eq:TPMkappa.cases12},	
contains the reverse amortization trick we mentioned in the main text.
In particular, by the derivation of Eq.~\eqref{eq:TPMkappa.transform}, the contribution of every arm $i$ to regret
$\Delta_{S_t}$ is accounted as $2B \left[p_i^{D, S_t}(\bar\mu_{i, t} - \mu_i) - \frac{M_i}{2BK}\right]$.
Then by the argument in the paragraph before Eq.\eqref{eq:TPMkappa.cases12}, if $N_{i, j_i, t-1} \ge \ell_{j_i, T}(M_i) + 1$,
meaning that $i$ has been triggered by actions in group $j_i$ for at least $\ell_{j_i, T}(M_i) + 1$, its error
$|\bar\mu_{i, t} - \mu_i|$ would be small enough such that its contribution to the regret $\Delta_{S_t}$ is not positive.
This trick eliminates the need of summing up small errors from many sufficiently sampled arms, 
leading to a tighter regret bound.
The same trick can be seen in Appendix~\ref{app:notriggering},
Eq.\eqref{eq:nontriggering.transform} and the derivation that follows for the no triggered arm case.


\qinshi{Do we capitalize starting letter of each word in titles of definitions?}

\qinshi{$+6am$ is outside the summation.}

\begin{mylem} \label{lem:1-norm.insuf}
	For the CUCB algorithm on a problem instance that satisfies TPM bounded smoothness with 1-norm (Condition~\ref{cond:1-normTPM}),
	$$Reg(\{\Delta_{S_t} \ge M_{S_t}\} \land \lnot \calF_t \land \Ns_t \land \Nt_t)
	\le \sum_{i\in [m]} \frac{576B^2K\ln T}{M_i} + 4Bm.$$
\end{mylem}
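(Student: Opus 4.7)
The plan is to apply Lemma~\ref{lem:TPMkappa} in each round $t$ where the filtering event holds, swap the order of summation from rounds to (arm, triggering-probability group) pairs, and then use the geometric decay in $2^{-j}$ to collapse the sum over $j$.

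First I would apply Lemma~\ref{lem:TPMkappa} under the filter $\{\Delta_{S_t} \ge M_{S_t}\} \land \lnot \calF_t \land \Ns_t \land \Nt_t$ to obtain
\[
Reg(\{\Delta_{S_t} \ge M_{S_t}\} \land \lnot \calF_t \land \Ns_t \land \Nt_t) \le \E\!\left[\sum_{t=1}^T \sum_{i\in \trig{S_t}} \kappa_{j_i, T}(M_i, N_{i, j_i, t-1})\right],
\]
where $j_i$ is the unique index with $S_t \in \calS_{i, j_i}^D$ in round $t$. Now I reverse the order of summation: for each arm $i$ and each group index $j\ge 1$, the counter $N_{i,j,t-1}$ is incremented by exactly one in every round $t$ with $S_t \in \calS_{i,j}^D$, and $\{\calS_{i,j}^D\}_{j\ge 1}$ partitions the actions with $p_i^{D,S}>0$. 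Hence
\[
\sum_{t:\, S_t \in \calS_{i,j}^D} \kappa_{j, T}(M_i, N_{i, j, t-1}) \;=\; \sum_{s=0}^{N_{i,j,T}-1} \kappa_{j, T}(M_i, s) \;\le\; \sum_{s=0}^{\ell_{j, T}(M_i)} \kappa_{j, T}(M_i, s),
\]
since $\kappa_{j,T}(M_i,s)=0$ for $s\ge \ell_{j,T}(M_i)+1$ by definition.

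Next I would bound the inner sum. The $s=0$ term contributes $4\cdot 2^{-j} B$. For the remaining terms, using $\sum_{s=1}^{L} 1/\sqrt{s} \le 2\sqrt{L}$ together with $\ell_{j, T}(M_i) \le 288\cdot 2^{-j} B^2 K^2 \ln T/M_i^2$ and $\sqrt{72\cdot 288}=144$, I get
\[
\sum_{s=0}^{\ell_{j, T}(M_i)} \kappa_{j, T}(M_i, s) \;\le\; 4\cdot 2^{-j} B \;+\; 4B\sqrt{72\cdot 2^{-j}\ln T \cdot \ell_{j, T}(M_i)} \;\le\; 2^{-j}\!\left(4B + \frac{576 B^2 K \ln T}{M_i}\right).
\]
Summing the geometric series $\sum_{j\ge 1} 2^{-j} = 1$ gives a per-arm bound of $4B + 576 B^2 K \ln T / M_i$, and summing over $i\in[m]$ yields the stated $\sum_{i\in[m]} 576 B^2 K \ln T / M_i + 4Bm$.

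The only subtle point is that the sum over triggering groups $j$ ranges over all positive integers (including $j > j_{\max}^i$, where Lemma~\ref{lem:TPMkappa} only trivially gives $\kappa$ as an upper bound), and thus the argument needs this infinite sum to converge. This is precisely why the geometric binning in Definition~\ref{def:tpgroup} is chosen: every quantity in $\kappa_{j,T}(M_i,s)$, including both the $s=0$ constant and the bound on $\ell_{j,T}(M_i)$, carries a factor of $2^{-j}$, so the tail sum across groups collapses to a bounded constant. The rest of the derivation is routine arithmetic in tracking the constant $576$.
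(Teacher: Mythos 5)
Your proposal is correct and follows essentially the same route as the paper's own proof: apply Lemma~\ref{lem:TPMkappa} under the filter, reindex the sum by arm and triggering probability group using the counters $N_{i,j}$, truncate at $\ell_{j,T}(M_i)$, bound the inner sum by $2\sqrt{\ell_{j,T}(M_i)}$ via the integral comparison, and collapse the sum over $j$ with the geometric factor $2^{-j}$ (including your correct observation that groups beyond $j_{\max}^i$ are covered because every term carries $2^{-j}$). The constants and the final bound match the paper exactly, so no changes are needed.
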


\begin{proof}
	We bound $Reg(\{\Delta_{S_t} \ge M_{S_t}\} \land \lnot \calF_t \land \Ns_t \land \Nt_t)$
	with Lemma~\ref{lem:TPMkappa}.
	In every run,
	\begin{align}
	\sum_{t=1}^T \I(\{\Delta_{S_t} \ge M_{S_t}\} \land \lnot \calF_t \land \Ns_t \land \Nt_t) \Delta_{S_t}
	&\le \sum_{t=1}^T \sum_{i\in \trig{S_t}} \kappa_{j_i, T}(M_i, N_{i, j_i, t-1})\nonumber\\
	&= \sum_{i\in [m]} \sum_{j=1}^{+\infty} \sum_{s=0}^{N_{i, j, T}-1} \kappa_{j, T}(M_i, s),\label{eq:1-norm.byN}
	\end{align}
	where \eqref{eq:1-norm.byN} is due to $N_{i, j_i}$ is increased if and only if $i\in \trig{S_t}$.
	For every arm $i$ and $j\ge 1$,
	\begin{align}
	\sum_{s=0}^{N_{i, j, T}-1} \kappa_{j, T}(M_i, s)
	&\le \sum_{s=0}^{\ell_{j, T}(M_i)} \kappa_{j, T}(M_i, s)\label{eq:1-norm.replacewithell}\\
	&= \kappa_{j, T}(M_i, 0) + \sum_{s=1}^{\ell_{j, T}(M_i)} \kappa_{j, T}(M_i, s)\nonumber\\
	&= \kappa_{j, T}(M_i, 0) + \sum_{s=1}^{\ell_{j, T}(M_i)} 2B\sqrt{\frac{72\cdot 2^{-j_i} \ln T}{s}}\nonumber\\
	&\le \kappa_{j, T}(M_i, 0) + 4B\sqrt{72\cdot 2^{-j_i}\ln T}
	\sqrt{\ell_{j, T}(M_i)},\label{eq:1-norm.integral}
	\end{align}
	where\eqref{eq:1-norm.replacewithell} is due to $\kappa_{j, T}(s) = 0$ when $s \ge \ell_{j, T}(M)+1$,
	and \eqref{eq:1-norm.integral} is due to the fact that, for every natural number integer $n$,
	$$\sum_{s=1}^n \sqrt{\frac{1}{s}} \le \int_{s=0}^n \sqrt{\frac{1}{s}}\ \mathrm{d}s
	= 2\sqrt{n}.$$
	
	By definition, $\ell_{j, T}(M_i) \le \frac{288\cdot 2^{-j_i} B^2 K^2 \ln T}{M_i^2}$, so
	\begin{align*}
	\eqref{eq:1-norm.integral}
	&\le \kappa_{j, T}(M, 0) + 4B\sqrt{72\cdot 2^{-j_i}\ln T}\sqrt{\frac{288\cdot 2^{-j_i} B^2 K^2 \ln T}{M_i^2}}\\
	&= 4\cdot 2^{-j}B + \frac{576\cdot 2^{-j_i}B^2K\ln T}{M_i}.
	\end{align*}
	Then we continue \eqref{eq:1-norm.byN} with
	\begin{align*}
	\eqref{eq:1-norm.byN}
	&\le \sum_{i\in [m]} \sum_{j=1}^{+\infty} \left(4\cdot 2^{-j}B + \frac{576\cdot 2^{-j_i}B^2K\ln T}{M_i}\right)\\
	&= \sum_{i\in [m]}\left[\left(4B + \frac{576B^2K\ln T}{M_i}\right)\cdot \sum_{j=1}^{+\infty} 2^{-j}\right]\\
	&= \sum_{i\in [m]}\left(4B + \frac{576B^2K\ln T}{M_i}\right)\\
	&= \sum_{i\in [m]}\frac{576B^2K\ln T}{M_i} + 4Bm.
	\end{align*}
	By taking expectation over all possible runs,
	\begin{align*}
	Reg(\{\Delta_{S_t} \ge M_{S_t}\} \land \lnot \calF_t \land \Ns_t \land \Nt_t)
	& = \E[\I(\{\Delta_{S_t} \ge M\} \land \lnot \calF_t \land \Ns_t \land \Nt_t) \Delta_{S_t}] \\
	& \le \sum_{i\in [m]}\frac{576B^2K\ln T}{M_i} + 4Bm.
	\end{align*}
\end{proof}

\begin{proof}[Proof of Theorem~\ref{thm:1-normTPM}]
	Recall Definition~\ref{def:eventregret}, the definition of event-filtered regret:
	$$Reg^A_{\vmu}(T, \{\calE_t\}_{t\ge 1}) = \E\left[\sum_{t=1}^T \I(\calE_t)(\alpha \cdot
	\opt_{\vmu}-r_{S_t^A}(\vmu))\right]
	= T\cdot \alpha \cdot \opt_{\vmu} - \E\left[\sum_{t=1}^T \I(\calE_t)(r_{S_t^A}(\vmu))\right].$$
	Then for filtered regret with null event (the event that is always true), 
	we have $Reg(\{\}) = Reg_{\vmu,\alpha,\beta} + (1-\beta) T \cdot \alpha \cdot \opt_{\vmu}$.
	We divide this filtered regret into parts as
	\begin{align}
	Reg(\{\}) & \le Reg(\{\Delta_{S_t} < M_{S_t}\}) + Reg(\calF_t) + Reg(\lnot \Ns_t) + Reg(\lnot \Nt_t) 
		\nonumber \\
		& \quad \quad + Reg(\{\Delta_{S_t} \ge M_{S_t}\} \land \lnot \calF_t \land \Ns_t \land \Nt_t).
	\label{eq:1-norm.totalreg}
	\end{align}
	
	By definition of filtered regret, $Reg(\calE_t) \le \sum_{t=1}^T \I\{\calE_t\} \Delta_{S_t} \le \sum_{t=1}^T \Pr\{\calE_t\} \cdot \Delta_{\max}$,
	then
	\begin{align}
	Reg(\calF_t) &\le \sum_{t=1}^T \Pr\{\calF_t\} \Delta_{\max}
	= (1-\beta)T \cdot \Delta_{\max},
	\label{eq:1-normTPM.calF}\\
	Reg(\lnot \Ns_t) &\le \sum_{t=1}^T \Pr\{\lnot \Ns_t\} \Delta_{\max}
	\le \frac{\pi^2}{3} \cdot m \cdot \Delta_{\max},
	\label{eq:1-normTPM.ns}\\
	Reg(\lnot \Nt_t) &\le \sum_{t=1}^T \Pr\{\lnot \Nt_t\} \Delta_{\max}
	\le \frac{\pi^2}{6} \cdot \sum_{i\in[m]}j_{\max}^i \cdot \Delta_{\max}.
	\label{eq:1-normTPM.nt}
	\end{align}
	By Lemma~\ref{lem:1-norm.insuf},
	$$Reg(\{\Delta_{S_t} \ge M_{S_t}\} \land \lnot \calF_t \land \Ns_t \land \Nt_t)
	\le \sum_{i\in[m]} \frac{576B^2K\ln T}{M_i} + 4Bm.$$
	
	Take $M_i=\Delta_{\min}^i$. If $\Delta_{S_t} < M_{S_t}$, then $\Delta_{S_t}=0$,
	since we have either $\trig{S_t} = \emptyset$ or $\Delta_{S_t} < M_{S_t} \le M_i$ for some $i\in \trig{S_t}$. So $Reg(\{\Delta_{S_t} < M_{S_t}\}) = 0$.
	Then we have
	\begin{equation}
	Reg(\{\}) \le (1-\beta)T \cdot \Delta_{\max} + \sum_{i\in[m]} \frac{576B^2K\ln T}{\Delta_{\min}^i} + 4Bm
	+ \frac{\pi^2}{6} \cdot \sum_{i\in[m]}\left(j_{\max}(\Delta_{\min}^i) + 2\right) \cdot \Delta_{\max},
	\label{eq:1-normTPM.nulleventdep}
	\end{equation}
	where we abuse the notation of $j_{\max}(M) = \left\lceil\log_2 \frac{2BK}{M_i}\right\rceil_0$.
	
	On the other hand, take $M_i=M=\sqrt{(576B^2mK\ln T)/T}$, then $\Delta_{S_t}$ is also $M$ for every action $S_t$ that $\trig{S_t}$ is non-empty.
	We bound $Reg(\{\Delta_{S_t} < M_\})$ with
	$$Reg(\{\Delta_{S_t} < M_{S_t}\}) = \sum_{t=1}^T \I\{\Delta_{S_t} < M_{S_t}\} \Delta_{S_t}
	\le \sum_{t=1}^T \I\{\Delta_{S_t} < M_{S_t}\} M \le TM.$$
	So the filtered regret with null event is bounded by
	\begin{align}
	Reg(\{\}) &\le (1-\beta)T \cdot \Delta_{\max} + \frac{576B^2mK\ln T}{M} + 4Bm + TM
	+ \frac{\pi^2}{6} \cdot \left(j_{\max}(M) + 2\right) \cdot m \cdot \Delta_{\max} \nonumber\\
	&= (1-\beta)T \cdot \Delta_{\max} + \frac{576B^2mK\ln T}{\sqrt{(576B^2mK\ln T)/T}} + 4Bm + T\sqrt{(576B^2mK\ln T)/T} \nonumber\\
	&\qquad+ \frac{\pi^2}{6} \cdot \left(j_{\max}(M) + 2\right) \cdot m \cdot \Delta_{\max} \nonumber\\
	&\le (1-\beta)T \cdot \Delta_{\max} + 48B\sqrt{mKT\ln T} + 4Bm
	+ \frac{\pi^2}{6} \cdot \left(j_{\max}(M) + 2\right) \cdot m \cdot \Delta_{\max}.
	\label{eq:1-normTPM.nulleventind}
	\end{align}
	
	Since $Reg_{\vmu, \alpha, \beta} = Reg(\{\}) - (1-\beta)T \cdot \alpha \cdot \opt_{\vmu} \le Reg(\{\}) - (1-\beta)T \cdot \Delta_{\max}$, \eqref{eq:1-normTPM.nulleventdep} implies \eqref{eq:1-normTPM.dep} and \eqref{eq:1-normTPM.nulleventind} implies \eqref{eq:1-normTPM.weakind}.
\end{proof}

\subsubsection{Further improvement on distribution-independent upper bound}
\label{sec:improveind}

We now prove the tighter distribution-independent bound (Eq.~\eqref{eq:1-normTPM.ind})
	without going through distribution-dependent bound.
We start with 
\begin{equation}
\Delta_{S_t}\le B\sum_{i\in \trig{S_t}} p_i^{D, S_t}(\bar\mu_{i, t} - \mu_i)\le B\sum_{i\in \trig{S_t}}p_i^{D, S_t}\min\left\{1, 2\sqrt{\frac{3\ln T}{2T_{i, t-1}}}\right\},
\tag{\ref{eq:TPMkappa.applycondition}}
\end{equation}
when events $\lnot \calF_t$ and $\Ns_t$ are true.
Use $j_{\max}=\left\lceil\log_2\frac{T}{18\ln T}\right\rceil_0$ to define $\Nt_t$.
When $\Nt_t$,
	$\sqrt{\frac{3\ln T}{2T_{i, t-1}}}\le \sqrt{\frac{18\cdot 2^{-j_i}\ln T}{N_{i, j_i, t-1}}}$
	if $j_i\le j_{\max}$ by definition of $\Nt_t$,
	then
	$p_i^{D, S_t}\min\left\{1, 2\sqrt{\frac{3\ln T}{2T_{i, t-1}}}\right\}\le \min\left\{2^{-j_i+1}, \sqrt{\frac{72\cdot 2^{-j_i}\ln T}{N_{i, j_i, t-1}}}\right\}$
	as $p_i^{D, S_t} \le 2^{-j_i+1}$.
If $j_i>j_{\max}$, we still have $p_i^{D, S_t}\le 2^{-j_i+1}$. Because $N_{i, j_i, t-1}<T$, we have $2^{j_i+1}\ge \sqrt{\frac{72\cdot 2^{-j_i}\ln T}{N_{i, j_i, t-1}}}$.
The conclusion is
\begin{equation}
p_i^{D, S_t}\min\left\{1, 2\sqrt{\frac{3\ln T}{2T_{i, t-1}}}\right\}\le \min\left\{2^{-j_i+1}, \sqrt{\frac{72\cdot 2^{-j_i}\ln T}{N_{i, j_i, t-1}}}\right\} \label{eq:1-norm2.kappa}
\end{equation}
always holds,
	regardless $j\le j_{\max}$ or $j>j_{\max}$.
So we define $\kappa$ as following in this proof:
\[\kappa_{j, T}(s)=\min\left\{2B\cdot 2^{-j}, B\sqrt{\frac{72\cdot 2^{-j}\ln T}{s}}\right\}.\]

According to \eqref{eq:TPMkappa.applycondition} and \eqref{eq:1-norm2.kappa},
\begin{align}
Reg(\lnot \calF_t \land \Ns_t \land \Nt_t)
&\le \sum_{t=1}^T \I(\lnot \calF_t \land \Ns_t \land \Nt_t) \Delta_{S_t}\nonumber\\
&\le \sum_{t=1}^T \sum_{i\in \trig{S_t}} \kappa_{j_i, T}(N_{i, j_i, t-1})\nonumber\\
&= \sum_{i\in [m]} \sum_{j=1}^{+\infty} \sum_{s=0}^{N_{i, j, T}-1} \kappa_{j, T}(s).\label{eq:1-norm2.byN}
\end{align}
In each round, there are at most $K$ of the counters $\{N_{i, j}\}_{i\in [m], j\in \mathbb{N}^+}$ are increased by 1,
	so $\sum_{i\in [m]}\sum_{j=1}^{+\infty} N_{i, j, T}\le KT$.
To maximize the right hand side of $\eqref{eq:1-norm2.byN}$
	is to choose $KT$ largest elements from the multiset $\{\kappa_{j, T}(s)\}_{i\in [m], j\in \mathbb{N}^+, s\in \mathbb{N}}$,
	consider the continuous version below which is more tractable than finding $KT$ largest elements:
\begin{align}
\sum_{i\in [m]} \sum_{j=1}^{+\infty} \sum_{s=0}^{N_{i, j, T}-1} \kappa_{j, T}(s)
&\le \sum_{i\in [m]}\sum_{j=1}^{+\infty}\left(\kappa_{j, T}(0)+\sum_{s=1}^{\max\{0, N_{i, j, T}-1\}}\kappa_{j, T(s)}\right)\nonumber\\
&\le 2Bm + \sum_{i\in [m]} \sum_{j=1}^{+\infty} \int_{s=0}^{N_{i, j, T}} \kappa_{j, T}(s)\mathrm{d}s\nonumber\\
&\le 2Bm + \max_{\sum_{i, j}x_{i, j} \le KT} \left[ \sum_{i\in [m]} \sum_{j=1}^{+\infty} \int_{s=0}^{x_{i, j}}  B\sqrt{\frac{72\cdot 2^{-j}\ln T}{s}}\mathrm{d}s \right].
\label{eq:1-norm2.sumofint}
\end{align}
To maximize the above sum of integral, we must have $B\sqrt{\frac{72\cdot 2^{-j}\ln T}{x_{i, j}}}=B\sqrt{\frac{72\cdot 2^{-j'}\ln T}{x_{i', j'}}}$ for every $i,i'\in m$, $j, j'\in \mathbb{N}^+$.
The solution is $x_{i, j} = 2^{-j}KT/m$.
By taking the solution into \eqref{eq:1-norm2.sumofint}, we have
\begin{align}
\eqref{eq:1-norm2.sumofint}
&= 2Bm + \sum_{i\in [m]} \sum_{j=1}^{+\infty} \int_{s=0}^{2^{-j}KT/m}  B\sqrt{\frac{72\cdot 2^{-j}\ln T}{s}}\mathrm{d}s\nonumber\\
&= 2Bm + \sum_{i\in [m]} \sum_{j=1}^{+\infty} B\sqrt{144\cdot 2^{-j}\cdot 2^{-j}KT\ln T/m}\nonumber\\
&= 2Bm + 12B\sqrt{mKT\ln T}.
\end{align}
Combining with Lemmas \ref{lem:ns} \& \ref{lem:nt}, we have
\[Reg(\{\})\le (1-\beta)T\cdot \Delta_{\max} + 12B\sqrt{mKT\ln T}
+ \left(\left\lceil\log_2 \frac{T}{18\ln T}\right\rceil_0+2\right) \cdot m \cdot \frac{\pi^2}{6}\cdot \Delta_{\max}
+2Bm,\]
implying \eqref{eq:1-normTPM.ind}.

\subsection{Refining Parameter $B$} \label{app:refineB}
We can refine 1-norm bounded smoothness (Condition~\ref{cond:1-norm}) by replacing the parameter $B$ with a separate parameter $B_i$ for each arm $i$.

\begin{mycond}[Refined 1-Norm TPM Bounded Smoothness]
	\label{cond:refined.1-normTPM}
	We say that a CMAB-T problem instance satisfies refined 1-norm TPM bounded smoothness, if 
	there exists $B_i\in \R^{+}$ for every arm $i$ (referred as the {\em bounded smoothness constant}) such that,
	for any two distributions $D, D'\in \calD$ with expectation vectors $\vmu$ and $\vmu'$, and any action $S$,
	we have $|r_S(\vmu)-r_S(\vmu')|\le \sum_{i\in[m]}B_ip_i^{D, S}|\mu_i-\mu'_i|$.
\end{mycond}

Then in Theorem~\ref{thm:1-normTPM}, 
	we may replace $B$ with $B_i$ in distribution-dependent bound and replace $B\sqrt{m}$ with $\sqrt{\sum_{i\in[m]}B_i^2}$ 
	in distribution-independent bound, \chgins{except that for the last constant term we replace $Bm$ with $\sum_{i\in [m]} B_i$}.
More specifically, we have
(1) if $\Delta_{\min} > 0$, we have distribution-dependent bound
	\begin{align}
	&Reg_{\vmu, \alpha, \beta}(T) \le \sum_{i\in[m]} \frac{576B_i^2K\ln T}{\Delta_{\min}^i} 
	+ \sum_{i\in[m]}\left(\left\lceil\log_2 \frac{2B_iK}{\Delta_{\min}^i}\right\rceil_0 + 2\right)
	\cdot \frac{\pi^2}{6} \cdot \Delta_{\max} +4\sum_{i\in [m]}B_i;
	\end{align}
(2) we have distribution-independent bound
	\begin{align}
	&Reg_{\vmu, \alpha, \beta}(T) \le 12\sqrt{\sum_{i\in [m]}B_i^2KT\ln T}
	+ \left(\left\lceil\log_2 \frac{T}{18\ln T}\right\rceil_0+2\right) \cdot m \cdot \frac{\pi^2}{6}\cdot \Delta_{\max}
	+2\sum_{i\in [m]}B_i.
	\end{align}

The proof of this refinement is almost straightforward replacement of $B$ with $B_i$, except a few points that we want to highlight.
The definition of $\kappa$ and $\ell$ will be
$$\kappa_{i, j, T}(M, s) = \begin{cases}
4\cdot 2^{-j}B_i, &\mbox{if } s=0,\\
2B_i\sqrt{\frac{72\cdot 2^{-j} \ln T}{s}}, &\mbox{if } 1\le s\le \ell_{i, j, T}(M),\\
0, &\mbox{if } s \ge \ell_{i, j, T}(M)+1,
\end{cases}$$
where  \wei{Should above $\ell_{j, T}(M)$ be $\ell_{i, j, T}(M)$? I made the change}
$$\ell_{i, j, T}(M)=\left\lfloor\frac{288\cdot 2^{-j} B_i^2 K^2 \ln T}{M^2}\right\rfloor.$$
To maximize the sum of integral in \eqref{eq:1-norm2.sumofint} \chgins{(with $B$ replaced by $B_i$)},
	we need $B_i\sqrt{\frac{72\cdot 2^{-j}\ln T}{x_{i,j}}}=B_{i'}\sqrt{\frac{72\cdot 2^{-j'}\ln T}{x_{i',j'}}}$ for every $i,i'\in [m]$ and $j,j'\in \mathbb{N}^+$.
So $x_{i, j}\propto 2^{-j}B_i^2$,
and then $x_{i, j}=2^{-j}B_i^2KT/\sum_{i\in[m]}B_i^2$.

\section{Proofs for Applications of CMAB-T (Lemmas~\ref{lem:cascading} and~\ref{lem:IMbandit} in Section~\ref{sec:app})}
\subsection{Proof of Lemma~\ref{lem:cascading}}
\label{sec:proof.cascading}
\begin{proof}
	Let $S$ be an action. We regard $S$ as a permutation of $k$ of the arms.
	Without loss of generality, we may assume $S=(1,\dots,k)$ for some $k\le K$.
	For convenience, we use $p_i^{\vmu, S}$ instead of $p_i^{D, S}$,
		as arms are independent Bernoulli variables so that $D$ can be determined by $\vmu$.
	For an arm $i > k$, $i$ will not be triggered by action $S$, and thus
	$p_i^{\vmu, S} = 0$.
	The reward also does not depend on those arms.
	So we may only consider the arms $1, \dots, k$.
	For convenience, we only list the expectations of arms in $S$, so that $\vmu=(\mu_1, \dots, \mu_k)$ and
	$\vmu'=(\mu_1', \dots, \mu_k')$.
	
	Informally speaking, we can change the expectation of the arms from $\mu_i$ to $\mu_i'$, 
		in the reverse order from $k$ to $1$. 
	Changing the expectation of an arm $j$ does not affect the triggering probability
		of an arm $i$ ordered in front of $j$, i.e. $i < j$.
	And when changing an arm from $\mu_i$ to $\mu'_i$,
		the reward changes by at most $p_i^{\vmu,S}|\mu_i-\mu'_i|$.
	Therefore the total difference of reward is at most $\sum_{i=1}^k p_i^{\vmu,S}|\mu_i-\mu'_i|$.
	
	Formally, for the conjunctive cascading bandit, $r_S(\vmu)=\prod_{j=1}^k \mu_j$, and $p_i^{\vmu, S}=\prod_{j=1}^{i-1}\mu_j$ for $i=1,\dots, k$.
	For every $j =0, 1, \ldots, k$, let
	$$\vmu^{(j)}=(\mu_1, \dots, \mu_j, \mu'_{j+1}, \dots, \mu'_k),$$
	specifically, $\vmu^{(k)}=\vmu$, $\vmu^{(0)}=\vmu'$.
	Then,
	\begin{align*}
	\left|r_S(\vmu^{(j)})-r_S(\vmu^{(j-1)})\right|
	&= \left|\prod_{i=1}^k \mu^{(j)}_i-\prod_{i=1}^k \mu^{(j-1)}_i\right|\\
	&= \prod_{i, i\ne j} \mu^{(j)}_i \left|\mu^{(j)}_j - \mu^{(j-1)}_j\right|\\
	&\le \prod_{i=1}^{j-1} \mu^{(j)}_i \left|\mu^{(j)}_j - \mu^{(j-1)}_j\right|\\
	&= \prod_{i=1}^{j-1} \mu_i \left|\mu_j - \mu'_j\right|\\
	&= p_j^{\vmu, S}\left|\mu_j - \mu'_j\right|,
	\end{align*}
	\begin{align*}
	\left|r_S(\vmu)-r_S(\vmu')\right|
	&= \left|r_S(\vmu^{(k)})-r_S(\vmu^{(0)})\right|\\
	&\le \sum_{j=1}^k \left|r_S(\vmu^{(j)})-r_S(\vmu^{(j-1)})\right|\\
	&\le \sum_{j=1}^k p_j^{\vmu, S}\left|\mu_j - \mu'_j\right|.
	\end{align*}
	For the disjunctive case, let $\lambda_i = 1-\mu_i$ for $i\in [m]$.
	Then we have
	$r_S(\vmu) = 1 - \prod_{j=1}^{k} \lambda_i$, and $p_i^{\vmu,S} = \prod_{j=1}^{i-1} \lambda_j$.
	The rest analysis follows the same pattern as the conjunctive case.
\end{proof}

\subsection{Proof of Lemma~\ref{lem:IMbandit}}

\newcommand{\vx}{\ensuremath{\boldsymbol x}}
\newcommand{\rootnode}{\mathrm{root}}
\newcommand{\Path}{\mathrm{Path}}
\newcommand{\Node}{\mathrm{Node}}
\newcommand{\Edge}{\mathrm{Edge}}
\newcommand{\Parent}{\mathrm{Parent}}

\subsubsection{Sufficient Condition}

In influence maximization, there is a directed graph $G=(V, E)$.
For convenience, we use an edge $e$ as the index, e.g. $\mu_e$.
In this application, action $S$ is a set of at most $k$ nodes, so we also
	interpret $S$ as a set of nodes.

Recall TPM bounded smoothness (Condition~\ref{cond:1-normTPM}).
The formula that we need to satisfy is
\begin{equation}
|r_S(\vmu)-r_S(\vmu')|\le B \sum_{e\in E}p_e^{\vmu, S}|\mu_e-\mu'_e|,
\label{eq:IMbandit.TPM}
\end{equation}
where $B=\max_{u\in V} |\{v\in V\mid v\text{ can be reached from }u\} |$ for influence maximization bandit,
\chgins{and $p_e^{\vmu, S}$ stands for $p_e^{D, S}$ as $D$ can be uniquely determined by $\vmu$}.

Let $r_S^{v}(\vmu)$ be the probability that $v$ is activated.
We claim that if for every node $v$ and every $\vmu$ and $\vmu'$ vectors, we have
\begin{equation} \label{eq:IMbandit.nodeTPM}
|r_S^{v}(\vmu)-r_S^{v}(\vmu')|\le \sum_{e\in E}p_e^{\vmu, S}|\mu_e-\mu'_e|,
\end{equation}
Then we have Inequality~\eqref{eq:IMbandit.TPM}.
The reason is as follows.
First, we show that Inequality~\eqref{eq:IMbandit.nodeTPM} holds for 
	all $\vmu$ and $\vmu'$ is equivalent to 
$|r_S^{v}(\vmu)-r_S^{v}(\vmu')|\le \sum_{e\in E, e\text{ can reach }v}p_e^{\vmu, S}|\mu_e-\mu'_e|$ for all $\vmu$ and $\vmu'$.
In fact, the direction from the above
	inequality to Inequality~\eqref{eq:IMbandit.nodeTPM} is trivial.
For the reverse direction, 
	let $\vmu''$ be an expectation vector such that for every edge $e$ that
	can reach $v$, $\mu''_e = \mu_e'$, and for every edge $e$ that cannot reach $v$,
	$\mu''_e = \mu_e$.
Since the $r_S^{v}(\vmu')$ is only affected by edges that can reach $v$, we have
	$r_S^{v}(\vmu') = r_S^{v}(\vmu'')$.
Then, we have $|r_S^{v}(\vmu)-r_S^{v}(\vmu')| = |r_S^{v}(\vmu)-r_S^{v}(\vmu'')| 
	\le \sum_{e\in E}p_e^{\vmu, S}|\mu_e-\mu''_e|
	= \sum_{e\in E, e\text{ can reach }v}p_e^{\vmu, S}|\mu_e-\mu'_e|$.
Next, assuming $|r_S^{v}(\vmu)-r_S^{v}(\vmu')|\le \sum_{e\in E, e\text{ can reach }v}p_e^{\vmu, S}|\mu_e-\mu'_e|$ holds for all $v\in V$, we have
\begin{align*}
|r_S(\vmu)-r_S(\vmu')| & = 
|\sum_{v\in V} r_S^{v}(\vmu) - \sum_{v\in \Gamma(S)} r_S^{v}(\vmu')| \\
& \le \sum_{v\in V} | r_S^{v}(\vmu)-r_S^{v}(\vmu')| \\
& \le  \sum_{v\in V}\ \  \sum_{e\in E, e\text{ can reach }v}p_e^{\vmu, S}|\mu_e-\mu'_e| \\
& = \sum_{e\in E}\ \  \sum_{v\in V, v\text{ can be reached from }e} p_e^{\vmu, S}|\mu_e-\mu'_e| \\
&\le B \sum_{e\in E}p_e^{\vmu, S}|\mu_e-\mu'_e|.
\end{align*}
Thus, Inequality~\eqref{eq:IMbandit.TPM} holds.

Furthermore, we argue that it is sufficient to show that 		
	Inequality~\eqref{eq:IMbandit.nodeTPM} holds
	when 
	(1) $\vmu \le \vmu'$, i.e. for every edge $e$, $\mu_e\le \mu'_e$,; and
	(2) $|S|=1$.
The first condition is a straightforward conclusion from the Monotonicity 
	condition (Condition ~\ref{cond:monotone}).
For the second condition, we may assume the seed set $S$ consists of only one node
	without loss of generality.
Otherwise, we may add a super seed node $s^{\circ}$ and add edges from $s^{\circ}$ to $s$ and let $\mu_{(s^{\circ}, s)}=\mu'_{(s^{\circ}, s)}=1$ for every node $s$ in $S$.

Therefore, in the rest of the proof of Lemma~\ref{lem:IMbandit}, 
	we prove that the influence maximization bandit satisfies 
	Inequality~\eqref{eq:IMbandit.nodeTPM} for $\vmu \le \vmu'$ and $|S|=1$.
Let $s$ be the single seed node, and $S = \{s\}$.

\subsubsection{Paths}
In this subsection, we define an order of paths and assign the influence to the smallest path.
Consider all the paths from $s$ to $v$.
A path $L$ from $s$ to $v$ is a sequence of edges $(e_1=(s, u_1), e_2=(u_1, u_2), \dots, e_{|L|}=(u_{|L|-1}, v))$.
A simple path is a path that $s, v, u_1, \dots, u_{|L|-1}$ are distinct.

We call each possible value of random vector $X$ an outcome and denote it with 
	vector $\vx \in \{0, 1\}^m$.
We say an edge $e$ is {\em live} (with respect to $\vx$) if the corresponding component of $\vx$ is 1, i.e. influence can propagate through $e$ with the propagation under $\vx$.
Thus, connecting with the terminology in the influence maximization literature
	\cite{kempe03,chen2013information}, $\vx$ corresponds to a {\em live-edge graph}
	in $G$, while $X$ corresponds to a {\em random live-edge graph}.
We say a path $L$ is {\em live} (with respect to $\vx$) if every edge of $L$ is live.
Then we have
$r_S^v(\vmu)=\Pr_{\vx\sim X}\{\text{there is a live path from $s$ to $v$ in $\vx$}\}$.
For each $\vx$ that contains a live path from $s$ to $v$, we designate 
	a path to $\vx$ as follows.
We first list all the edges in an arbitrary order,
and for every different edges $e_1$ and $e_2$, define $e_1<e_2$ if $e_1$ appears before $e_2$.
To compare two paths $L$ and $L'$, we first order the edges in $L$ and $L'$ in 
	the descending order, respectively, and then compare them in 
	the lexicographical order.
In other words, to compare two paths, first compare their largest edges,
if there is a tie, compare their second largest edges, and so on.
If two paths continue to tie on edges and then one path ends with no more edges,
	then the shorter path is smaller.
For every outcome $\vx$ such that there is a live path from $s$ to $v$,
we designate the smallest live path $L$ from $s$ to $v$ in $\vx$ to $\vx$.
Then each path from $s$ to $v$ in the original graph $G$
	has a subset of outcome $\vx$'s that are designated to $L$,
	which means all paths from $s$ to $v$ partition all outcomes $\vx$ by which
	path $\vx$ is designated to.
Thus, let $r_{v\mid L}^{\vmu, S}=\sum_{\vx\text{ is designated to $L$}} \Pr[X=\vx]$, namely the contribution of path $L$ through the outcome $\vx$ designated to $L$, and
	we have $r_S^{v}(\vmu)=\sum_{L\text{ is a path from $s$ to $v$}} r_{v\mid L}^{\vmu, S}$.
That is, we decompose $r_S^{v}(\vmu)$ by $r_{v\mid L}^{\vmu, S}$'s according to
	paths $L$ from $s$ to $v$.

Before going further, we first figure out some basic properties of the smallest live path.
The smallest live path must be simple, otherwise we can remove loops to get a smaller live path.
Moreover, each substring of the smallest live path in $\vx$ must also be 
	the smallest in $\vx$ for its respective starting and ending nodes.
For a path $L=(e_1=(u_0, u_1), e_2=(u_1, u_2), \dots, e_{|L|}=(u_{|L|-1}, u_{|L|}))$,
a substring is a consecutive subsequence $L_1=(e_i, e_{i+1}, \dots, e_j)$.
If $L$ is the smallest live path from $s$ to $v$ in $\vx$,
any substring $L_1$ must also be the smallest live path from $u$ to $w$ in $\vx$,
where $u$ and $w$ are the start and the end of $L_1$, respectively.
Otherwise, if $L_2$ is a live path from $u$ to $w$ that smaller than $L_1$,
then we can replace $L_1$ with $L_2$ in $L$ to get a smaller live path.

\subsubsection{Bypass}
In this subsection, we define bypass, which is a tool for calculating the probability that a path is \emph{not} the smallest. 
For a path $L=(e_1=(u_0, u_1), e_2=(u_1, u_2), \dots, e_{|L|}=(u_{|L|-1}, u_{|L|}))$,
a bypass is a path from $u_i$ to $u_j$ that
\begin{enumerate}[noitemsep, label=(\arabic*)]
	\item shares no edges with $L$;
	\item is smaller than the substring of $L$ between $u_i$ and $u_j$.
\end{enumerate}
A bypass is live (with respect to $\vx$)  is defined in the same way as a path being live.
For a live path $L$ in $\vx$ from some node $u_0$ to some other node $u_{|L|}$, 
	if there is a live bypass of $L$, then $L$ cannot be the smallest
	live path from $u_0$ to $u_{|L|}$.
The reverse also holds: if a live path $L$ has no live bypasses, then $L$ is the smallest
	live path from $u_{0}$ to $u_{|L|}$.
To prove the reverse direction, 
	assume that there is a live path $L'$ from $u_{0}$ to $u_{|L|}$ smaller than $L$.
Let $e_i$ be the largest edge in $L$ that is not in $L'$.
Because $L' < L$, such $e_i$ must exist, and moreover $e_i$ must
	be larger than all edges in $L'$ but not $L$.
By breaking $L$ at $e_i$, we divide the nodes covered by $L$ into two parts,
the start part and the end part.
Let $w$ be the first node in $L'$ that is in the end part of $L$.
Such node $w$ must exist because the end node $u_{|L|}$ is in the end part of $L$.
Let $u$ be the last node in $L'$ that appears before $w$ in $L'$ and is in the 
	start part of $L$.
Such node $u$ must exist because the starting node $u_0$ is in the start part.
Then the substring of $L'$ between $u$ and $w$
	must share no edges with $L$.
Otherwise, if the substring of $L'$ between $u$ and $w$ shares one edge $(u_j,u_{j+1})$ with $L$, $(u_j,u_{j+1})$ cannot be $e_i$, so
	$u$ cannot be $u_j$ and $w$ cannot be $u_{j+1}$.
Then, (a) if $u_{j+1}$ is in the end part of $L$, 
	then $u_{j+1}$ appearing before $w$ in $L'$ contradicts
		to $w$'s definition; and
	(b) if $u_{j+1}$ is in the start part of $L$, $u_{j+1}$ appearing after $u$ and before $w$ in $L'$ contradicts to the definition of $u$.
Therefore, the substring of $L'$ between $u$ and $w$ shares no edges with $L$.
Then since $e_i$ is larger than any edge in $L'$ and not in $L$, 
	the substring of $L'$ between $u$ and $w$ is indeed 
	a bypass of $L$.

For a path $L=(e_1=(u_0, u_1), e_2=(u_1, u_2), \dots, e_{|L|}=(u_{|L|-1}, u_{|L|}))$,
let $p_L^{\vmu, S}$ be the probability that $L$ is the smallest live path from its start to its end. 
Note that if $L$ is a path from $s$ to $v$, then we have
	$p_L^{\vmu, S} = r_{v\mid L}^{\vmu, S}$.
With bypass, we have $p_L^{\vmu, S}=p_{1, L}^{\vmu, S}p_{2, L}^{\vmu, S}$,
where $p_{1, L}^{\vmu, S}$ is the probability that $L$ is live
and $p_{2, L}^{\vmu, S}$ is the probability that there is no live bypasses of $L$.
It is clear that $p_{1, L}^{\vmu, S} = \prod_{i=1}^{|L|} \mu_{e_i}$,
	and $p_{2, L}^{\vmu, S}$ is the probability that
	some subset of edges in $E\setminus L$ forming a live bypass of
	$L$ does not occur.
These two events are independent, since they are about two disjoint subsets of $E$.

\subsubsection{Bottom-up modification}
\begin{figure}
	\centering
	\subcaptionbox{A sample network}
	[.5\textwidth]
	{%
		\begin{tikzpicture}[every node/.style = {circle, draw}]
		\node (v1) at (0,1.5) {$s$};
		\node (v2) at (-1,0) {$u$};
		\node (v4) at (1,0) {$w$};
		\node (v3) at (0,-1.5) {$v$};
		\tikzset{edge/.style={->, > = latex}}
		\draw[edge]  (v1) -> (v2);
		\draw[edge]  (v2) -> (v3);
		\draw[edge] (v1) -> (v4);
		\draw[edge]  (v4) -> (v3);
		\draw  (v2) -> (v4);
		\end{tikzpicture}%
	}%
	\subcaptionbox{Search tree with $\Node$ marked in each node}
	[.5\textwidth]
	{%
		\begin{tikzpicture}[every node/.style = {circle, draw}]
		\node (v1) at (0,0) {$s$};
		\node (v2) at (-1,-1) {$u$};
		\node (v6) at (1,-1) {$w$};
		\node (v3) at (-1.5,-2) {$v$};
		\node (v4) at (-0.5,-2) {$w$};
		\node (v5) at (0,-3) {$v$};
		\node (v7) at (0.5,-2) {$v$};
		\node (v8) at (1.5,-2) {$u$};
		\node (v9) at (2,-3) {$v$};
		\draw  (v1) edge (v2);
		\draw  (v2) edge (v3);
		\draw  (v2) edge (v4);
		\draw  (v4) edge (v5);
		\draw  (v1) edge (v6);
		\draw  (v7) edge (v6);
		\draw  (v6) edge (v8);
		\draw  (v8) edge (v9);
		\end{tikzpicture}%
	}
	\caption{A sample network and its search tree}
	\label{fig:IMbandit}
\end{figure}
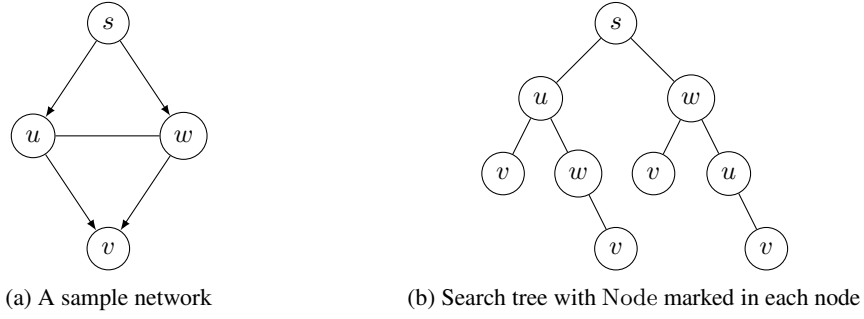

We now describe the search tree formed from all simple paths from $s$ to $v$.
We use $y, z$ to denote nodes in this tree.
Each node $y$ is corresponding to a prefix of a path from $s$ to $v$, which is also a path denoted by $\Path(y)$.
Denote the end node of $\Path(y)$ with $\Node(y)$.
Denote the last edge of $\Path(y)$ with $\Edge(y)$.
Denote the root of the tree with $\rootnode$.
$\Path(\rootnode)$ is the empty path $\emptyset$.
Specifically, $\Node(\rootnode)=s$, as $s$ is the start node of every path in our consideration. $\Edge(\rootnode)$ is undefined.
For every non-root node $y$ in the tree, its parent is the node $z$
such that $\Path(z)$ is the $(|\Path(y)|-1)$-prefix of $\Path(y)$.
Figure~\ref{fig:IMbandit} shows a sample of this tree structure.

For a node $y$ in the tree, we simplify the notation
	$p_{\Path(y)}^{\vmu, S}$ to $p_y^{\vmu, S}$.
Similarly, for a leaf node $y$ in the tree, we simplify the notation
	$r_{v\mid \Path(y)}^{\vmu, S}$ to $r_{v\mid y}^{\vmu, S}$.
Then we have 
$r_S^{v}(\vmu)=\sum_{y\text{ is leaf}} r_{v\mid y}^{\vmu, S}
=\sum_{y\text{ is a leaf}} p_y^{\vmu, S}$.

We want to show that for all $\vmu \le \vmu'$, we have
\begin{align} \label{eq:IMbandit.nodeTPM2}
r_S^{v}(\vmu') - r_S^{v}(\vmu)
= \sum_{y\text{ is a leaf}} \left(p_y^{\vmu', S} - p_y^{\vmu, S}\right)
\le p_e^{\vmu, S} \sum_{e \in E} (\mu'_e - \mu_e),
\end{align}
which is the same as Inequality~\eqref{eq:IMbandit.nodeTPM} that we want to show.

Let $\vmu^{(y)}$ be the vector that
$$\mu^{(y)}_e = \begin{cases}
\mu_e, &\text{if } e\in \Path(y),\\
\mu'_e, &\text{if } e\not\in \Path(y).\\
\end{cases}$$

Thus we have
$p_y^{\vmu^{(y)}, S}=p_{1, y}^{\vmu, S}p_{2, y}^{\vmu', S}$.
Since for all edges $e \not\in \Path(y)$, $\mu_e \le \mu'_e$,
	the probability that there is no live bypasses of $\Path(y)$ is higher
	under $\vmu$ than under $\vmu'$, 
	that is, $p_{2, y}^{\vmu', S} \le p_{2, y}^{\vmu, S}$.
Therefore, $p_y^{\vmu^{(y)}, S} \le p_y^{\vmu, S}$, which
	means that, to prove Inequality~\eqref{eq:IMbandit.nodeTPM2}, it is
	enough to prove
\begin{align} \label{eq:IMbandit.nodeTPM3}
\sum_{y\text{ is a leaf}} \left(p_y^{\vmu', S} - p_y^{\vmu^{(y)}, S}\right)
\le p_e^{\vmu, S} \sum_{e \in E} (\mu'_e - \mu_e).
\end{align}

We now consider the bottom-up modification of the expectations in $\Path(y)$.
\begin{equation}
p_y^{\vmu', S} - p_y^{\vmu^{(y)}, S}
= \sum_{i=1}^{|\Path(y)|} \left(p_y^{\vmu^{(z_{i-1})}, S} - p_y^{\vmu^{(z_{i})}, S}\right),
\end{equation}
where $z_i$ is the ancestor of $y$ at depth $i$. (Root has depth 0.)
By switching summations and regrouping the summands 
	$\left(p_y^{\vmu^{(z_{i-1})}, S} - p_y^{\vmu^{(z_{i})}, S}\right)$ under $z_i$,
	we have
\begin{equation}
\sum_{y\text{ is a leaf}}\left( p_y^{\vmu', S} - p_y^{\vmu^{(y)}, S} \right)
= \sum_{y\text{ is a non-root node}} \; \sum_{z\text{ is a leaf under }y}
\left(p_{z}^{\vmu^{(\Parent(y))}, S} - p_{z}^{\vmu^{(y)}, S}\right).
\label{eq:IMbandit.diffallocate}
\end{equation}
We generalize the definition of $r_{v\mid y}^{\vmu, S}$ to non-leaf  nodes $y$ by
$$r_{v\mid y}^{\vmu, S} = \sum_{z\text{ is a leaf under }y} p_{z}^{\vmu, S}.$$
It is clear that this definition coincides the old one when $y$ is a leaf.
Now
\begin{equation}
\eqref{eq:IMbandit.diffallocate} = \sum_{y\text{ is a non-root node}}
\left(r_{v\mid y}^{\vmu^{(\Parent(y))}, S} - r_{v\mid y}^{\vmu^{(y)}, S}\right).
\label{eq:IMbandit.usesubtreereward}
\end{equation}

$$r_{v\mid y}^{\vmu, S} = \sum_{z\text{ is a leaf under }y} p_{z}^{\vmu, S}
= \sum_{z\text{ is a leaf under }y} p_{1, z}^{\vmu, S}p_{2, z}^{\vmu, S}
= p_{1, y}^{\vmu, S}\sum_{z\text{ is a leaf under }y} \frac{p_{1, z}^{\vmu, S}}{p_{1, y}^{\vmu, S}}p_{2, z}^{\vmu, S}.
$$
$\frac{p_{1, z}^{\vmu, S}}{p_{1, y}^{\vmu, S}}p_{2, z}^{\vmu, S}$ does not depend on
$\mu_e$ for every $e\in \Path(y)$.
So
\begin{align}
r_{v\mid y}^{\vmu^{(\Parent(y))}, S} - r_{v\mid y}^{\vmu^{(y)}, S}
&=\left(p_{1, y}^{\vmu^{(\Parent(y))}, S} - p_{1, y}^{\vmu^{(y)}, S}\right)\sum_{z\text{ is a leaf under }y} \frac{p_{1, z}^{\vmu', S}}{p_{1, y}^{\vmu', S}}p_{2, z}^{\vmu', S}\nonumber\\
&=\left(\mu'_{\Edge(y)} - \mu_{\Edge(y)}\right) p_{1, \Parent(y)}^{\vmu, S}\sum_{z\text{ is a leaf under }y} \frac{p_{1, z}^{\vmu', S}}{p_{1, y}^{\vmu', S}}p_{2, z}^{\vmu', S}.
\label{eq:IMbandit.subtreediff}
\end{align}

For each leaf $z$ under $y$, the event that $\Path(z)$ is the smallest live path from $s$ to $v$ is exclusive from each other.
And that event is included in that $\Path(y)$ is the smallest live path from $s$ to $\Node(y)$. So
$$\sum_{z\text{ is a leaf under }y}p_{1, z}^{\vmu', S}p_{2, z}^{\vmu', S}
\le p_{1, y}^{\vmu', S}p_{2, y}^{\vmu', S},$$
and thus
$$\sum_{z\text{ is a leaf under }y} \frac{p_{1, z}^{\vmu', S}}{p_{1, y}^{\vmu', S}}p_{2, z}^{\vmu', S} \le p_{2, y}^{\vmu', S} \le p_{2, y}^{\vmu, S}.$$
So
$$\eqref{eq:IMbandit.subtreediff} \le \left(\mu'_{\Edge(y)} - \mu_{\Edge(y)}\right) p_{1, \Parent(y)}^{\vmu, S} p_{2, y}^{\vmu, S}.$$
Then
\begin{equation}
\eqref{eq:IMbandit.usesubtreereward} \le \sum_{y\text{ is a non-root node}}\left(\mu'_{\Edge(y)} - \mu_{\Edge(y)}\right) p_{1, \Parent(y)}^{\vmu, S} p_{2, y}^{\vmu, S}
=\sum_{e\in E} (\mu'_e - \mu_e) \sum_{\Edge(y)=e} p_{1, \Parent(y)}^{\vmu, S} p_{2, y}^{\vmu, S}.
\label{eq:IMbandit.final}
\end{equation}

We then show 
\begin{equation}
	\sum_{\Edge(y)=e} p_{1, \Parent(y)}^{\vmu, S}p_{2, y}^{\vmu, S} \le p_e^{\vmu, S},
	\label{eq:IMbandit.boundTP}
\end{equation}
for every edge $e$.
If $e$ is a directed edge from $u$ to $w$ , $p_e^{\vmu, S} \ge \sum_{\Edge(y)=e} p_{\Parent(y)}^{\vmu, S}$,
	since $p_{\Parent(y)}^{\vmu, S}$ is the probability that the path $\Path(\Parent(y))$
	is the smallest live path from $s$ to $\Node(\Parent(y))=u$, and thus such events are
	mutually exclusive for different $y$ with $\Edge(y) = e$.
Then $p_e^{\vmu, S} \ge \sum_{\Edge(y)=e} p_{1, \Parent(y)}^{\vmu, S}p_{2, y}^{\vmu, S}$
	as $p_{2, \Parent(y)}^{\vmu, S}\ge p_{2, y}^{\vmu, S}$.
Thus we have \eqref{eq:IMbandit.boundTP}.

Combining Inequalities \eqref{eq:IMbandit.final} and \eqref{eq:IMbandit.boundTP}, 
	we prove the key Inequality~\eqref{eq:IMbandit.nodeTPM3}, 
	which in turn shows that
	the influence maximization bandit satisfies the TPM bounded smoothness
	condition with 
	$B=\max_{u\in V} |\{v\in V\mid v\text{ can be reached from }u\} |$.

\chgins{
\section{Detailed Comparison with \cite{Wen2016} on the Regret Bounds for Influence Maximization Bandits}
\label{app:compareWen}
	
\begin{table}
	\begin{center}
		\begin{tabular}{|c|c|c|}
			\hline
			Topology & Bound in \citep{Wen2016} & Our bound\\
			\hline
			bar graphs & $\tilde O\left(|V|\sqrt{kT}\right)$ & $\tilde O\left(\sqrt{k|V|T}\right)$\\
			\hline
			star graphs & $\tilde O\left(|V|^2\sqrt{kT}\right)$ & $\tilde O\left(|V|^2\sqrt{T}\right)$\\
			\hline
			ray graphs & $\tilde O\left(|V|^{\frac{9}{4}}\sqrt{kT}\right)$ & $\tilde O\left(|V|^2\sqrt{T}\right)$\\
			\hline
			tree graphs & $\tilde O\left(|V|^{\frac{5}{2}}\sqrt{T}\right)$ & $\tilde O\left(|V|^2\sqrt{T}\right)$\\
			\hline
			grid graphs & $\tilde O\left(|V|^{\frac{5}{2}}\sqrt{T}\right)$ & $\tilde O\left(|V|^2\sqrt{T}\right)$\\
			\hline
			complete graphs & $\tilde O\left(|V|^4\sqrt{T}\right)$ & $\tilde O\left(|V|^3\sqrt{T}\right)$\\
			\hline
		\end{tabular}
	\end{center}
	\caption{Regret bound comparison with \citep{Wen2016}.}
	\label{tb:CompWen}
\end{table}

Let $G=(V,E)$ be the social graph we consider.
By Lemma~\ref{lem:IMbandit}, our Theorem~\ref{thm:1-normTPM} can be applied to the influence maximization bandit with
$B=\tilde{C}\le |V|$, which gives concrete $O(\log T)$ distribution-dependent and $O(\sqrt{T\log T})$ 
distribution-independent bounds for the influence maximization bandit.
\citet{Wen2016} also study the influence maximization bandit and eliminate the exponential factor $1/p^*$.
They use a complexity term $C_*$ to characterize their regret bound, where $C_*$ has complicated relationship with network topology
	and edge probabilities.
\citet{Wen2016} list several families of graphs with concrete regret bounds, ignoring the effect of edge probabilities on their complexity
	term $C_*$.
Our regret bounds with complexity term $\tilde{C}$ can also be applied to these graph families, and Table~\ref{tb:CompWen} list
	the comparison results between our regret bounds and their regret bounds. 
The comparison shows that our regret bounds are always better than their bounds, with an improvement factor from $O(\sqrt{k})$ to $O(|V|)$, 
	where $V$ is the set of
	nodes in the graph, and $k$ is the number of seeds to be selected in each round.
This indicates that, in terms of characterizing the topology effect on the regret bound, our simple complexity term $\tilde{C}$ is more effective
	than their complicated term $C_*$.
}

\section{Lower Bound Proofs (for Section~\ref{sec:lowerbound})}

\subsection{Proof of Theorem~\ref{thm:indlowerbound}} \label{sec:proof.indlowerbound}

\begin{algorithm}
	\begin{algorithmic}[1]
		\REQUIRE $m, T_{\cmab}, p$
		\COMMENT{$m$ is the number of arms,
			$T_{\cmab}$ is the number of rounds in CMAB,
			and $p$ is triggering probability.}
		\FOR{$t=1, \dots, T_\cmab$}
		\STATE sample $\gamma_t$ i.i.d. from Bernoulli distribution $B_p$
		\ENDFOR
		\STATE $\mathcal{H} \leftarrow  \emptyset$; $t_{\mab} \leftarrow 0$
		\FOR{$t=1,\dots,T_{\cmab}$}
		\STATE $S_{i_t} \leftarrow \textsf{CMAB-Oracle}(\mathcal{H})$ \COMMENT{Oracle decides the CMAB-T action based on the execution history}
		\IF{$\gamma_t=1$}
		\STATE $t_{\mab} \leftarrow t_{\mab} + 1$
		\STATE In MAB, play arm $i_t$ in round $t_{\mab}$, obtain feedback 
		$\tilde{X}^{(t_{\mab})}_{i_t}$
		\STATE In CMAB-T, $i_t$ is triggered with feedback $X^{(t)}_{i_t} = \tilde{X}^{(t_{\mab})}_{i_t}$, and set reward as $p^{-1}X^{(t)}_{i_t}$
		\STATE $\mathcal{H} \leftarrow \textsf{Append}(\mathcal{H}, (S_{i_t},\{i_t\},  X^{(t)}_{i_t})$ \COMMENT{$\{i_t\}$ is the set of triggered arms}
		\ELSE 
		\STATE \COMMENT{$\gamma_t=0$, and MAB is not played in this case}
		\STATE In CMAB-T, no arm is triggered, and the reward is 0
		\STATE $\mathcal{H} \leftarrow \textsf{Append}(\mathcal{H}, (S_{i_t}, \emptyset, -))$ \COMMENT{triggering set is empty, so no feedback}
		\ENDIF
		\ENDFOR \COMMENT{In the end, $T_{\mab} = t_{\mab}$}
	\end{algorithmic}
	\caption{Reduce MAB to CMAB-T}
	\label{alg:reduction}
\end{algorithm}

\begin{figure}
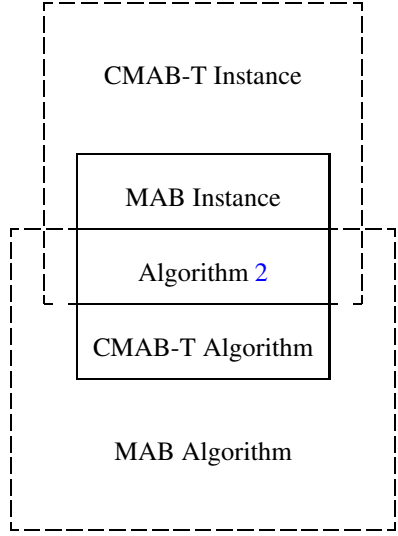

	\centering
	\renewcommand{\arraystretch}{2.6}
	\dashlinedash=6pt
	\dashlinegap=2pt
	\begin{tabular}{ccccc}
		\cdashline{2-4}
		\multicolumn{1}{c:}{}  & & \multirow{2}*{CMAB-T Instance}                    & & \multicolumn{1}{:c}{}  \\ 
		\multicolumn{1}{c:}{}  & &                                                      & & \multicolumn{1}{:c}{}  \\ \cline{3-3}
		\multicolumn{1}{c:}{}  & & \multicolumn{1}{|c|}{MAB Instance}                & & \multicolumn{1}{:c}{}  \\ \cdashline{1-2}\cline{3-3}\cdashline{4-5}
		\multicolumn{1}{:c:}{} & & \multicolumn{1}{|c|}{Algorithm~\ref{alg:reduction}} & & \multicolumn{1}{:c:}{} \\ \cdashline{2-2}\cline{3-3}\cdashline{4-4}
		\multicolumn{1}{:c}{}  & & \multicolumn{1}{|c|}{CMAB-T Algorithm}                  & & \multicolumn{1}{c:}{}  \\ \cline{3-3}
		\multicolumn{1}{:c}{}  & & \multirow{2}*{MAB Algorithm}                            & & \multicolumn{1}{c:}{}  \\ 
		\multicolumn{1}{:c}{}  & &                                                      & & \multicolumn{1}{c:}{}  \\ \hdashline
	\end{tabular}
	\caption{Reduction Structure}
	\label{fig:reduction}
\end{figure}

We prove the theorem by reducing classical MAB to this CMAB-T game instance by Algorithm~\ref{alg:reduction}.
For convenience, we define Bernoulli random variable $\gamma_t = \I\{\tau_t(S_{i_t},X^{(t)}) = \{i_t\} \}$,
	where $S_{i_t}$ is the action played in round $t$, and thus $\gamma_t$ is an indicator representing whether
a base arm is triggered in round $t$.
Moreover, to distinguish the environment outcome in MAB and CMAB-T  in the reduction, we use $\tilde{X}^{(t_{\mab})}$ to denote
the environment outcome in round $t_{\mab}$ of MAB, and $X^{(t)}$ to denote the environment outcome
in round $t$ of CMAB-T.

Figure~\ref{fig:reduction} shows the structure of reduction.
Algorithm~\ref{alg:reduction} adapts the CMAB-T algorithm to an MAB algorithm.
Conversely, it also adapts the MAB instance to the corresponding CMAB-T instance.
Thus when Algorithm~\ref{alg:reduction} runs, we have one MAB instance and one CMAB-T 
instance running simultaneously.
Let $T_\cmab$ be the total number of rounds in the CMAB-T instance and
$T_\mab$ be the total number of rounds in the MAB instance.
For convenience, we use $t$ to refer to the index of rounds in CMAB-T, while
$t_{\mab}$ is the index of rounds in MAB.
In Algorithm~\ref{alg:reduction}, we fix $T_\cmab$ and thus $T_\mab$ is a random variable.
We have $T_\mab = \sum_{t=1}^{T_\cmab}\gamma_t$.
So $\E[T_\mab]=pT_\cmab$ and
we have following lemma about the distribution of $T_\mab$.

\begin{mylem}
	If $pT_\cmab\ge 6$, then $\Pr\left[T_\mab \ge \frac{1}{2}pT_\cmab\right]\ge \frac{1}{2}$.
\end{mylem}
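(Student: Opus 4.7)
The plan is to apply the multiplicative Chernoff bound (Fact~\ref{fact:chernoff}) to the i.i.d.\ Bernoulli variables $\gamma_1, \ldots, \gamma_{T_\cmab}$. By construction, $T_\mab = \sum_{t=1}^{T_\cmab} \gamma_t$ is a sum of $T_\cmab$ independent Bernoulli$(p)$ random variables, so the conditional expectation condition $\E[\gamma_t \mid \gamma_1, \ldots, \gamma_{t-1}] \ge p$ is trivially satisfied, with $n = T_\cmab$ and $\mu = p$ in the notation of Fact~\ref{fact:chernoff}.

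Instantiating the bound with $\delta = 1/2$ yields
\[
\Pr\!\left[T_\mab \le \tfrac{1}{2}\, pT_\cmab\right] \;\le\; \exp\!\left(-\tfrac{(1/2)^2 \cdot pT_\cmab}{2}\right) \;=\; \exp\!\left(-\tfrac{pT_\cmab}{8}\right).
\]
Under the hypothesis $pT_\cmab \ge 6$, the right-hand side is at most $e^{-3/4} \approx 0.4724 < 1/2$. Therefore
\[
\Pr\!\left[T_\mab \ge \tfrac{1}{2}\, pT_\cmab\right] \;\ge\; \Pr\!\left[T_\mab > \tfrac{1}{2}\, pT_\cmab\right] \;=\; 1 - \Pr\!\left[T_\mab \le \tfrac{1}{2}\, pT_\cmab\right] \;\ge\; 1 - e^{-3/4} \;>\; \tfrac{1}{2},
\]
which is exactly the desired conclusion.

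There is essentially no obstacle here: the lemma is a pure concentration statement about a Binomial random variable with mean $pT_\cmab$, and the constant $6$ in the hypothesis was evidently chosen precisely so that the Chernoff tail $e^{-pT_\cmab/8}$ dips below $1/2$. The only minor care point is to make sure the version of Chernoff being invoked matches Fact~\ref{fact:chernoff} (lower-tail, multiplicative form with $0 < \delta < 1$), which it does. No independence subtleties arise because the $\gamma_t$'s are drawn i.i.d.\ in Algorithm~\ref{alg:reduction}, independently of the CMAB-T algorithm's internal randomness.
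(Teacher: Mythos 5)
Your proof is correct and follows essentially the same route as the paper: both apply the multiplicative Chernoff bound (Fact~\ref{fact:chernoff}) with $\delta = 1/2$ to $T_\mab = \sum_{t=1}^{T_\cmab}\gamma_t$, obtaining $\Pr\{T_\mab \le \tfrac{1}{2}pT_\cmab\} \le e^{-pT_\cmab/8} \le e^{-3/4} < 1/2$ under the hypothesis $pT_\cmab \ge 6$. No gaps.
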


\begin{proof}
	$T_\mab=\sum_{t=1}^{T_\cmab} \gamma_t$.
	By multiplicative Chernoff bound (Fact~\ref{fact:chernoff}),
	$$
	\Pr[T_\mab\ge \frac{1}{2}pT_\cmab]
	\ge 1-\left(\frac{e^{-\frac{1}{2}}}{\left(\frac{1}{2}\right)^{\frac{1}{2}}}\right)
	^{pT_\cmab} 
	\ge \frac{1}{2},
	$$
	when $pT_\cmab\ge 6$.
	
	\wei{I did not see how the above formula is related to Fact~\ref{fact:chernoff}. My version would be below:}
	
	\qinshi{Current version (Fact~\ref{fact:chernoff}) of multiplicative Chernoff bound is not tight. My formula above is tighter. But it doesn't matter. $pT_\cmab\ge 5$ and $pT_\cmab\ge 6$ are virtually the same.}
	$$
	\Pr[T_\mab\ge \frac{1}{2}pT_\cmab]
	\ge 1-\left(e^{-\frac{1}{8}pT_\cmab}\right)
	\ge \frac{1}{2},
	$$
	when $pT_\cmab\ge 6$.
	
\end{proof}

In the following, we overload the notation $\calD$ to also represent a probabilistic distribution of
the environment instance (a.k.a. outcome distribution) $D$, and use $D \sim \calD$ to represent a random 
environment instance $D$ drawn from the distribution $\calD$.

\begin{mylem}
	\label{lem:reduction}
	Consider a random MAB environment instance $D$ drawn from a distribution $\calD$.
	Assume we have a lower bound $L(T_\mab)$ of expected regret,
	i.e. for every natural number $T_\mab$, any MAB algorithm $A$ has expected regret
	$$\E_{D\sim \calD}[Reg_{\mab,D}^A(T_\mab)]\ge L(T_\mab).$$
	Then consider the corresponding CMAB-T environment instance $D$.
	For every natural number $T_\cmab\ge 5p^{-1}$, any CMAB-T algorithm $A$ has expected regret
	\begin{equation}
	\E_{D\sim \calD}[Reg_{\cmab, D}^A(T_\cmab)]\ge \frac{1}{2}p^{-1}L(\frac{1}{2}pT_\cmab).
	\end{equation}
\end{mylem}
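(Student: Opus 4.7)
The plan is to use Algorithm~\ref{alg:reduction} as a reduction from MAB to CMAB-T. A CMAB-T algorithm $A$ on instance $\mathrm{FTP}(p)$ with environment $D$ induces a randomized MAB algorithm $A'$ on the MAB instance with the same environment $D$; its internal randomness $\omega$ consists of the i.i.d.\ Bernoulli triggers $\gamma_1,\ldots,\gamma_{T_\cmab}$ together with any randomness used by $A$'s oracle. Conditioning on $\omega$, the algorithm $A'_\omega$ is a deterministic MAB algorithm playing for exactly $T_\mab(\omega)=\sum_t\gamma_t$ rounds, so the assumed MAB lower bound applies pointwise: for each $\omega$,
$\E_{D\sim\calD}[Reg_{\mab,D}^{A'_\omega}(T_\mab(\omega))] \ge L(T_\mab(\omega))$.

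Next I will average over $\omega$. Without loss of generality $L$ is non-decreasing (replace it by its running maximum if not), so combining with the preceding tail bound $\Pr[T_\mab\ge\tfrac{1}{2}pT_\cmab]\ge\tfrac{1}{2}$ gives
$\E_\omega\E_D[Reg_{\mab,D}^{A'_\omega}(T_\mab(\omega))] \ge \E_\omega[L(T_\mab(\omega))] \ge \tfrac{1}{2}\,L(\tfrac{1}{2}pT_\cmab)$,
where Fubini lets the order of $\E_\omega$ and $\E_D$ be swapped because the regret is bounded.

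Now the MAB regret on the left must be translated back to CMAB-T regret. Writing $\hat R_\mab=\sum_t \gamma_t(\mu^*-\mu_{i_t})$ and $\hat R_\cmab=\sum_t(\mu^*-\mu_{i_t})$, the key observation is that the action $i_t$ chosen in round $t$ of Algorithm~\ref{alg:reduction} depends only on the history strictly before round $t$, namely on $\gamma_1,\ldots,\gamma_{t-1}$ and the past $\tilde X^{(\cdot)}$'s (plus oracle randomness), and is therefore independent of the freshly-consumed $\gamma_t$. Hence for any fixed $D$,
$\E[\gamma_t(\mu^*-\mu_{i_t})] = \E[\gamma_t]\,\E[\mu^*-\mu_{i_t}] = p\cdot\E[\mu^*-\mu_{i_t}]$,
and summing over $t$ yields $\E_{\omega,\tilde X}[\hat R_\mab\mid D] = p\cdot Reg^A_{\cmab,D}(T_\cmab)$. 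Taking $\E_D$ on both sides, combining with the previous inequality, and dividing by $p$ delivers the claimed bound.

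The main obstacle will be the careful interleaving of expectations over $\omega$, $D$ and $\tilde X$: one must pin down that $A'_\omega$ is a bona fide deterministic MAB algorithm for the fixed-but-$\omega$-dependent horizon $T_\mab(\omega)$ so that the hypothesis $L$ applies pointwise in $\omega$; and one must rigorously establish the independence $\gamma_t\perp i_t$, which rests on the fact that $S_{i_t}$ is computed strictly before $\gamma_t$ is examined in round $t$. Once these measurability points are settled, the chain of inequalities above is routine, and the slight mismatch between the sub-lemma's threshold $pT_\cmab\ge 6$ and the statement's $T_\cmab\ge 5p^{-1}$ can be absorbed by tightening the Chernoff estimate.
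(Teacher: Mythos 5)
Your proposal is correct and follows essentially the same route as the paper's proof: condition on the trigger sequence to turn the CMAB-T algorithm into an MAB algorithm with random horizon $T_\mab$, apply the assumed lower bound pointwise together with monotonicity of $L$ and the Chernoff tail bound $\Pr[T_\mab\ge \tfrac{1}{2}pT_\cmab]\ge\tfrac12$, and relate the two regrets by the factor $p^{-1}$. The only cosmetic differences are that you establish the scaling identity via the (true) independence $\gamma_t\perp i_t$, whereas the paper gets it by rewriting $T_\cmab\mu^*$ using $\E[\gamma_t]=p$, and that your step $\E_\omega[L(T_\mab)]\ge\tfrac12 L(\tfrac12 pT_\cmab)$ implicitly uses $L\ge 0$ (harmless, since expected regret is nonnegative), where the paper instead keeps the indicator on the nonnegative regret.
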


\begin{proof}
	Without loss of generality, we may assume $L(T)$ is non-decreasing,
	as regret of any strategy increases as $T$ increases.
	
	We prove the lemma using the reduction described above.
	We run Algorithm~\ref{alg:reduction} with $A$ be the CMAB-T oracle and $D$ be the environment instance.
	Let $\gamma$ be the vector $(\gamma_1, \gamma_2, \ldots, \gamma_{T_{\cmab}})$.
	Every possible value of $\gamma$ parameterizes Algorithm~\ref{alg:reduction}
	into an algorithm plays MAB problem for $T_\mab=\sum_{t=1}^{T_\cmab}\gamma_t$ rounds.
	We denote this MAB algorithm with $A_\gamma$.
	By our assumption, $\E_{D\sim \calD}[Reg_{\mab, D}^{A_\gamma}(T_\mab)]\ge L(T_\mab)$.
	
	Then we compare the regret in both cases.
	For a given distribution $D$, let $\mu_{i,D} = \E_{X\sim D}[X_i]$ and
	$\mu^*_D =\max_i \mu_{i,D}$.
	For MAB problem and every $\gamma$,
	\begin{align*}
	\E_{D\sim \calD}[Reg_{\mab, D}^{A_\gamma}(T_\mab)]
	&=  \E_{D\sim \calD} \left[ T_\mab\cdot\mu^*_D - \E\left[\sum_{t=1}^{T_\cmab}\gamma_t X_{i_t}\right] \right]\\
	&=  \E_{D\sim \calD} \left[\E\left[\sum_{t=1}^{T_\cmab} \gamma_t(\mu^*_D-X_{i_t})\right] \right]\\
	&=  \E_{D\sim \calD}\left[ \E\left[ \sum_{t=1}^{T_\cmab} \gamma_t(\mu^*_D-\mu_{i_t,D}) \right] \right],
	\end{align*}
	where the inner expectation is taken over the rest randomness, including the randomness of $i_t$, which is based on the random feedback history and
	the possible randomness of algorithm $A_{\gamma}$.
	For CMAB-T, we have
	\begin{align*}
	&\E_{D\sim \calD}[Reg_{\cmab, D}^A(T_\cmab)] \\
	&=	\E_{D\sim \calD} \left[ T_\cmab\cdot\mu^*_D - \E_{ \gamma\sim B_p^{T_{\cmab}}}\left[ \E\left[ \sum_{t=1}^{T_\cmab}\gamma_t p^{-1} X_{i_t}\right] \right] \right]\\
	&=	\E_{D\sim \calD} \left[T_\cmab\cdot\mu^*_D - \E_{\gamma\sim B_p^{T_{\cmab}}}\left[\E\left[ \sum_{t=1}^{T_\cmab}\gamma_t p^{-1} \mu_{i_t,D} \right] \right] \right]\\
	&=	\E_{D\sim \calD} \left[ p T_\cmab\cdot p^{-1} \mu^*_D - \E_{\gamma\sim B_p^{T_{\cmab}}}\left[\E\left[\sum_{t=1}^{T_\cmab}\gamma_t p^{-1} \mu_{i_t,D} \right] \right] \right]\\
	&=	\E_{D\sim \calD} \left[ \E_{\gamma\sim B_p^{T_{\cmab}}}\left[ \sum_{t=1}^{T_\cmab}\gamma_t p^{-1} \mu^*_D \right]
	- \E_{\gamma\sim B_p^{T_{\cmab}}}\left[\E\left[ \sum_{t=1}^{T_\cmab}\gamma_t p^{-1} \mu_{i_t,D} \right] \right] \right] \\
	&=	p^{-1}\E_{D\sim \calD, \gamma\sim B_p^{T_{\cmab}}}\left[\E\left[\sum_{t=1}^{T_\cmab}\gamma_t(\mu^*-\mu_{i_t,D}) \right] \right],
	\end{align*}
	where the innermost expectation is taken over the rest randomness such as the randomness of $i_t$.
	Therefore
	$$\E_{D\sim \calD}[Reg_{\cmab, D}^A(T_\cmab)]=p^{-1}\E_{D\sim \calD, \gamma\sim B_p^{T_{\cmab}}}[Reg_{\mab, D}^{A_\gamma}(T_\mab)].$$
	Calculation above also shows $\E_{D\sim \calD}[Reg_{\mab, D}^{A_\gamma}(T_\mab)] \ge 0$. \wei{Why do we need this? Also $\gamma$ could be all $0$, so
		the claim is not true for every $\gamma$.}
	\qinshi{I am not true with what I meant. It seems to be $\ge 0$.
		And we need it as we use a part of $\gamma$ for lower bound and this formula means the remaining part has non-negative regret.}
	And by monotonicity of $L(T)$,
	\begin{align*}
	\E_{D}[Reg_{\cmab, D}^A(T_\cmab)]
	&=	 p^{-1}\E_{D, \gamma}[Reg_{\mab, D}^{A_\gamma}(T_\mab)] \\
	&\ge p^{-1}\E_{D, \gamma}[\I\{T_\mab\ge \frac{1}{2}pT_\cmab\}Reg_{\mab, D}^{A_\gamma}(T_\mab)] \\
	&\ge p^{-1}\E_{D, \gamma}[\I\{T_\mab\ge \frac{1}{2}pT_\cmab\}L(\frac{1}{2}pT_\cmab)] \\
	&=	 p^{-1}\Pr_{D, \gamma}\{T_\mab\ge \frac{1}{2}pT_\cmab\}L(\frac{1}{2}pT_\cmab) \\
	&\ge\frac{1}{2}p^{-1}L(\frac{1}{2}pT_\cmab).
	\end{align*}
\end{proof}

\begin{mylem}
	\label{lem:mab indep lower bound}
	Let $m$ be the number of arms and $T$ be the number of rounds.
	Let $\eps=\frac{1}{10}\sqrt{m/T}$.
	Then define the family of MAB outcome distributions $\calD = \{D_1, \dots, D_m\}$ with
	$$\Pr_{D_j}\{X_i=1\}=\begin{cases}
	\frac{1}{2}      &\text{if } i\neq j\\
	\frac{1}{2}+\eps &\text{if } i=j
	\end{cases}.$$
	Let $D$ be a random environment instance uniformly drawn from $\calD$,
	then for any MAB algorithm $A$,
	$$\E_{D\sim \calD} \left[ Reg_{\mab, D}^A(T) \right] \ge \frac{\eps T}{6}=\frac{1}{60}\sqrt{mT}.$$
\end{mylem}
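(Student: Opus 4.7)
The plan is a classical change-of-measure lower bound. Let $N_i$ denote the (random) number of times algorithm $A$ plays arm $i$ across the $T$ rounds, so $\sum_{i=1}^m N_i = T$ deterministically. Under environment $D_j$ the unique best arm is $j$ with gap $\eps$ over every other arm, hence the expected regret under $D_j$ equals $\eps\bigl(T-\E_{D_j}[N_j]\bigr)$. Averaging uniformly over $j\in[m]$,
\begin{equation*}
\E_{D\sim\calD}\bigl[Reg^A_{\mab,D}(T)\bigr] \;=\; \eps T - \frac{\eps}{m}\sum_{j=1}^m \E_{D_j}[N_j],
\end{equation*}
so it suffices to show that $\sum_{j=1}^m \E_{D_j}[N_j]$ is not much larger than $T$.

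I will introduce an auxiliary environment $D_0$ in which every arm is Bernoulli($1/2$), and write $P_0$, $P_j$ for the laws of the $T$-round history $(A_1,Y_1,\dots,A_T,Y_T)$ under $D_0$, $D_j$ respectively. By symmetry and linearity, $\sum_{j=1}^m \E_0[N_j]=\E_0\bigl[\sum_j N_j\bigr]=T$. Factoring each history density through the algorithm's policy and using that $D_0$ and $D_j$ differ only on arm $j$, the chain rule for KL divergence gives
\begin{equation*}
\mathrm{KL}(P_0\,\|\,P_j) \;=\; \E_0[N_j]\cdot \kl\!\bigl(\tfrac{1}{2},\tfrac{1}{2}+\eps\bigr) \;\le\; 4\eps^2\,\E_0[N_j],
\end{equation*}
where the last step uses the identity $\kl(1/2,1/2+\eps)=-\tfrac{1}{2}\log(1-4\eps^2)\le 4\eps^2$, valid whenever $4\eps^2\le 1/2$ (which holds in the regime of interest). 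Because $N_j\in[0,T]$, Pinsker's inequality combined with the total-variation bound on expectations then yields
\begin{equation*}
\E_{D_j}[N_j] - \E_0[N_j] \;\le\; T\sqrt{\tfrac{1}{2}\mathrm{KL}(P_0\,\|\,P_j)} \;\le\; T\eps\sqrt{2\,\E_0[N_j]}.
\end{equation*}

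Summing over $j$ and applying Cauchy--Schwarz via $\sum_j \sqrt{\E_0[N_j]}\le \sqrt{m\sum_j \E_0[N_j]}=\sqrt{mT}$,
\begin{equation*}
\sum_{j=1}^m \E_{D_j}[N_j] \;\le\; T + T\eps\sqrt{2mT} \;=\; T + \tfrac{\sqrt{2}}{10}\,mT,
\end{equation*}
after substituting $\eps=\tfrac{1}{10}\sqrt{m/T}$. Plugging back,
\begin{equation*}
\E_D\bigl[Reg^A_{\mab,D}(T)\bigr] \;\ge\; \eps T\Bigl(1-\tfrac{1}{m}-\tfrac{\sqrt{2}}{10}\Bigr) \;\ge\; \tfrac{\eps T}{6},
\end{equation*}
which holds for every $m\ge 2$ (the case $m=1$ is vacuous since there is no regret). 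The only step requiring care is the KL chain rule: one must verify that, since $A_t$ is measurable with respect to $H_{t-1}$ (plus independent algorithmic randomness), the per-round log-Radon--Nikodym factor reduces to $\I\{A_t=j\}\log\bigl(D_{0,j}(Y_t)/D_{j,j}(Y_t)\bigr)$, whose $P_0$-expectation telescopes to $\E_0[N_j]\cdot\kl(1/2,1/2+\eps)$. Everything else is arithmetic on $\eps$ together with standard Pinsker and Cauchy--Schwarz inequalities.
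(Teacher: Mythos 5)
Your proof is correct, and it supplies something the paper itself does not: the paper never proves this lemma, but instead treats it as a known minimax result (the authors' own margin comments indicate they took it from a lecture note and traced it to the adversarial-bandit lower bound of Auer, Cesa-Bianchi, Freund and Schapire). Your change-of-measure argument --- regret decomposition $\eps\,(T-\E_{D_j}[N_j])$, an auxiliary all-$\tfrac12$ environment $D_0$, the divergence decomposition $\mathrm{KL}(P_0\Vert P_j)=\E_0[N_j]\,\kl(\tfrac12,\tfrac12+\eps)$, Pinsker, and Cauchy--Schwarz over the $m$ instances --- is exactly the standard route by which this bound is established in the literature, so you have in effect reconstructed the omitted proof rather than found a new one; the numerology ($\eps T(1-\tfrac1m-\tfrac{\sqrt2}{10})\ge\eps T/6$ for $m\ge2$) checks out. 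Two caveats you should make explicit rather than wave at. First, the step $\kl(\tfrac12,\tfrac12+\eps)=-\tfrac12\ln(1-4\eps^2)\le 4\eps^2$ (and indeed the well-definedness of $D_j$, which needs $\tfrac12+\eps\le 1$) requires $\eps$ to be bounded away from $\tfrac12$, i.e.\ roughly $m=O(T)$; the lemma as stated carries the same hidden assumption (the authors' comments acknowledge that the source assumed $\eps<\tfrac12$), so you are no worse off, but state the restriction. Second, the case $m=1$ is not ``vacuous'': there the left-hand side is $0$ while the claimed bound $\tfrac1{60}\sqrt{T}$ is positive, so the statement is simply false for $m=1$ and must be read with $m\ge 2$, which is how it is used downstream (Theorem~\ref{thm:indlowerbound}).
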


\qinshi{I found the proof the lemma above in a lecture note, but I cannot find a paper we can cite.}
\wei{We do need to cite this result. How is this related to the MAB independent lower bound, I believe first appear
	in Auer et al.'s non-stochastic MAB paper in 2002?}
\qinshi{I browsed Auer's papers and it seems this result (or a similar version) first appears in Section~7 of the journal version in 1998 of ``Gambling in a rigged casino: The adversarial multi-armed bandit problem''. I cannot access their conference version in 1995. In the journal version, regret lower bound is taken minimum with $T$, as regret cannot exceed $T$. In our current version, the author of the lecture note I referred in fact assumed $\eps<\frac{1}{2}$. I also did not pay attention to this issue.}

\begin{proof} [Proof of Theorem~\ref{thm:indlowerbound}]
	Let $\calD$ be the family of outcome distributions defined in Lemma~\ref{lem:mab indep lower bound}, and $D$ is
	uniformly drawn from $\calD$.
	Applying the result of Lemma~\ref{lem:mab indep lower bound} to  Lemma~\ref{lem:reduction}, with
	$L(T)=\frac{1}{60}\sqrt{mT}$ in Lemma~\ref{lem:reduction}, we have
	\begin{align*}
	\E_{D\sim \calD} \left[ Reg_{\cmab, D}^A(T) \right]
	&\ge \frac{1}{2}p^{-1}L(\frac{1}{2}pT)\\
	&=   \frac{1}{2}p^{-1}\cdot \frac{1}{60}\sqrt{\frac{1}{2}mpT}\\
	&>   \frac{1}{170}\sqrt{\frac{m T}{p}}.
	\end{align*}
	Since $D$ is uniformly drawn from $\calD$, then there must exists a $D\in \calD$ such that
	\begin{align*}
	Reg_{\cmab, D}^A(T) \ge \frac{1}{170}\sqrt{\frac{m T}{p}}.
	\end{align*}
\end{proof}

It is easy to show corresponding CMAB-T problem satisfies original bounded smoothness (Condition~\ref{cond:boundedsmoothness}) with $f(x)=x$.
So the theorem above gives an example that the upper bound in \cite{CWYW16} is tight up to a 
$O(\sqrt{\log T})$ factor.

\qinshi{(2.21) {\bf Caution!} This instance does not satisfy TPM bounded smoothness.
	It satisfies original bounded smoothness instead.}

\wei{I think the above reference to Condition~\ref{cond:globalTPM} is not accurate, and I rewrite it below.}

\qinshi{This problem instance satisfies Condition~2 with $f(x)=x$, and Condition~3 with $f(x)=p^{-1}x$, not Condition~3 with $f(x)=x$.}


\wei{Please check if the discussion in the above paragraph is reasonable.}

\qinshi{It's good to compare with original bound in with more clear clauses.
	I am afraid it is not good to say our new result is bad in this case.
	We may claim our new upper bound can be timed by $O(p_{\max})$ if $p_{\max}$ is small. This is easy to see when checking the proof. But this is not the case we major care about. So we did not write this form in our theorem.}

\subsection{Proof of Theorem~\ref{thm:deplowerbound}} \label{sec:proof.deplowerbound}
\begin{proof} [Proof of Theorem~\ref{thm:deplowerbound}]
	We regard this kind of CMAB-T problem instances as a variant of classical MAB,
	that each arm gives three possible outcomes, $0$, $1$, and $\none$.
	Denote these arms with random variables $X'_1, \dots, X'_n$.
	The reward is $p^{-1}$ times of the outcome if the outcome is $0$ or $1$, while the reward is $0$ if the outcome is $\none$.
	This variant is equivalent to the CMAB-T instances:
	Outcome $X'_i = \none$ corresponds to Bernoulli base arm $X_i$ in CMAB-T not being triggered,
	outcome $X'_i = 1$ or $0$ corresponds to Bernoulli base arm $X_i$ being triggered and $X_i=1$ 
	or $0$, respectively.
	Thus $\Pr[X'_i=\none]=1-p$, $\Pr[X'_i=0]=p(1-\mu_i)$, and $\Pr[X'_i=1]=p\mu_i$, where $p$ is the triggering probability and $\mu_i$ is the expectation of $X_i$.
	
	\wei{Is the above definition for MAB or CMAB-T? If it is for CMAB-T, we should talk about optimal
		action, not optimal arm? But as the optimal action may not exist when action space is infinite,
		how should we handle it?}
	\qinshi{Actually, it is neither MAB nor CMAB-T, if strictly speaking.
		It is a generalization of MAB that allows the algorithm to get a symbol which implies the reward, instead of only the reward -- e.g. the algorithm is able to differentiate 0 and $\none$, although the reward is all 0.
		It is easy to see this bandit has a 1-1 correspondence with our ``corresponding CMAB-T instance''.}
	
	Let $X$ and $Y$ be random variables whose values are in the same finite set $V$. Define the KL-divergence
	$$\kl(X, Y) = \sum_{x\in V} \Pr\{X=x\} \ln \frac{\Pr\{X=x\}}{\Pr\{Y=x\}}.$$
	For example the KL-divergence between $X'_1$ and $X'_2$ is
	\begin{align*}
	\kl(X'_1, X'_2)
	& = \Pr\{X'_1=\none\} \ln \frac{\Pr\{X'_1=\none\}}{\Pr\{X'_2=\none\}}
	+ \Pr\{X'_1=0\} \ln \frac{\Pr\{X'_1=0\}}{\Pr\{X'_2=0\}} \\
	& \quad \quad 	+ \Pr\{X'_1=1\} \ln \frac{\Pr\{X'_1=1\}}{\Pr\{X'_2=1\}}\\
	& = (1-p)\ln\frac{1-p}{1-p} + p(1-\mu_1)\ln\frac{p(1-\mu_1)}{p(1-\mu_2)} + p\mu_1 \ln \frac{p\mu_1}{p\mu_2}\\
	& = 0 + p(1-\mu_1) \ln \frac{1-\mu_1}{1-\mu_2} + p\mu_1 \ln \frac{\mu_1}{\mu_2}\\
	& = p\cdot\left[(1-\mu_1) \ln \frac{1-\mu_1}{1-\mu_2} + \mu_1 \ln \frac{\mu_1}{\mu_2}\right]\\
	& = p\cdot\kl(X_1, X_2).
	\end{align*}
	
	Thus, intuitively it takes $p^{-1}$ times more rounds to differentiate $X'_1$ and $X'_2$ than $X_1$ and $X_2$, which is stated formally in theorem below.
	
	\qinshi{A necessary assumption is $\mu^*<1$ (strictly smaller). Bubeck and Cesa-Bianchi did not write this assumption clearly in their survey.}
	
	\wei{Should we say in CMAB-T in the following theorem? Also, in the theorem do we want to say
		``there exists a problem instance of CMAB-T''? Is this expression used for the classical MAB?}
	
	\wei{I think before the algorithm, we can say that the following theorem and its proof is adapted
		from the original result for Bernoulli random variables (cite Lai and Robbin?), and for
		completeness, we include the entire proof here.}
\end{proof}
\begin{proof}
	The analysis is generalized from the case that the arms are Bernoulli random variables. For an arm $i$, we use $N_i(T)$ to denote the number of times the arm $i$ is played in $T$ rounds. For each non-optimal arm $i$, i.e. $\mu_i<\mu^*<1$, we show
	\begin{equation}
	\liminf_{T\to +\infty} \frac{\E[N_i(T)]}{\ln T} \ge \frac{p^{-1}}{\kl(X_i, X_{i^*})}
	= \frac{1}{\kl(X'_i, X'_{i^*})}.\label{eq:deplow.play}
	\end{equation}
	Then by formula
	$$Reg_{\vmu}^A(T) = \sum_{i:\mu_i<\mu^*} \E[N_i(T)]\Delta_i,$$
	the theorem holds.
	
	\qinshi{In fact, maybe we can directly say the result holds using original proof and point out where change is needed.}
	
	Without loss of generality, we may assume arm 1 is an optimal arm and arm 2 is non-optimal. We prove Eq.~\eqref{eq:deplow.play} for arm 2 and then the inequality holds for every arm. Consider that if we replace arm 2 with a fictional arm $2'$, which has an expectation $\mu_{2'}$ slightly greater than $\mu_1$, then arm 1 will become non-optimal and strategy $A$ will play arm 1 for $o(n^a)$ times for any $a>0$. So strategy $A$ must play arm $2$ for enough times, to differentiate from arm $2'$.
	
	Formally, let $\eps>0$ be any positive real number. Let $\mu_{2'}$ be a real number such that $\mu_{2'}>\mu_1$ and
	\begin{equation}
	\kl(X_2, X_{2'})
	= (1-\mu_2)\ln\frac{1-\mu_2}{1-\mu_{2'}} + \mu_2\ln\frac{\mu_2}{\mu_{2'}}
	< (1+\eps)\kl(X_2, X_1). \label{eq:deplow.mu2'}
	\end{equation}
	There exists such $\mu_{2'}$, because the left hand side of $\eqref{eq:deplow.mu2'}$ is continuous as a function of $\mu_{2'}$. We use $\Ep$ and $\Prp$ to denote expectation and probability in the circumstance that arm $X_2$ is replaced by arm $X_{2'}$.
	
	We define the empirical KL-divergence after the first $s$ samples of the arm $2$/$2'$,
	\begin{equation*}
	\ekl_s=\sum_{t=1}^s Y_t, 
	\end{equation*}
	where
	\begin{equation*}
	Y_t = \begin{cases}
	\ln\frac{1-\mu_2}{1-\mu_{2'}}, &\mbox{if } X'_{2, t} = 0,\\
	\ln\frac{\mu_2}{\mu_{2'}},     &\mbox{if } X'_{2, t} = 1,\\
	0,                             &\mbox{if } X'_{2, t} = \none.
	\end{cases}
	\end{equation*}
	and $X'_{2, t}$ is result of the $t$-th sample of arm $2$/$2'$.
	Note that $(Y_t)$ are independent and $\E[Y_t]=\kl(X'_2, X'_{2'})$.
	
	First we prove
	\begin{equation}
	\Pr\left\{N_2(T)<\frac{1-\eps}{\kl(X'_2, X'_{2'})}\ln T \land \ekl_{N_2(T)} \le \left(1-\frac{\eps}{2}\right)\ln T\right\} = o(1). \label{eq:deplow.first}
	\end{equation}
	We use the shorthands
	\begin{equation}
	C_T=\left\{N_2(T)<\frac{1-\eps}{\kl(X'_2, X'_{2'})}\ln T \land \ekl_{N_2(T)} \le \left(1-\frac{\eps}{2}\right)\ln T\right\}, \label{eq:deplow.event}
	\end{equation}
	and
	\[f_T = \frac{1-\eps}{\kl(X'_2, X'_{2'})}\ln T.\]
	If arm $2$ is replaced by arm $2'$, we have
	\[\Prp\{C_T\} \le \Prp\{N_2(T) < f_T\} \le \frac{\Ep[T-N_2(T)]}{T-f_T},\]
	where the second inequality is due to Markov's inequality.
	Recall the definition of consistent strategy, as $2'$ is the only optimal arm, we have $\Ep[T-N_2(T)] = o(T^{\frac{\eps}{2}})$. And by $T-f_T=\Omega(T)$,
	$\Prp\{C_T\} = o(T^{\frac{\eps}{2}-1})$.
	Then we use the property of KL-divergence
	\[\Pr\{C_T\} = \Ep\left[\I\{C_T\} \cdot \exp\left(\ekl_{N_2(T)}\right)\right],\]
	then
	\[\Pr\{C_T\} = \Ep\left[\I\{C_n\} \cdot \exp\left(\ekl_{N_2(T)}\right)\right]
	\le \Prp\{C_T\} \cdot \exp\left[\left(1-\frac{\eps}{2}\right)\ln T\right]
	= \Prp\{C_T\}\cdot T^{1-\frac{\eps}{2}} = o(1). \]
	
	Second, we prove
	\begin{equation}
	\Pr\left\{N_2(T)<f_T \land \ekl_{T_2(T)} > \left(1-\frac{\eps}{2}\right) \ln T\right\} = o(1). \label{eq:deplow.second}
	\end{equation}
	We have
	\begin{align*}
	\Pr\left\{N_2(T)<f_T \land \ekl_{N_2(T)} > \left(1-\frac{\eps}{2}\right) \ln T\right\}
	&\le \Pr\left\{N_2(T)<f_T \land \max_{s\le f_T} \ekl_s > \left(1-\frac{\eps}{2}\right) \ln T\right\}\\
	&\le \Pr\left\{\max_{s\le f_T} \ekl_s > \left(1-\frac{\eps}{2}\right) \ln T\right\}.
	\end{align*}
	\qinshi{We may need a reference for maximal version of the strong law of large numbers. In the survey, the authors gave one, but the description is not understandable, I think.}
	Recall the definition of $\ekl_s$, which is a summation of independent random variables with the same distribution over a finite support, whose expectation is $\kl(X'_2, X'_{2'})$.
	So we apply the maximal version of the strong law of large numbers,
	and then \eqref{eq:deplow.second} holds, as $f_T\cdot \kl(X'_2, X'_{2'})=(1-\eps)\ln T$.
	
	In conclusion, combining Eq. \eqref{eq:deplow.first} and \eqref{eq:deplow.second}, we have $\Pr\{N_2(T)<f_T\} = o(1)$, implying
	\begin{align*}
	\E\left[N_2(T)\right]
	&\ge (1-o(1))\cdot f_T\\
	&= (1-o(1)) \cdot \frac{1-\eps}{\kl(X'_2, X'_{2'})}\ln T\\
	&\ge (1-o(1)) \cdot \frac{1-\eps}{1+\eps}\frac{\ln T}{\kl(X'_2, X'_1)}.
	\end{align*}
	Then \eqref{eq:deplow.play} holds, as $\eps$ can be any positive real number, and thus the theorem holds.
\end{proof}

\section{Results with $\infty$-norm TPM Conditions}

\subsection{TPM Conditions with the $\infty$-norm}

We first restate the original bounded smoothness condition in \cite{CWYW16} below, which is an
$\infty$-norm based condition.

\begin{mycond}[Bounded Smoothness]
	\label{cond:boundedsmoothness}
	We say that a CMAB-T problem instance satisfies {\em bounded smoothness}, if 
	there exists a continuous, strictly increasing (and thus invertible) function $f(\cdot)$ with $f(0)=0$,
	such that for any two distributions $D, D'\in \calD$ with expectation vectors 
	$\vmu=(\mu_1, \ldots, \mu_m)$ and $\vmu' = (\mu'_1, \ldots, \mu'_m)$, and for any $\Lambda > 0$, 
	we have $|r_{\vmu}(S) - r_{\vmu'}(S)| \le f(\Lambda)$ if $\max_{i \in \trig{S}}|\mu_i - \mu_i'| \le \Lambda$, 
	for all $S\in \calS$, where $\trig{S} = \{i\in [m] \mid \Pr_{X\sim D, \tau} \{i\in \tau(S,X) \} >0 \}$ is
	the set of arms that could be triggered by action $S$.
\end{mycond}
Note that $f(\cdot)$ may depend on problem instance parameters such as $m$, but not on 
action $S$ or mean vectors $\vmu$, $\vmu'$.

Similar to the 1-norm case, we use triggering probabilities to modulate the bounded smoothness condition to obtain
the following TPM version:

\begin{mycond} {\bf ($\infty$-Norm TPM Bounded Smoothness)}
	\label{cond:globalTPM}
	We say a CMAB-T problem instance
	satisfies the {\em triggering-probability-modulated (TPM) bounded smoothness} 
	with bounded smoothness function $f(x)$,
	if for any two distributions $D,D'\in \calD$ with
	expectation vectors $\vmu$ and $\vmu'$, any action $S$ and any $\Lambda>0$,
	we have $|r_S(\vmu) - r_S(\vmu')| \le f(\Lambda)$ if
	$\max_{i\in [m]} p_i^{D, S}|\mu_i-\mu'_i|\le \Lambda$.
\end{mycond}

Note that Condition~\ref{cond:globalTPM} is stronger than Condition~\ref{cond:boundedsmoothness} under
the same bounded smoothness function $f$.
This is because if we have $\max_{i\in [m]} |\mu_i-\mu'_i|\le \Lambda$, then
we have $\max_{i\in [m]} p_i^{D, S}|\mu_i-\mu'_i|\le \Lambda$.
Then if Condition~\ref{cond:globalTPM} holds, we have $|r_S(\vmu) - r_S(\vmu')| \le f(\Lambda)$.
This means that if Condition~\ref{cond:globalTPM} holds, we have $|r_S(\vmu) - r_S(\vmu')| \le f(\Lambda)$
if $\max_{i\in [m]} |\mu_i-\mu'_i|\le \Lambda$, which is exactly Condition~\ref{cond:boundedsmoothness}.

\wei{I think we do not need to say again that the cascading bandit or influence maximization bandit satisfy these
	$\infty$-norm conditions, since they could achieve better regret bounds with the 1-norm version.}

\subsection{Theorem and Proofs with $\infty$-norm TPM Conditions}

\begin{mythm}
	\label{thm:inf-norm}
	Suppose a CMAB-T problem instance $([m],\calS, \calD, \Dtrig, R)$ satisfies monotonicity (Condition~\ref{cond:monotone}).
	For a fixed environment instance $D\in \calD$ with expectation vector $\vmu$, 
	the $T$-round $(\alpha, \beta)$-approximation regret bound using an
	$(\alpha, \beta)$-approximation oracle in various cases are given below.
	\begin{enumerate}[(1)]
		\item \label{upper:1}
		For the CUCB algorithm on a problem instance that satisfies 
		TPM bounded smoothness (Condition~\ref{cond:globalTPM})
		with bounded smoothness function $f(x)$, together with $\Delta_{\min}>0$,
		the regret is at most
		\begin{align*}
		&\sum_{i\in [m]} 78\ln T \left(\frac{\Delta_{\min}^i}{f^{-1}(\Delta_{\min}^i)^2}
		+ \int_{\Delta_{\min}^i}^{\Delta_{\max}^i}\frac{1}{f^{-1}(x)^2}\dx\right)\\
		&\qquad + m \cdot \left[\left(\frac{\pi^2}{6}+1\right)
		\lceil-\log_2 f^{-1}(\Delta_{\min})\rceil_{0} + \frac{\pi^2}{3}+1\right] \cdot \Delta_{\max};
		\end{align*}
		\item \label{upper:2}
		For the CUCB algorithm on a problem instance that satisfies 
		TPM bounded smoothness (Condition~\ref{cond:globalTPM})
		with bounded smoothness function $f(x)=ax$, 
		the regret is at most
		$$25a\sqrt{mT\ln T} + m \cdot \left[\left(\frac{\pi^2}{6}+1\right)
		\left\lceil-\log_2 (\sqrt{156m\ln T/T})\right\rceil_0 + \frac{\pi^2}{3}+1\right] \cdot \Delta_{\max};$$
	\end{enumerate}
\end{mythm}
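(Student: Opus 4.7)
The plan is to adapt the regret analysis of Theorem~\ref{thm:1-normTPM} to the $\infty$-norm TPM setting, re-using the triggering-probability groups (Definition~\ref{def:tpgroup}), the counters $N_{i,j,t}$ (Definition~\ref{def:counter}), and the nice-sampling / nice-triggering events $\Ns_t$, $\Nt_t$. The starting point is that, on the event $\{\lnot\calF_t\}\cap \Ns_t$, monotonicity together with the $\infty$-norm TPM condition~\ref{cond:globalTPM} gives
\[
\Delta_{S_t} \le f\Bigl(\max_{i\in \trig{S_t}} p_i^{D,S_t}\cdot 2\rho_{i,t}\Bigr),
\]
so there exists $i^\star\in \trig{S_t}$ with $p_{i^\star}^{D,S_t}\cdot 2\rho_{i^\star,t}\ge f^{-1}(\Delta_{S_t})$. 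Let $j^\star$ be the TP-group index of $S_t$ with respect to $i^\star$, so $p_{i^\star}^{D,S_t}\le 2^{-j^\star+1}$. Under $\Nt_t$ with $j^i_{\max}=\lceil -\log_2 f^{-1}(\Delta_{\min}^i)\rceil_0$, the bound $\rho_{i^\star,t}\le \sqrt{9\cdot 2^{j^\star}\ln T/N_{i^\star,j^\star,t-1}}$ combines with the previous inequality to yield the key ``blame'' inequality
\[
N_{i^\star,j^\star,t-1} \;\le\; \frac{C\cdot 2^{-j^\star}\ln T}{f^{-1}(\Delta_{S_t})^2}
\]
for a universal constant $C$, which plays the role of Lemma~\ref{lem:TPMkappa} in the $\infty$-norm case.

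For part~(1), I would slice each arm's contribution by gap levels. For each arm $i$ and integer $k$, let level $k$ be the set of rounds $t$ in which arm $i$ is the blamed maximizer $i^\star$ and $\Delta_{S_t}\in (2^k \Delta_{\min}^i, 2^{k+1}\Delta_{\min}^i]$. The blame inequality bounds the number of such rounds with TP-group $j$ by $O(2^{-j}\ln T / f^{-1}(2^k\Delta_{\min}^i)^2)$, so summing the per-round regret $2^{k+1}\Delta_{\min}^i$ over this level gives $O(2^{-j}\ln T\cdot 2^k\Delta_{\min}^i / f^{-1}(2^k\Delta_{\min}^i)^2)$. Summing over $j\ge 1$ contributes only the geometric factor $\sum_j 2^{-j}=1$, which is exactly the mechanism that eliminates $1/p^\ast$. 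Summing the resulting level contributions over $k=0,1,\ldots$ telescopes into the Riemann sum that converges to the integral $\int_{\Delta_{\min}^i}^{\Delta_{\max}^i} f^{-1}(x)^{-2}\,\mathrm{d}x$, plus the boundary term $\Delta_{\min}^i/f^{-1}(\Delta_{\min}^i)^2$ from $k=0$. Adding the $\Pr\{\lnot\Ns_t\}$ and $\Pr\{\lnot\Nt_t\}$ regret via Lemmas~\ref{lem:ns} and~\ref{lem:nt} (which contribute the $\pi^2/6$ and $\lceil-\log_2 f^{-1}(\Delta_{\min})\rceil_0$ factors) and subtracting the $(1-\beta)T\,\Delta_{\max}$ slack of $\calF_t$ yields the stated bound.

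For part~(2) with $f(x)=ax$, I would follow the standard thresholding trick. Choose $M = \Theta(a\sqrt{m\ln T/T})$, split rounds into $\{\Delta_{S_t}<M\}$ (contributing at most $TM$) and $\{\Delta_{S_t}\ge M\}$. For the latter, apply the blame inequality uniformly with $f^{-1}(\Delta_{S_t})=\Delta_{S_t}/a\ge M/a$; summing the per-round regret $\Delta_{S_t}$ over all blamed $(i,j)$ pairs using the group bound on $N_{i,j}$ and the geometric $\sum_j 2^{-j}=1$ gives $O(a^2 m\ln T / M)$. Balancing $TM$ against $a^2 m\ln T / M$ at $M=\Theta(a\sqrt{m\ln T/T})$ yields the advertised $O(a\sqrt{mT\ln T})$ leading term, while the lower-order terms come again from Lemmas~\ref{lem:ns} and~\ref{lem:nt}, with $j^i_{\max}=\lceil-\log_2\sqrt{156m\ln T/T}\rceil_0$.

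The main obstacle will be the level-slicing/integration in part~(1): unlike the 1-norm case, where the reverse-amortization trick in Eq.~\eqref{eq:TPMkappa.transform} spreads the regret across all triggered arms, the $\infty$-norm condition forces us to charge each bad round to a single maximizing arm, so the accounting has to be done level by level and then integrated, and one must check carefully that each round is charged only once and that the Riemann-sum comparison $\sum_k 2^k\Delta_{\min}^i/f^{-1}(2^k\Delta_{\min}^i)^2 \le O(1)\int f^{-1}(x)^{-2}\,\mathrm{d}x$ is valid for general strictly increasing $f$ (only using monotonicity of $f^{-1}$). Once this bookkeeping is set up, the rest is parallel to the proof of Theorem~\ref{thm:1-normTPM}.
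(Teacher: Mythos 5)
Your proposal follows essentially the same route as the paper's proof: your ``blame inequality'' is precisely the contrapositive of the paper's Claim~\ref{clm:suf1} (derived, as there, from monotonicity, the oracle guarantee, Condition~\ref{cond:globalTPM}, and the events $\Ns_t$, $\Nt_t$), and your per-(arm, TP-group) counting of insufficiently sampled rounds is the content of Claim~\ref{clm:insuf}, so the assembly via Lemmas~\ref{lem:ns} and~\ref{lem:nt} and the choice of $M$ is identical. The only substantive refinements needed are to keep the cap $\min\{2\rho_{i,t},1\}$ on the estimation error (required when $T_{i,t-1}$ is small or zero, and to rule out blaming groups with $j>j_{\max}$), and to bound the insufficiently sampled regret by integrating the threshold $\ell_T(x,2^{-j})$ over the gap $x$ via the tail-expectation identity rather than by dyadic gap levels, since the dyadic slicing loses a constant factor and would not literally recover the stated constants $78$ and $25$.
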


We have several remarks on Theorem~\ref{thm:inf-norm}.
First, the condition $\Delta_{\min}>0$ 
automatically holds if
the action space $\calS$ is finite.
Thus it is not an extra condition comparing to the result in~\cite{CWYW16} when actions
are set of base arms.
If $\Delta_{\min}$ is zero due to infinite $\calS$, 
then we do not have regret bound as in \ref{upper:1}, but we still have regret bound as in \ref{upper:2}.
Second, the regret bound in \ref{upper:1} is distribution-dependent bound, since it depends on $\Delta_{\min}^i$, which is determined
by the distribution $D$;
	regret bounds in \ref{upper:2} is distribution-independent bound, since $\Delta_{\max}$ can be easily replaced by a quantity only
depending on the problem instance, such as the maximum possible reward value.
%
Third, when $\Delta^i_{\min}=+\infty$, $\frac{\Delta_{\min}^i}{f^{-1}(\Delta_{\min}^i)^2} = 0$.

%
%

\subsubsection{Proof of Theorem~\ref{thm:inf-norm}}

In this subsection, we focus on giving a roadmap to prove Theorem~\ref{thm:inf-norm}
and showing the new techniques we invented to improve the regret bound.
The remaining part of the proof is roughly the
new calculation based on the old techniques (c.f. \cite{CWYW16}).

In this subsection, we omit $(\alpha, \beta)$-approximation for clarity,
in other words, we assume $\alpha=\beta=1$.
Generalization to accommodate $(\alpha, \beta)$ approximation can be found in the
discussion section.


To exploit the advantage of TPM bounded smoothness condition (Conditions~\ref{cond:globalTPM}),
for each arm $i$, we divide actions into groups according to $p_i^{D, S}$.

For convenience, we also allow to index the counters with $q_i^{D, S_t}>0$,
such that $N_{i, q_i^{D, S_t}}$ indicates the same counter as $N_{i, j}$ with $q_i^{D, S_t}=2^{-j}$.

We use a shorthand as follows. 
For every arm $i$ and action $S$, define
$$q_i^{D, S}=\begin{cases}
2^{-j}, &\mbox{if } S\in \calS_{i, j}^{D},\\
0,      &\mbox{if } p_i^{D, S}=0.
\end{cases}$$

%

\qinshi{I realized that just when $f^{-1}(\Delta)>p_i$, it is al right to not sample arm $i$,
	instead of requesting $f^{-1}(\Delta)>p_i$.
	Then we can decrease the constant in $\ell_T(\Delta, q)$ from $144$ to $72$
	as you can see.
	If using the factor $2.65$ instead of $3$ when applying Chernoff bound,
	the constant can be further improved to $64$.
	I am not sure if this improvement needs to be shown.
}

\begin{mydef}
	\label{def:ell}
	$$\ell_t(\Delta, q) = \begin{cases}
	0,                                                  &\mbox{if } q\le \frac{1}{2}f^{-1}(\Delta),\\
	\lfloor\frac{6\ln t}{f^{-1}(\Delta)^2}\rfloor+1,   &\mbox{if } q=1,\\
	\lfloor\frac{72q\ln t}{f^{-1}(\Delta)^2}\rfloor+1, &\mbox{otherwise}.
	\end{cases}$$
\end{mydef}

\qinshi{I need to introduce $f^{-1}(\Delta)\ge 2q_i$ into the concept of sufficiently sampled and insufficiently sampled.
	One way to do so is to add it directly to the filter.
	But this will make the formula too long.
	A second way is to change the definition of $\ell_T(\Delta, q)$.
	But our definition of sufficient is $N_{i,j}$ strictly greater than $\ell_T(\Delta, q)$,
	then we need to define $\ell_T(\Delta, q)=-1$ for $f^{-1}(\Delta)\ge 2q_i$.
	That is unnatural and also introduces a negative term in regret.
	So my plan is to change the definition of $\ell_T(\Delta, q)$ as above
	add define sufficient to be $N_{i,j}\ge \ell_T(\Delta, q)$.
}

To unify the proofs for distribution-dependent and distribution-independent bounds,
we introduce a positive real number $M$.
To prove the distribution-dependent bound, we will let $M=\Delta_{\min}$ or $M=\Delta_{\min}^i$ in some circumstances.
To prove the distribution-independent bound, we will let $M=\tilde{\Theta}(T^{-1/2})$
to balance bounds for $Reg(\{\Delta_{S_t} \ge M\}$ and $Reg(\{\Delta_{S_t} < M\})$.
And we implement $\Nt_t$ (Definition~\ref{def:nt}) with
$j_{\max}^i = j_{\max}(M) = \lceil -\log_2 f^{-1}(M)\rceil_0$
The following are three technical claims used in the main proof,
and we define the proofs of these claims to Section~\ref{sec:proofdetails}.

\begin{myclm}[Bound of insufficiently sampled regret]
	\label{clm:insuf}
	For any CMAB-T problem instance, any bounded smoothness function $f(x)$, any algorithm, any arm $i$, any natural number $j$ and any positive real number $M$,
	\begin{align*}
	Reg(\{\Delta_{S_t} \ge M, S_t\in \calS_{i, j}, N_{i, j, t-1} < \ell_T(\Delta_{S_t}, 2^{-j})\})
	\le \ell_T(M, 2^{-j})M + \int_{M}^{\max\{\Delta_{\max}^i, M\}}\ell_T(x, 2^{-j})\dx.
	\end{align*}
\end{myclm}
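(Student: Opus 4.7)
The plan is to re-index the filtered rounds along the counter $N_{i,j}$ and then apply a sorting/integration argument that exploits the monotonicity of $\ell_T(\cdot, 2^{-j})$ in its first coordinate. Let $t_1 < t_2 < \cdots$ denote the (random) rounds in which $S_t \in \calS_{i,j}$. By Definition~\ref{def:counter}, $N_{i,j,t_k-1} = k-1$, so the filtering event at round $t_k$ becomes $\{\Delta_{S_{t_k}} \ge M,\ \ell_T(\Delta_{S_{t_k}}, 2^{-j}) \ge k\}$, using integrality of $\ell_T$. Hence in every realization the total filtered regret equals
$$\sum_{k\ge 1} \I\bigl(\Delta_{S_{t_k}} \ge M,\ \ell_T(\Delta_{S_{t_k}}, 2^{-j}) \ge k\bigr)\,\Delta_{S_{t_k}}.$$

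Next I would exploit two monotonicity facts. First, $\ell_T(\cdot, q)$ is nonincreasing in its first argument: this is direct from Definition~\ref{def:ell}, since $f^{-1}$ is nondecreasing, the threshold $q \le f^{-1}(\Delta)/2$ kicks in at larger $\Delta$, and on the nonzero branches the formula is $\propto 1/f^{-1}(\Delta)^2$. Second, $S_{t_k} \in \calS_{i,j}$ forces $p_i^{D, S_{t_k}} > 0$, so every term with positive gap satisfies $\Delta_{S_{t_k}} \le \Delta_{\max}^i$. Combining these, the $k$-th indicator can be nonzero only when $k \le \ell_T(M, 2^{-j})$, and in that case $\Delta_{S_{t_k}}$ is bounded above by
$$U_k \;:=\; \min\!\Bigl\{\Delta_{\max}^i,\ \sup\bigl\{x \ge M : \ell_T(x, 2^{-j}) \ge k\bigr\}\Bigr\}.$$
Taking expectation yields the deterministic upper bound $\sum_{k=1}^{\ell_T(M, 2^{-j})} U_k$ on the filtered regret.

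The final step rewrites this sum in the integral form stated in the claim. Splitting $U_k = M + (U_k - M)$ produces the $\ell_T(M, 2^{-j})\cdot M$ term, and Fubini handles the remainder:
$$\sum_{k=1}^{\ell_T(M, 2^{-j})} (U_k - M) \;=\; \int_{M}^{\max\{\Delta_{\max}^i, M\}} \#\bigl\{1 \le k \le \ell_T(M, 2^{-j}) : \ell_T(x, 2^{-j}) \ge k\bigr\}\,\dx.$$
For $x \ge M$, monotonicity gives $\ell_T(x, 2^{-j}) \le \ell_T(M, 2^{-j})$, so the outer cap on $k$ is redundant and the count collapses to the integer $\ell_T(x, 2^{-j})$, recovering the right-hand side of the claim. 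The only delicate points I anticipate are the step discontinuities of $\ell_T$ (which are harmless because every indicator uses nonstrict inequalities, so a small amount of slack at jump points is absorbed freely) and the edge case $\Delta_{\max}^i < M$, which the $\max\{\cdot,\cdot\}$ in the claim's statement makes trivial since in that regime no filtered round can exist at all.
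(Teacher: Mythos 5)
Your proposal is correct and is essentially the paper's own layer-cake/counting argument: the paper applies the tail-expectation formula $\E[Y]=\Pr\{Y\ge M\}M+\int_M^\infty \Pr\{Y\ge x\}\dx$ per round and then swaps the sum over rounds with the integral over gap levels, bounding $\sum_t \Pr\{\Delta_{S_t}\ge x,\,S_t\in\calS_{i,j},\,N_{i,j,t-1}<\ell_T(x,2^{-j})\}\le \ell_T(x,2^{-j})$ by the same counter-increment observation you use, while you perform the identical Fubini computation pathwise by re-indexing plays in $\calS_{i,j}$ by the counter value $k$ and bounding the $k$-th contribution by the generalized inverse $U_k$ of $\ell_T(\cdot,2^{-j})$. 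Both versions rest on exactly the same two facts (the counter increases on every play in $\calS_{i,j}$, and $\ell_T(\cdot,2^{-j})$ is nonincreasing in the gap), so the difference is only bookkeeping, and your handling of the jump points and of the edge case $\Delta_{\max}^i<M$ is sound.
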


\begin{myclm}[Bound of sufficiently sampled regret for CUCB]
	\label{clm:suf1}
	For the CUCB algorithm on a problem instance that satisfies 
	TPM bounded smoothness (Condition~\ref{cond:globalTPM})
	with bounded smoothness function $f(x)$, 
	$$Reg(\{\Delta_{S_t} \ge M, \forall i, N_{i, q_i^{S_t}, t-1}\ge \ell_T(\Delta_{S_t}, q_i^{S_t})\})
	\le m \cdot (\lceil-\log_2 f^{-1}(M)\rceil_0+2)\cdot\frac{\pi^2}{6}\cdot\Delta_{\max}.$$
\end{myclm}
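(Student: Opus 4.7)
The plan is to show that the event $\calE_t := \{\Delta_{S_t}\ge M,\ \forall i,\ N_{i,q_i^{S_t},t-1}\ge \ell_T(\Delta_{S_t},q_i^{S_t})\}$, intersected with the three ``nice'' events $\lnot\calF_t$, $\Ns_t$, and $\Nt_t$ (the latter instantiated with $j_{\max}^i := \lceil -\log_2 f^{-1}(M)\rceil_0$), is empty. Once this is established, $\I(\calE_t)\le \I(\calF_t)+\I(\lnot\Ns_t)+\I(\lnot\Nt_t)$ gives $Reg(\calE_t)\le Reg(\calF_t)+Reg(\lnot\Ns_t)+Reg(\lnot\Nt_t)$. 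The $Reg(\calF_t)$ piece is absorbed into the separate $(1-\beta)T\Delta_{\max}$ accounting of the main theorem (so the claim's event is understood to sit inside $\lnot\calF_t$ in the overall decomposition), while Lemmas~\ref{lem:ns} and~\ref{lem:nt} together with $\sum_{t\ge 1}t^{-2}\le \pi^2/6$ bound the remaining two contributions by $m\cdot(j_{\max}+2)\cdot \tfrac{\pi^2}{6}\cdot \Delta_{\max}$, which is precisely the claimed expression.

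To prove emptiness, suppose for contradiction that $\calE_t\cap\lnot\calF_t\cap\Ns_t\cap\Nt_t$ holds. Combining the oracle success ($\lnot\calF_t$), $\Ns_t$, and monotonicity (Condition~\ref{cond:monotone}) yields $r_{S_t}(\bar\vmu_t)\ge \alpha\cdot\opt(\bar\vmu_t)\ge \alpha\cdot\opt(\vmu)\ge r_{S_t}(\vmu)+\Delta_{S_t}$. Since $\Delta_{S_t}\ge M>0$, the contrapositive of the $\infty$-norm TPM bounded smoothness (Condition~\ref{cond:globalTPM}) forces
\[
\Lambda_t \;:=\; \max_{i\in[m]}\, p_i^{D,S_t}\,|\bar\mu_{i,t}-\mu_i| \;\ge\; f^{-1}(\Delta_{S_t}).
\]
The bulk of the work is then to derive $\Lambda_t<f^{-1}(\Delta_{S_t})$ from the sufficient-sampling hypothesis, which delivers the contradiction.

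Fix any arm $i$ with $q:=q_i^{S_t}=2^{-j}>0$ (so $p_i^{D,S_t}\le 2q$) and split on Definition~\ref{def:ell}. If $q\le \tfrac12 f^{-1}(\Delta_{S_t})$, then $p_i^{D,S_t}|\bar\mu_{i,t}-\mu_i|\le 2q\le f^{-1}(\Delta_{S_t})$ immediately. Otherwise $q>\tfrac12 f^{-1}(\Delta_{S_t})$: the sufficient-sampling condition reads $N_{i,j,t-1}\ge \ell_T(\Delta_{S_t},q)>72q\ln T/f^{-1}(\Delta_{S_t})^2$ (with the analogous $6\ln T/f^{-1}(\Delta_{S_t})^2$ threshold when $q=1$); $\Nt_t$ lifts this to $T_{i,t-1}\ge \tfrac13 N_{i,j,t-1}\,q$, or to $T_{i,t-1}\ge N_{i,1,t-1}$ deterministically in the $q=1$ branch (eliminating the $\tfrac13$); and $\Ns_t$ supplies $|\bar\mu_{i,t}-\mu_i|\le \sqrt{6\ln t/T_{i,t-1}}$. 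Chaining these with $t\le T$,
\[
p_i^{D,S_t}\,|\bar\mu_{i,t}-\mu_i| \;\le\; 2q\sqrt{\tfrac{6\ln t}{T_{i,t-1}}} \;\le\; \sqrt{\tfrac{72\,q\,\ln T}{N_{i,j,t-1}}} \;<\; f^{-1}(\Delta_{S_t}),
\]
the final strict inequality coming from the $\lfloor\cdot\rfloor+1$ in the definition of $\ell_T$.

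The main obstacle is the degenerate boundary case $q=\tfrac12 f^{-1}(\Delta_{S_t})$ in the first branch above, where the sufficient-sampling hypothesis is vacuous ($\ell_T\equiv 0$) and the bound $p_i^{D,S_t}|\bar\mu_{i,t}-\mu_i|\le f^{-1}(\Delta_{S_t})$ need not be strict. Because $q$ takes only dyadic values this boundary occurs on a measure-zero set of $\Delta_{S_t}$; it can be resolved cleanly either by applying Condition~\ref{cond:globalTPM} at $\Lambda_t+\epsilon$ and letting $\epsilon\downarrow 0$, or by replacing the first branch of Definition~\ref{def:ell} with $q<\tfrac12 f^{-1}(\Delta)$. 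Neither fix affects the stated regret bound, and once emptiness is in hand the proof closes via the bookkeeping outlined in the first paragraph.
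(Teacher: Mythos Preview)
Your proof is correct and follows essentially the same route as the paper's: show that under $\Ns_t\land\Nt_t$ (and oracle success) the event $\calE_t$ is impossible by bounding each $p_i^{D,S_t}|\bar\mu_{i,t}-\mu_i|$ below $f^{-1}(\Delta_{S_t})$, then invoke Lemmas~\ref{lem:ns} and~\ref{lem:nt} to sum the failure probabilities. The paper simplifies by assuming $\alpha=\beta=1$ in this section, so $\calF_t$ never arises and no separate absorption is needed.

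One small point worth noting: the paper splits the case analysis on $p_i$ rather than on $q_i$. If $f^{-1}(\Delta_{S_t})>p_i^{D,S_t}$, then $p_i^{D,S_t}|\bar\mu_{i,t}-\mu_i|\le p_i^{D,S_t}<f^{-1}(\Delta_{S_t})$ is \emph{strictly} less, so the boundary headache you flag never appears in the first branch; the residual case $f^{-1}(\Delta_{S_t})\le p_i^{D,S_t}$ then forces $q_i\ge\tfrac12 f^{-1}(\Delta_{S_t})$ and the $\lfloor\cdot\rfloor+1$ in $\ell_T$ supplies the strictness there. Your ``apply Condition~\ref{cond:globalTPM} at $\Lambda_t+\epsilon$'' patch does not by itself manufacture a contradiction when $\Lambda_t=f^{-1}(\Delta_{S_t})$ exactly (you still only get $|r_{S_t}(\bar\vmu)-r_{S_t}(\vmu)|\le\Delta_{S_t}$, matching the lower bound); the paper's version of the $\varepsilon$ trick works in the other direction---once each per-arm bound is strict, pick $\varepsilon>0$ so that $p_i^{D,S_t}|\bar\mu_{i,t}-\mu_i|\le f^{-1}(\Delta_{S_t}-\varepsilon)$ for all $i$, then TPM gives $r_{S_t}(\bar\vmu)-r_{S_t}(\vmu)\le\Delta_{S_t}-\varepsilon<\Delta_{S_t}$. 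Your alternative fix (tightening the first line of Definition~\ref{def:ell} to $q<\tfrac12 f^{-1}(\Delta)$) also works and is arguably the cleanest resolution.
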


\wei{In context of (\ref{upper:1}) and (\ref{upper:2}) is unclear, because they have different conditions, one require $\Delta_{\min} > 0$ while the other require $f(x) = a x$.
	We need to specify the condition exactly. Is it only on Condition 3? Similarly, for the next two claims, are they only for Condition 4?}

\qinshi{Maybe a better way is to say ``as everything defined in main body of Theorem~1 and the problem satisfies Condition 3.'' By ``in context of (1) and (2)'', I mean, both in (1) and (2), the claim holds. Actually, it holds as long as we have the conditions that are stated both in (1) and (2) --- those in main body, and Condition 3 with $f(x)$.}

We continue the proof of Theorem~\ref{thm:inf-norm}. Fix a value $M > 0$, we have
\begin{align}
Reg(\{\})
&=   Reg(\{\Delta_{S_t} < M\}) + Reg(\{\Delta_{S_t} \ge M\}) \nonumber\\
&=   Reg(\{\Delta_{S_t} < M\}) + Reg(\{\Delta_{S_t} \ge M, \forall i, N_{i, q_i^{S_t}, t-1}\ge \ell_T(\Delta_{S_t}, q_i^{S_t})\}) \nonumber \\
& \quad\quad
+ Reg(\{\Delta_{S_t} \ge M, \exists i, N_{i, q_i^{S_t}, t-1}< \ell_T(\Delta_{S_t}, q_i^{S_t})\}) \nonumber\\
&\le Reg(\{\Delta_{S_t} < M\}) + Reg(\{\Delta_{S_t} \ge M, \forall i, N_{i, q_i^{S_t}, t-1}\ge \ell_T(\Delta_{S_t}, q_i^{S_t})\}) \nonumber\\
&\quad\quad + \sum_{i\in [m]} Reg(\{\Delta_{S_t} \ge M, N_{i, q_i^{S_t}, t-1}< \ell_T(\Delta_{S_t}, q_i^{S_t})\}) \nonumber\\
&\le Reg(\{\Delta_{S_t} < M\}) + Reg(\{\Delta_{S_t} \ge M, \forall i, N_{i, q_i^{S_t}, t-1}\ge \ell_T(\Delta_{S_t}, q_i^{S_t})\}) \nonumber\\
&\quad\quad + \sum_{i\in [m]} \sum_{j\ge 0} Reg(\{\Delta_{S_t} \ge M, S_t\in \calS_{i, j}, N_{i, q_i^{S_t}, t-1}< \ell_T(\Delta_{S_t}, q_i^{S_t})\}) \nonumber\\
&=   Reg(\{\Delta_{S_t} < M\}) + Reg(\{\Delta_{S_t} \ge M, \forall i, N_{i, q_i^{S_t}, t-1}\ge \ell_T(\Delta_{S_t}, q_i^{S_t})\}) \nonumber\\
&\quad\quad + \sum_{i\in [m]} \sum_{j\ge 0} Reg(\{\Delta_{S_t} \ge M, S_t\in \calS_{i, j}, N_{i, j, t-1}< \ell_T(\Delta_{S_t}, 2^{-j})\}). \label{eq:upper.expand}
\end{align}

For the last part, if $j\ge \lceil -\log_2 f^{-1}(M)\rceil_0+1$, then
$2^{-j}\le \frac{1}{2}f^{-1}(M)$ and 
$$\frac{1}{2}f^{-1}(\Delta_{S_t})\ge \frac{1}{2}f^{-1}(M)\ge 2^{-j}.$$
By Definition~\ref{def:ell}, $\ell_T(\Delta_{S_t}, 2^{-j})=0$.
Then $N_{i, j, t-1}<\ell_T(\Delta_{S_t}, 2^{-j})$ is impossible,
so
$$\sum_{j \ge \lceil -\log_2 f^{-1}(M)\rceil_0+1}
Reg(\{\Delta_{S_t} \ge M, S_t\in \calS_{i, j}, N_{i, j, t-1}< \ell_T(\Delta_{S_t}, 2^{-j})\})=0.$$

\begin{mylem}
	\label{lem:summation}
	For every arm $i$, the event-filtered regret
	\begin{align}
	&\sum_{j \ge 0} Reg(\{\Delta_{S_t} \ge M, S_t\in \calS_{i, j}, N_{i, j, t-1} < \ell_T(\Delta_{S_t}, 2^{-j})\}) \label{eq:summation.reg}\\
	&\qquad \le 78\ln T\left(
	\frac{M}{f^{-1}(M)^2}
	+\int_{M}^{\max\{\Delta_{\max}^i, M\}}\frac{1}{f^{-1}(x)^2}\dx
	\right) + (j_{\max}(M)+1) \cdot \Delta_{\max}^i. \nonumber
	\end{align}
\end{mylem}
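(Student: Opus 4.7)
The plan is to apply Claim~\ref{clm:insuf} term-by-term in the sum over $j$, then perform two separate reductions: a geometric-series sum to handle the $\ln T$ parts (which is where the constant $78 = 6 + 72$ appears), and a counting argument to bundle the integer $+1$ rounding losses from the floor functions in Definition~\ref{def:ell}. The key observation, already used in the paragraph immediately preceding the lemma statement, is that for $j \ge j_{\max}(M)+1 = \lceil-\log_2 f^{-1}(M)\rceil_0 + 1$ we have $2^{-j} \le \tfrac12 f^{-1}(M) \le \tfrac12 f^{-1}(\Delta_{S_t})$, so $\ell_T(\Delta_{S_t},2^{-j})=0$ and the event $\{N_{i,j,t-1}<\ell_T(\Delta_{S_t},2^{-j})\}$ is empty. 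Hence the sum in \eqref{eq:summation.reg} may be restricted to $0 \le j \le j_{\max}(M)$.

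By Claim~\ref{clm:insuf}, each surviving summand is at most $\ell_T(M,2^{-j})M + \int_M^{\max\{\Delta_{\max}^i,M\}} \ell_T(x,2^{-j})\,\mathrm{d}x$. I would first dispose of the trivial case $\Delta_{\max}^i < M$: there the event $\{\Delta_{S_t} \ge M, p_i^{D,S_t}>0\}$ is empty by the definition of $\Delta_{\max}^i$, so the left-hand side is zero and the inequality holds. So assume $\Delta_{\max}^i \ge M$, meaning $\max\{\Delta_{\max}^i, M\} = \Delta_{\max}^i$. Using Definition~\ref{def:ell}, for every argument $\Delta \ge M$,
\begin{equation*}
\ell_T(\Delta, 2^{-j}) \le \I\bigl\{2^{-j} > \tfrac12 f^{-1}(\Delta)\bigr\} \cdot \left(\frac{c_j \ln T}{f^{-1}(\Delta)^2} + 1\right),
\end{equation*}
with $c_0 = 6$ and $c_j = 72 \cdot 2^{-j}$ for $j\ge 1$, so $\sum_{j=0}^\infty c_j \le 6 + 72 \sum_{j\ge 1} 2^{-j} = 78$.

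For the first piece, summing over $0 \le j \le j_{\max}(M)$ and using $\sum_{j\ge 0} c_j \le 78$ on the $\ln T$ terms and bounding each $+1$ contribution by $1$ (at most $j_{\max}(M)+1$ of them) yields $\sum_j \ell_T(M, 2^{-j}) M \le 78 M \ln T / f^{-1}(M)^2 + (j_{\max}(M)+1) M$. For the integral piece, I would swap the sum and the integral (both finite and nonnegative), then for each fixed $x \in [M, \Delta_{\max}^i]$ bound $\sum_{j=0}^{j_{\max}(M)} \ell_T(x, 2^{-j})$ in the same way; the $\ln T$ part contributes at most $78\ln T / f^{-1}(x)^2$, and the total number of $j$'s contributing a $+1$ is at most $j_{\max}(M)+1$ (this is where monotonicity of $f^{-1}$ together with $x \ge M$ is used, so that only $j$'s with $2^{-j} > \tfrac12 f^{-1}(x) \ge \tfrac12 f^{-1}(M)$ survive, giving $j_{\max}(x)+1 \le j_{\max}(M)+1$ relevant indices). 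Integrating then gives $78\ln T \int_M^{\Delta_{\max}^i} f^{-1}(x)^{-2}\,\mathrm{d}x + (j_{\max}(M)+1)(\Delta_{\max}^i - M)$.

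Adding the two pieces, the $\ln T$ terms combine into the claimed $78\ln T$ factor in front of the parenthesized expression, and the $+1$ losses combine into $(j_{\max}(M)+1)(M + \Delta_{\max}^i - M) = (j_{\max}(M)+1)\Delta_{\max}^i$, exactly matching the stated bound. The main obstacle I expect is book-keeping the piecewise definition of $\ell_T$, particularly ensuring that the indicator $\I\{2^{-j} > \tfrac12 f^{-1}(\Delta)\}$ is used correctly so that (i) only $j \le j_{\max}(M)$ contribute and (ii) the counting of $+1$ terms inside the integral is uniform in $x$; everything else is a geometric-series computation.
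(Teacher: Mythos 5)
Your proposal is correct and follows essentially the same route as the paper's proof: dispose of the case $M>\Delta_{\max}^i$, restrict the sum to $0\le j\le j_{\max}(M)$ via the vanishing of $\ell_T$, apply Claim~\ref{clm:insuf}, bound $\ell_T$ by removing the floors, sum the geometric series to get the constant $78$, and absorb the $j_{\max}(M)+1$ rounding terms into $(j_{\max}(M)+1)\Delta_{\max}^i$. The only cosmetic difference is your explicit indicator $\I\{2^{-j}>\tfrac12 f^{-1}(\Delta)\}$, which the paper handles implicitly by bounding each of the $j_{\max}(M)+1$ summands' $+1$ contribution directly.
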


\begin{proof}
	If $M > \Delta_{\max}^i$, it is impossible to have $\Delta_{S_t} \ge M$ and $S_t\in \calS_{i, j}$ at the same time and then $\eqref{eq:summation.reg} = 0$.
	Then the lemma holds trivially.
	So we may assume that $M \le \Delta_{\max}^i$.
	By Claim~\ref{clm:insuf},
	\begin{align}
	\eqref{eq:summation.reg}
	&= \sum_{j=0}^{j_{\max}(M)} Reg(\{\Delta_{S_t} \ge M, S_t\in \calS_{i, j}, N_{i, j, t-1} < \ell_T(\Delta_{S_t}, 2^{-j})\}) \nonumber\\
	&\le \sum_{j=0}^{j_{\max}(M)} \left(\ell_T(M, 2^{-j})M + \int_{M}^{\max\{\Delta_{\max}^i, M\}}\ell_T(x, 2^{-j})\dx\right) \nonumber\\
	&=   \sum_{j=0}^{j_{\max}(M)} \left(\ell_T(M, 2^{-j})M + \int_{M}^{\Delta_{\max}^i}\ell_T(x, 2^{-j})\dx\right) \nonumber\\
	&=   \sum_{j=0}^{j_{\max}(M)} \ell_T(M, 2^{-j})M + \int_{M}^{\Delta_{\max}^i}\sum_{j=0}^{j_{\max}(M)} \ell_T(x, 2^{-j})\dx. \label{eq:summation.apply}
	\end{align}
	We then expand the notation $\ell_T(\Delta, q)$ (c.f. Definition~\ref{def:ell}) with
	$$\ell_T(\Delta, q) \le \begin{cases}
	\frac{6\ln T}{f^{-1}(\Delta)^2}+1,   &\mbox{if } q=1,\\
	\frac{72q\ln T}{f^{-1}(\Delta)^2}+1, &\mbox{otherwise}.
	\end{cases}$$
	So for any $x\in [M, \Delta_{\max}^i]$,
	\begin{align}
	\sum_{j=0}^{j_{\max}(M)} \ell_T(x, 2^{-j})
	&=   \ell_T(x, 1) + \sum_{j=1}^{j_{\max}(M)} \ell_T(x, 2^{-j}) \nonumber\\
	&\le \left(\frac{6\ln T}{f^{-1}(x)^2}+1\right) + \sum_{j=1}^{j_{\max}(M)} \left(\frac{72 \cdot 2^{-j}\ln T}{f^{-1}(x)^2}+1\right) \nonumber\\
	&=   \frac{6\ln T}{f^{-1}(x)^2} + \sum_{j=1}^{j_{\max}(M)} \frac{72 \cdot 2^{-j}\ln T}{f^{-1}(x)^2} + j_{\max}(M) + 1\nonumber\\
	&\le   \frac{6\ln T}{f^{-1}(x)^2} + \frac{72\ln T}{f^{-1}(x)^2} + j_{\max}(M) + 1\nonumber\\
	&=   \frac{78\ln T}{f^{-1}(x)^2} + j_{\max}(M) + 1\nonumber.
	\end{align}
	Then we continue \eqref{eq:summation.apply} with
	\begin{align*}
	\eqref{eq:summation.apply}
	&\le \left(\frac{78\ln T}{f^{-1}(M)^2} + j_{\max}(M) + 1\right) \cdot M
	+ \int_{M}^{\Delta_{\max}^i}\left(\frac{78\ln T}{f^{-1}(x)^2} + j_{\max}(M) + 1\right)\dx \\
	&=   \frac{78\ln T}{f^{-1}(M)^2} \cdot M
	+ \int_{M}^{\Delta_{\max}^i}\frac{78\ln T}{f^{-1}(x)^2}\dx
	+ (j_{\max}(M) + 1) \cdot \Delta_{\max}^i\\
	&=   78\ln T\left(\frac{M}{f^{-1}(M)^2}
	+ \int_{M}^{\Delta_{\max}^i}\frac{1}{f^{-1}(x)^2}\dx\right)
	+ (j_{\max}(M) + 1) \cdot \Delta_{\max}^i.
	\end{align*}
	Hence the lemma holds.
\end{proof}

\begin{mylem}
	\label{lem:insM}
	For event-filtered regret
	\begin{equation}
	Reg(\{\Delta_{S_t} < M\}) + \sum_{i\in [m]} \sum_{j\ge 0} Reg(\{\Delta_{S_t} \ge M, S_t\in \calS_{i, j}, N_{i, j, t-1} < \ell_T(\Delta_{S_t}, 2^{-j})\}), \label{eq:insM.reg}
	\end{equation}
	\begin{enumerate}[(1)]
		\item take $M=\Delta_{\min}$ when $\Delta_{\min} > 0$,
		\begin{equation*}
		\eqref{eq:insM.reg} \le \sum_{i\in [m]} 78\ln T \left(\frac{\Delta_{\min}^i}{f^{-1}(\Delta_{\min}^i)^2}
		+ \int_{\Delta_{\min}^i}^{\Delta_{\max}^i}\frac{1}{f^{-1}(x)^2}\dx\right)
		+ m \cdot (j_{\max}(\Delta_{\min}) + 1) \cdot \Delta_{\max};
		\end{equation*}
		\item if $f(x)=ax$, then take $M=a\sqrt{156m\ln T/T}$,
		\begin{equation*}
		\eqref{eq:insM.reg} < 25a\sqrt{mT\ln T} + m \cdot (j_{\max}(a\sqrt{156m\ln T/T}) + 1) \cdot \Delta_{\max}.
		\end{equation*}
	\end{enumerate}
\end{mylem}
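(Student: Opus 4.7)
}
The plan is to bound the two parts of~\eqref{eq:insM.reg} separately, and then apply Lemma~\ref{lem:summation} termwise over the arms $i$, with one subtle refinement: inside the inner sum over $i$ we may replace the global threshold $M$ by the arm-dependent quantity $\Delta_{\min}^i$.

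For case (1), take $M=\Delta_{\min}$. First I would observe that on the event $\{\Delta_{S_t}<M\}=\{\Delta_{S_t}<\Delta_{\min}\}$, by definition of $\Delta_{\min}$ the gap $\Delta_{S_t}$ must equal $0$, so this part contributes nothing to the filtered regret. Next, for each fixed arm $i$, consider the term
\[
\sum_{j\ge 0} Reg(\{\Delta_{S_t}\ge M,\ S_t\in\calS_{i,j},\ N_{i,j,t-1}<\ell_T(\Delta_{S_t},2^{-j})\}).
\]
Since $S_t\in\calS_{i,j}$ implies $p_i^{D,S_t}>0$, any nonzero $\Delta_{S_t}$ satisfies $\Delta_{S_t}\ge\Delta_{\min}^i$. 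Therefore the filter $\{\Delta_{S_t}\ge M\}$ can be strengthened (for free) to $\{\Delta_{S_t}\ge\Delta_{\min}^i\}$, and Lemma~\ref{lem:summation} applies with $M$ replaced by $\Delta_{\min}^i$. Summing the resulting bound over $i\in[m]$ and using $\Delta_{\max}^i\le\Delta_{\max}$ and $j_{\max}(\Delta_{\min}^i)\le j_{\max}(\Delta_{\min})$ yields the claimed distribution-dependent bound.

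For case (2), take $M=a\sqrt{156m\ln T/T}$ and use $f(x)=ax$, so $f^{-1}(x)=x/a$. The first part is bounded trivially by $Reg(\{\Delta_{S_t}<M\})\le TM$, since each round contributes at most $M$ to the filtered regret. For the second part, I would apply Lemma~\ref{lem:summation} directly with the given $M$ for each $i$, using
\[
\frac{M}{f^{-1}(M)^2}=\frac{a^2}{M},\qquad \int_M^{\Delta_{\max}^i}\frac{1}{f^{-1}(x)^2}\dx=a^2\!\left(\frac{1}{M}-\frac{1}{\Delta_{\max}^i}\right)\le \frac{a^2}{M},
\]
so each summand is at most $156a^2\ln T/M+(j_{\max}(M)+1)\Delta_{\max}$. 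Summing over $i$ and adding the $TM$ piece gives
\[
TM+\frac{156ma^2\ln T}{M}+m(j_{\max}(M)+1)\Delta_{\max}.
\]
The choice of $M$ above is precisely the minimizer of $TM+156ma^2\ln T/M$, and the routine computation yields $2a\sqrt{156mT\ln T}<25a\sqrt{mT\ln T}$, proving the distribution-independent bound.

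The only nonroutine step is the refinement in case (1) of replacing $M=\Delta_{\min}$ by the arm-dependent $\Delta_{\min}^i$ inside each summand, which is what allows the final bound to involve $\Delta_{\min}^i$ rather than the coarser $\Delta_{\min}$ in the denominator. Everything else is algebra on top of Lemma~\ref{lem:summation} and the standard $TM$ bound on sub-threshold regret.
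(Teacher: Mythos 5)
Your proposal is correct and follows essentially the same route as the paper: in case (1) the sub-threshold regret vanishes and the filter is strengthened from $\Delta_{\min}$ to the arm-dependent $\Delta_{\min}^i$ before invoking Lemma~\ref{lem:summation}, and in case (2) the $TM$ bound plus Lemma~\ref{lem:summation} with $f^{-1}(x)=x/a$ and the integral bound $a^2/M$ are optimized over $M$, exactly as in the paper's proof. No gaps; the arithmetic $2\sqrt{156}<25$ also matches.
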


\begin{proof}
	\begin{enumerate}[(1)]
		\item If $\Delta_{S_t} < M = \Delta_{\min}$, then $\Delta_{S_t}=0$. So $Reg(\{\Delta_{S_t} < M\}) \le 0$.
		For every $i\in [m]$ and every integer $j$,
		we may replace $M$ with $\Delta_{\min}^i$ as below.
		\begin{align}
		&Reg(\{\Delta_{S_t} \ge M, S_t\in \calS_{i, j}, N_{i, j, t-1} <  \ell_T(\Delta_{S_t}, 2^{-j})\}) \label{eq:insM.part}\\
		=\; & Reg(\{\Delta_{S_t} \ge \Delta_{\min}, S_t\in \calS_{i, j}, N_{i, j, t-1} <  \ell_T(\Delta_{S_t}, 2^{-j})\}) \nonumber\\
		=\; & Reg(\{\Delta_{S_t} \ge \Delta_{\min}^i, S_t\in \calS_{i, j}, N_{i, j, t-1} <  \ell_T(\Delta_{S_t}, 2^{-j})\}). \nonumber
		\end{align}
		Then apply Lemma~\ref{lem:summation} with $M = \Delta_{\min}^i$, we have
		\begin{align*}
		\eqref{eq:insM.reg}
		&=   \sum_{i\in [m]} \sum_{j\ge 0} Reg(\{\Delta_{S_t} \ge M, S_t\in \calS_{i, j}, N_{i, j, t-1} < \ell_T(\Delta_{S_t}, 2^{-j})\})\\
		&\le \sum_{i\in [m]} \left[78\ln T
		\left(\frac{\Delta_{\min}^i}{f^{-1}(\Delta_{\min}^i)^2} + \int_{\Delta_{\min}^i}^{\Delta_{\max}^i}\frac{1}{f^{-1}(x)^2}\dx\right)
		+ (j_{\max}(\Delta_{\min}^i)+1) \cdot \Delta_{\max}^i\right]\\
		&\le \sum_{i\in [m]} 78\ln T \left(\frac{\Delta_{\min}^i}{f^{-1}(\Delta_{\min}^i)^2} + \int_{\Delta_{\min}^i}^{\Delta_{\max}^i}\frac{1}{f^{-1}(x)^2}\dx\right)
		+ m \cdot (j_{\max}(\Delta_{\min}) + 1) \cdot \Delta_{\max}.
		\end{align*}
		\item By Lemma~\ref{lem:summation}, for every arm $i$,
		\begin{align}
		&\sum_{j\ge 0} Reg(\{\Delta_{S_t} \ge M, S_t\in \calS_{i, j}, N_{i, j, t-1} < \ell_T(\Delta_{S_t}, 2^{-j})\}) \nonumber\\
		\le\;& 78\ln T\left(\frac{M}{f^{-1}(M)^2}
		+ \int_{M}^{\Delta_{\max}^i}\frac{1}{f^{-1}(x)^2}\dx\right)
		+ (j_{\max}(M) + 1) \cdot \Delta_{\max}^i \nonumber\\
		=  \;& 78\ln T\left(\frac{M}{(a^{-1}M)^2}
		+ \int_{M}^{\Delta_{\max}^i}\frac{1}{(a^{-1}x)^2}\dx\right)
		+ (j_{\max}(M) + 1) \cdot \Delta_{\max}^i \nonumber\\
		=  \;& 78\ln T\left(\frac{1}{a^{-2}M}
		+ \int_{M}^{\Delta_{\max}^i}\frac{1}{a^{-2}x^2}\dx\right)
		+ (j_{\max}(M) + 1) \cdot \Delta_{\max}^i \nonumber\\
		\le\;& 78\ln T\left(\frac{1}{a^{-2}M} + \frac{1}{a^{-2}M}\right)
		+ (j_{\max}(M) + 1) \cdot \Delta_{\max}^i \nonumber\\
		=  \;& \frac{156\ln T}{a^{-2}M} + (j_{\max}(M) + 1) \cdot \Delta_{\max}. \label{eq:insM.int}
		\end{align}
		$Reg(\{\Delta_{S_t} < M\}) < TM$ as the regret in each round is less than $M$.
		So by \eqref{eq:insM.int} and take $M=a\sqrt{156m\ln T/T}$,
		\begin{align*}
		\eqref{eq:insM.reg}
		&<   TM + \frac{156m\ln T}{a^{-2}M} + m \cdot (j_{\max}(M) + 1) \cdot \Delta_{\max}\\
		&=   a\sqrt{156mT\ln T} + a\sqrt{156mT\ln T} + m \cdot (j_{\max}(M) + 1) \cdot \Delta_{\max}\\
		&<   25a\sqrt{mT\ln T} + m \cdot (j_{\max}(a\sqrt{156m\ln T/T}) + 1) \cdot \Delta_{\max}.
		\end{align*}
	\end{enumerate}
\end{proof}

\begin{proof}[Proof of Theorem~\ref{thm:inf-norm}]
	\begin{enumerate}[(1)]
		\item Since $\Delta_{\min}>0$, we can take $M=\Delta_{\min}$. By Lemma~\ref{lem:insM}(1) and Claim~\ref{clm:suf1}, we continue Inequality \eqref{eq:upper.expand} as below.
		\begin{align*}
		\eqref{eq:upper.expand}
		&\le \sum_{i\in [m]} 78\ln T \left(\frac{\Delta_{\min}^i}{f^{-1}(\Delta_{\min}^i)^2}
		+ \int_{\Delta_{\min}^i}^{\Delta_{\max}^i}\frac{1}{f^{-1}(x)^2}\dx\right)
		+ m \cdot (j_{\max}(\Delta_{\min}) + 1) \cdot \Delta_{\max}\\
		&\qquad + m \cdot (j_{\max}(\Delta_{\min}) + 2) \cdot \frac{\pi^2}{6} \cdot \Delta_{\max}\\
		&=   \sum_{i\in [m]} 78\ln T \left(\frac{\Delta_{\min}^i}{f^{-1}(\Delta_{\min}^i)^2}
		+ \int_{\Delta_{\min}^i}^{\Delta_{\max}^i}\frac{1}{f^{-1}(x)^2}\dx\right)\\
		&\qquad + m \cdot \left[\left(\frac{\pi^2}{6}+1\right)
		\lceil-\log_2 f^{-1}(\Delta_{\min})\rceil_0 + \frac{\pi^2}{3}+1\right] \cdot \Delta_{\max}.
		\end{align*}
		\item Take $M=a\sqrt{156m\ln T/T}$, by Lemma~\ref{lem:insM}(2) and Claim~\ref{clm:suf1}, we continue Inequality \eqref{eq:upper.expand} as below.
		\begin{align*}
		\eqref{eq:upper.expand}
		&\le 25a\sqrt{mT\ln T} + m \cdot (j_{\max}(a\sqrt{156m\ln T/T}) + 1) \cdot \Delta_{\max}\\
		&\qquad + m \cdot (j_{\max}(a\sqrt{156m\ln T/T}) + 2) \cdot \frac{\pi^2}{6} \cdot \Delta_{\max}\\
		&=   25a\sqrt{mT\ln T} + m \cdot \left[\left(\frac{\pi^2}{6}+1\right)
		\left\lceil-\log_2 (\sqrt{156m\ln T/T})\right\rceil_0 + \frac{\pi^2}{3}+1\right] \cdot \Delta_{\max}.
		\end{align*}
	\end{enumerate}
\end{proof}

\subsubsection{Proof details} \label{sec:proofdetails}

In this subsection, we finish the remaining part of the proof,
i.e. the proofs of the claims. We first prove the bound of sufficiently sampled part, namely Claims~\ref{clm:suf1}.
To do so, we define two kinds of niceness, that the difference between $\mu_i$ and $\hat\mu_i$
is small enough and that $T_i$ is large enough comparing with $N_{i,j}$, 
and then show that both kinds of niceness are satisfied with high probability
and if so, it is impossible to play a bad action.
We then prove Claim~\ref{clm:insuf}.
In this subsection we assume $M$ is already defined as a positive real number as in the proof of Theorem~\ref{thm:inf-norm}. 
Notations $\hat\vmu_t, \hat\mu_{i, t}, \bar\vmu_t, \bar\mu_{i, t}$ denote the values of $\hat\vmu, \hat\mu_i, \bar\vmu, \bar\mu_i$ at the end
of round $t$, respectively.

\wei{I think we still need to explicitly put subscript $t$ to avoid confusion. Actually,
	by convention, $\hat\mu_{i,t}$ is the value of $\hat\mu_i$ at the end of round $t$, but the following
	definition refers to the beginning of round $t$, so I need to use $\hat\mu_{i,t-1}$.
	Please check such subscripts that I have changed.}

We now prove the claims.
\begin{proof}[Proof of Claim~\ref{clm:suf1}]
	Explicitly,
	\begin{align}
	& Reg(\{\Delta_{S_t} \ge M, \forall i, N_{i, q_i^{S_t}, t-1} \ge \ell_T(\Delta_{S_t}, q_i^{S_t})\}) 
	\nonumber \\
	&=   \sum_{t=1}^T \E[\Delta_{S_t} \cdot \I\{\Delta_{S_t} \ge M,
	\forall i, N_{i, q_i^{S_t}, t-1} \ge \ell_T(\Delta_{S_t}, q_i^{S_t})\}] \nonumber \\
	&\le \sum_{t=1}^T \Pr\{\Delta_{S_t} \ge M,
	\forall i, N_{i, q_i^{S_t}, t-1} \ge \ell_T(\Delta_{S_t}, q_i^{S_t})\} \cdot \Delta_{\max}. \label{eq:sufsample1}
	\end{align}
	We only need to bound
	$\Pr\{\Delta_{S_t} \ge M, \forall i, N_{i, q_i^{S_t}, t-1} \ge \ell_T(\Delta_{S_t}, q_i^{S_t})\}$,
	i.e.\ the probability that for every $i$, there is $N_{i, q_i^{S_t}, t-1} \ge \ell_T(\Delta_{S_t}, q_i^{S_t})$,
	but an action $S_t$ with $\Delta_{S_t} \ge M$ is still played.
	Let event $\calE_t = \{ \Delta_{S_t} \ge M, \forall i, N_{i, q_i^{S_t}, t-1} \ge \ell_T(\Delta_{S_t}, q_i^{S_t}) \}$.
	We now prove the claim that event $\calE_t$ is not empty only when $\lnot(\Ns_t\land \Nt_t)$, or equivalently
	if both the sampling and triggering are nice at the beginning of round $t$, then event $\calE_t$ is empty.
	If the sampling is nice at the beginning of round $t$,
	then
	$$\bar\mu_{i,t-1}=\min\{\hat\mu_{i,t-1}+\rho_{i,t}, 1\}\ge \mu_i.$$
	By monotonicity, $r_S(\bar\vmu_{t-1})\ge r_S(\vmu)$
	for every action $S$, so $\opt_{\bar\vmu_{t-1}} \ge \opt_{\vmu}$.
	As action $S_t$ is chosen by \textsf{Oracle} with input $\bar\vmu_{t-1}$,
	it must be that $r_{S_t}(\bar\vmu_{t-1}) = \opt_{\bar{\vmu}_{t-1}} \ge \opt_{\vmu}$,
	so $r_{S_t}(\bar\vmu_{t-1})-r_{S_t}(\vmu) \ge \opt_{\vmu} -r_{S_t}(\vmu) = \Delta_{S_t}$.
	We are going to show the claim by assuming $\Ns_t \land \Nt_t$
	and showing $\forall i, p_i^{S_t}|\bar\mu_{i,t-1}-\mu_i|<f^{-1}(\Delta_{S_t})$,
	then by $\infty$-norm TPM bounded smoothness (Condition~\ref{cond:globalTPM}),
	$r_{S_t}(\bar\vmu_{t-1})-r_{S_t}(\vmu)<\Delta_{S_t}$, which is a contradiction.
	Note that here we do need strict inequality ``$<$'' instead of ``$\le$'' when applying Condition~\ref{cond:globalTPM}.
	This can be done because $i$ has at most $m$ choices and the bounded smoothness function $f$
	is continuous and strictly increasing, so we can use a small enough $\varepsilon > 0$ such that
	$\forall i, p_i^{S_t}|\bar\mu_{i,t-1}-\mu_i| \le f^{-1}(\Delta_{S_t} - \varepsilon)$, and thus
	$r_{S_t}(\bar\vmu_{t-1})-r_{S_t}(\vmu)\le \Delta_{S_t} - \varepsilon < \Delta_{S_t}$.
	
	Below we omit $S_t$ from $\Delta_{S_t}$, $p_i^{S_t}$ and $q_i^{S_t}$. If $f^{-1}(\Delta)>p_i$, then $p_i|\bar\mu_{i,t-1}-\mu_i|\le p_i|1-0|<f^{-1}(\Delta)$
	without any dependency on sampling.
	If $f^{-1}(\Delta)\le p_i$, then $q_i \le 2^{\lceil-\log_2f^{-1}(\Delta)\rceil} \le 2^{j_{\max}(M)}$.
	When the sampling is nice (Definition~\ref{def:ns}), $\bar\mu_{i,t-1}\le \hat\mu_{i,t-1}+ \rho_{i,t} <\mu_i+2 \rho_{i,t}$.
	On the other hand, $|\bar{\mu}_{i, t-1} - \mu_i| \le |1-0| = 1$.
	When the triggering is nice (Definition~\ref{def:nt}),
	if $\sqrt{\frac{6\ln t}{\frac{1}{3}N_{i, q_i, t-1}\cdot q_i}}\le 1$,
	then $2\rho_{i, t} \le \sqrt{\frac{6\ln t}{\frac{1}{3}N_{i, q_i, t-1}\cdot q_i}}$.
	So regardless whether $\sqrt{\frac{6\ln t}{\frac{1}{3}N_{i, q_i, t-1}\cdot q_i}}\le 1$,
	$|\bar{\mu}_{i, t-1} - \mu_i| \le \sqrt{\frac{6\ln t}{\frac{1}{3}N_{i, q_i, t-1}\cdot q_i}}$.
	Event $\calE_t$ implies that
	$N_{i,q_i, t-1} \ge \ell_T(\Delta, q_i) \ge \ell_t(\Delta, q_i)$ (since $t\le T$).
	So
	\begin{align*}p_i|\bar{\mu}_{i, t-1} - \mu_i|
	\le p_i\sqrt{\frac{6\ln t}{\frac{1}{3}N_{i, q_i, t-1}\cdot q_i}}
	\le p_i\sqrt{\frac{6\ln t}{\frac{1}{3}\ell_t(\Delta, q_i)\cdot q_i}}
	< p_i\sqrt{\frac{6\ln t}{\frac{1}{3}\frac{72q_i\ln t}{f^{-1}(\Delta)^2}\cdot q_i}}\\
	= p_i\sqrt{\frac{f^{-1}(\Delta)^2}{4q_i^2}}
	\le p_i\sqrt{\frac{f^{-1}(\Delta)^2}{p_i^2}}
	= f^{-1}(\Delta).
	\end{align*}
	Hence, the claim holds.
	
	The claim implies that $\Pr\{\calE_t \} \le \Pr\{\lnot(\Ns_t\land \Nt_t)\} \le \Pr\{\lnot \Ns_t \} + \Pr\{\lnot  \Nt_t\}$.
	By Lemmas~\ref{lem:ns} and~\ref{lem:nt}, we have $\Pr\{\calE \} \le (2+j_{\max}(M))mt^{-2}$.
	Plugging it into Inequality~\eqref{eq:sufsample1}, we have
	\begin{align*}
	Reg(\{\Delta_{S_t} \ge M, \forall i, N_{i, q_i^{S_t}, t-1} \ge \ell_T(\Delta_{S_t}, q_i^{S_t})\})
	& \le \sum_{t=1}^T (2+j_{\max}(M))mt^{-2} \cdot\Delta_{\max}\\
	& \le  m \cdot (\lceil-\log_2 f^{-1}(M)\rceil_0+2)\cdot\frac{\pi^2}{6}\cdot\Delta_{\max}.
	\end{align*}
\end{proof}


\begin{proof}[Proof of Claim~\ref{clm:insuf}]
	Let $x$ be any real number that $x \ge M > 0$.
	In any round when an action $S$ with $S \in \calS_{i, j}$ is played,
	$N_{i, j}$ is increased by $1$. So
	$$\sum_{t=1}^{T} \Pr\{S_t \in \calS_{i, j}, N_{i, j, t-1} < \ell_T(x, 2^{-j})\}
	\le \ell_T(x, 2^{-j}).
	$$
	If we add an additional restriction $\Delta_{S^t} \ge x$,
	the probability will not increase, so
	$$\sum_{t=1}^{T} \Pr\{\Delta_{S_t} \ge x, S_t \in \calS_{i, j},
	N_{i, j, t-1} < \ell_T(x, 2^{-j})\} \le \ell_T(x, 2^{-j}).
	$$
	We use the shorthand $\calE_{i, j}^{S_t}$ to denote
	the event $\{S_t \in \calS_{i, j}, N_{i, j, t-1} < \ell_T(x, 2^{-j})\}$.
	Suppose $X$ is a non-negative random variable with $\Pr\{X \ge M\}=p$ and $\Pr\{X = 0\} = 1-p$.
	Then by the basic principal on expectation, we have 
	\begin{align*}
	\E[X] 
	& = \int_{0}^{+\infty} \Pr\{X \ge x\} \dx = \int_{0}^{M} \Pr\{X \ge x\} \dx + \int_{M}^{+\infty} \Pr\{X \ge x\} \dx \\
	& = p M + \int_{M}^{+\infty} \Pr\{X \ge x\} \dx.
	\end{align*}
	Applying the above, we have
	\begin{align*}
	& Reg(\{\Delta_{S_t} \ge M\} \cap \calE_{i, j}^{S_t}) \\
	&= \sum_{t=1}^T \E[\I(\{\Delta_{S_t} \ge M\} \cap \calE_{i, j}^{S_t}) \cdot \Delta_{S_t}] \\
	&= \sum_{t=1}^T
	\left(\Pr[\{\Delta_{S_t} \ge M\} \cap \calE_{i, j}^{S_t}] \cdot M
	+ \int_{M}^{+\infty} \Pr[\{\Delta_{S_t} \ge x\} \cap \calE_{i, j}^{S_t}]\dx\right)\\
	&= \sum_{t=1}^T \Pr[\{\Delta_{S_t} \ge M\} \cap \calE_{i, j}^{S_t}] \cdot M
	+ \int_{M}^{+\infty} \sum_{t=1}^T \Pr[\{\Delta_{S_t} \ge x\} \cap \calE_{i, j}^{S_t}]\dx\\
	&= \sum_{t=1}^T \Pr[\{\Delta_{S_t} \ge M\} \cap \calE_{i, j}^{S_t}] \cdot M
	+ \int_{M}^{\max\{\Delta_{\max}^i, M\}} \sum_{t=1}^T \Pr[\{\Delta_{S_t} \ge x\} \cap \calE_{i, j}^{S_t}]\dx\\
	&\le \ell_T(M,2^{-j}) M
	+ \int_{M}^{\max\{\Delta_{\max}^i, M\}} \ell_T(x,2^{-j}) \dx.
	\end{align*}
\end{proof}


\subsection{Comparison between 1-norm and $\infty$-norm}
In this paper, we give upper bounds of regret for CMAB-T problems
that satisfy TPM bounded smoothness with 1-norm or with $\infty$-norm.
We emphasis Theorem~\ref{thm:1-normTPM} and Theorem~\ref{thm:inf-norm} do not imply each other.
For clarity, we use $a_1$ and $a_{\infty}$ in place of $a$ in bounded smoothness function $f(x)=ax$.
If a CMAB-T problem instance satisfies TPM bounded smoothness with 1-norm with $f(x)=a_1x$,
then it also satisfies TPM bounded smoothness with $\infty$-norm with $f(x)=a_{\infty}x$,
where $a_{\infty}=Ka_1$.
Conversely, if a CMAB-T problem instance satisfies TPM bounded smoothness with $\infty$-norm with $f(x)=a_{\infty}x$,
then it also satisfies TPM bounded smoothness with 1-norm with $f(x)=a_1x$,
where $a_1=a_{\infty}$.
For distribution-dependent upper bound, according to Theorems~\ref{thm:1-normTPM} and \ref{thm:inf-norm},
we have $O(\frac{a_1^2Km\ln T}{\Delta})$ and $O(\frac{a_{\infty}^2m\ln T}{\Delta})$ respectively.
For a problem instance that satisfies TPM bounded smoothness with 1-norm with $f(x)=a_1x$,
if we use the bound for $\infty$-norm with $a_{\infty}=Ka_1$, the result will be $O(\frac{a_1^2K^2m\ln T}{\Delta})$.
For a problem instance that satisfies TPM bounded smoothness with $\infty$-norm with $f(x)=a_{\infty}x$,
if we use the bound for 1-norm with $a_1=a_{\infty}$, the result will be $O(\frac{a_{\infty}^2Km\ln T}{\Delta})$.
Both give an additional $K$ factor.
It is similar for distribution-independent bound, which will have an additional $\sqrt{K}$ factor in both cases.

	\section{Refined regret bounds for probabilistically triggered linear bandit} \label{sec:linbandit}
\newcommand{\va}{\ensuremath{\boldsymbol a}}%
\newcommand{\vB}{\ensuremath{\boldsymbol B}}%
\newcommand{\ro}{\mathrm{ro}}%

In this section, we present our new result 
that takes advantages from special properties on certain instances of
combinatorial semi-bandits, e.g.
matroid bandits \cite{KWAEE14} and classical combinatorial semi-bandits \cite{KWAS15}.
In Theorem~\ref{thm:1-normTPM},
the regret bound is taking summation over all base arms.
In those special cases, the achieved regret bounds
only take summation over the base arms that are not in the optimal action.

We give a general condition to characterize this property
and generalize to the case with probabilistic triggering.
In particular, it is satisfied by CMAB-T with linear reward,
  which are CMAB-T problem instances such that,
  for each action $S$, the expected reward is linear
  with respect to the expectations of arms and independent to the arms with $p_i^S=0$,
  i.e. there exists a vector $\va_S\in \R^m$, such that
\begin{equation}
  r_S(\vmu)=\va_S\cdot\vmu,
\end{equation}
and $p_i^S=0$ implies $(a_S)_i=0$.
We show that our regret bounds given in this section implies
  state-of-the-art regret bounds for classical combinatorial semi-bandits.
We also show that the regret bounds can asymptotically match
  the state-of-the-art regret bounds for matroid semi-bandits by exploiting its special property.

\subsection{Model}
First, we make a natural assumption that there exists at least one optimal action $S_{\opt}$.
This assumption is always satisfied unless the action space $\calS$ is infinite
  and the supremum of expected reward is not achieved by any action.

Although the CUCB algorithm does not know the actual expectation vector $\vmu$,
  we use $\vmu$ in the analysis.
For every action $S$, arbitrarily designate a reference optimal action $\ro(S)$,
  which is an optimal action under $\vmu$.

The intuition of the following refined condition is as follows.
For CMAB-T with linear reward,
although most regret comes from the over-estimation in CUCB algorithm,
we notice that sometimes over-estimation does not result in choosing a non-optimal action
because it increases the expected reward for both optimal actions and non-optimal actions.
For any arm $i$,
if $\left(a_S\right)_i\le \left(a_{\ro(S)}\right)_i$, then the over-estimation on arm $i$
  favors the optimal action $\ro(S)$.
So if $S$ is played, it is due to the over-estimation on other arms.
\wei{I do not quite follow the above sentence. Are you saying it for a particular $i$? If so, how is it true?
I am assuming that $S$ is the action selected in round $t$, under an over estimate $\bar{\vmu}$. 
So why is that if 	$\left(a_S\right)_i\le \left(a_{\ro(S)}\right)_i$, we will prefer selecting $\ro(S)$ instead of $S$?
We only gain in dimension $i$, what above other dimensions? 
Overall, I think I can follow the technical part starting from the condition below, but does not quite follow the intuition part.
}
And if $\left(a_S\right)_i> \left(a_{\ro(S)}\right)_i$,
we can replace the bounded smoothness factor with the relative factor $(\left(a_S\right)_i - \left(a_{\ro(S)}\right)_i)$.
We characterize this property as the following general condition.
\begin{mycond}[Relative 1-Norm TPM Bounded Smoothness]
\label{cond:relative.1-normTPM}
For a CMAB-T problem instance, a distribution $D\in\calD$ with expectation vector $\vmu$,
  and a reference optimal action mapping $\ro(\cdot)$,
  we say they satisfy relative 1-norm TPM bounded smoothness,
  if there exists a vector $\vB\in \R^m$ such that,
  for every distribution $D'\in\calD$ with expectation vector $\vmu'$ that $\vmu'\ge \vmu$,
  if $r_S(\vmu')\ge r_{\ro(S)}(\vmu')$,
  then 
  \begin{equation}
      r_{\ro(S)}(\vmu) - r_S(\vmu) \le \sum_{i\in [m]} B_ip_i^{D, S}(\mu'_i - \mu_i).
      \label{eq:relative.1-normTPM}
  \end{equation}
\end{mycond}

\begin{mylem}
For problem instance of CMAB-T with linear reward and a reference optimal action mapping $\ro(\cdot)$, let
\[ B_i=\sup_{S\in \calS\mid p_i^{D, S}>0} \left\{\max\left(\left(a_S\right)_i - \left(a_{\ro(S)}\right)_i, 0\right)/p_i^{D, S}\right\}. \]
Then they satisfy relative 1-norm TPM bounded smoothness.
\end{mylem}
\begin{proof}
We need to show that $\vmu'\ge \vmu$ and $r_S(\vmu')\ge r_{\ro(S)}(\vmu')$
implies \eqref{eq:relative.1-normTPM}.
Since $r_S(\vmu')\ge r_{\ro(S)}(\vmu')$,
we have $\left(\va_S - \va_{\ro(S)}\right)\cdot\vmu'\ge 0$.
Then we complete the proof by
\[
    \va_{\ro(S)}\cdot \vmu-\va_S\cdot \vmu \le \left(\va_S-\va_{\ro(S)}\right)\cdot (\vmu'-\vmu)
    \le \sum_{i\in [m]} B_ip_i^{D, S}(\mu'_i - \mu_i).
\]
\end{proof}

If the reference optimal action $\ro(S)$ are the same for all the actions, we can denote it as $\ro$. For linear bandit,
the definition of $B_i$ can be simplified to $B_i=(a_{\max})_i-(a_{\ro})_i$, where $(a_{\max})_i=\sup_{S\in \calS} (a_S)_i$.

\subsection{Results}
Theorem~\ref{thm:relative.1-normTPM} shows that the regret bounds given in Theorem~\ref{thm:1-normTPM}
still holds when the CMAB-T problem satisfies the relative 1-norm bounded smoothness.

\begin{mythm}
\label{thm:relative.1-normTPM}
For the CUCB algorithm on a CMAB-T problem instance that satisfies monotonicity (Condition~\ref{cond:monotone}),
  and a distribution and a reference optimal action mapping $\ro(\cdot)$
  that satisfy the relative 1-norm TPM bounded smoothness (Condition~\ref{cond:relative.1-normTPM})
  with $\vB$,
(1) if $\Delta_{\min} > 0$, we have gap-dependent bound
    \begin{align}
    &Reg_{\vmu, \alpha, \beta}(T) \le \sum_{i\in[m]} \frac{576B_i^2K\ln T}{\Delta_{\min}^i} 
    + \sum_{i\in[m]}\left(\left\lceil\log_2 \frac{2B_iK}{\Delta_{\min}^i}\right\rceil_0 + 2\right)
    \cdot \frac{\pi^2}{6} \cdot \Delta_{\max} +\sum_{i\in [m]}4B_i;
    \end{align}
(2) we have \emph{gap}-independent bound (it is not distribution-independent, because it depends on $\ro(\cdot)$)
    \begin{align}
    &Reg_{\vmu, \alpha, \beta}(T) \le 12\sqrt{\sum_{i\in[m]}B_i^2mT\ln T}
    + \left(\left\lceil\log_2 \frac{T}{18\ln T}\right\rceil_0+2\right) \cdot m \cdot \frac{\pi^2}{6}\cdot \Delta_{\max}
    +\sum_{i\in[m]}2B_i.
    \end{align}
\end{mythm}

\begin{proof}
The proof is mostly the same as the proof of Theorem~\ref{thm:1-normTPM} and its auxiliary lemmas.
For Lemma \ref{lem:TPMkappa},
  we use the definitions of $\kappa_{i, j, T}(M, s)$ and $\ell_{i, j, T}(M)$
  defined in Appendix~\ref{app:refineB}, where $B$ is replaced by $B_i$.
Then Lemma~\ref{lem:TPMkappa} still holds as follows.
In the proof of Lemma~\ref{lem:TPMkappa},
  the 1-norm TPM bounded smoothness is used to show
\begin{equation}
    \Delta_{S_t}\le B\sum_{i\in \trig{S_t}} p_i^{D, S_t}(\bar\mu_{i, t} - \mu_i). \tag{\ref{eq:TPMkappa.applycondition}}
\end{equation}
Since $\Delta_{S_t}=r_{\ro(S_t)}(\vmu)-r_{S_t}(\vmu)$,
  the relative 1-norm TPM bounded smoothness implies
\[ \Delta_{S_t} \le \sum_{i\in \trig{S_t}} B_ip_i^{D, S_t}(\bar\mu_{i, t} - \mu_i). \]
Then the remaining proof of Lemma~\ref{lem:TPMkappa} still holds.

We replace $B$ with $B_i$, then the derivation of Lemma~\ref{lem:1-norm.insuf} still works.

One argument in the proof of Theorem~\ref{thm:1-normTPM} needs to be modified.
When setting the maximal group number $j_{\max}^i$
  in the proof of Theorem~\ref{thm:1-normTPM},
  we now set $j_{\max}^i=\left\lceil\log_2\frac{2B_iK}{\Delta_{\min}{i}}\right\rceil_0$.
Then the theorem follows the proof of Theorem~\ref{thm:1-normTPM}.
\end{proof}

\subsection{Comparison}
Matroid bandits \cite{KWAEE14} and classical linear combinatorial semi-bandits~\cite{KWAS15} are
special cases of CMAB-T with linear reward, where $\va_S$ are a 0-1 vectors.
We show that these kinds of bandits can be extended to the case of probabilistically triggered arms
while the regret bounds are asymptotically the same.


Let $\ro$ be the reference optimal action.
Since the actions in matroid bandits and classical linear combinatorial semi-bandits
are virtually sets, we use $i\in \ro$ to denote arm $i$ is in action $\ro$,
i.e. $(a_{\ro})_i=1$.
Since $\va_S$ is a 0-1 vector for every $S$, we have $(a_{\max})_i=1$ and
\[B_i=(a_{\max})_i-(a_{\ro})_i=
\begin{cases}
  0,& i\in \ro;\\
  1,& i\not\in \ro.
\end{cases}
\]
Theorem~\ref{thm:relative.1-normTPM} gives regret bound
\[
Reg_{\vmu, \alpha, \beta}(T) \le \sum_{i\in[m]} \frac{576B_i^2K\ln T}{\Delta_{\min}^i} + O(1)
=\sum_{i\not\in\ro} \frac{576K\ln T}{\Delta_{\min}^i} + O(1).
\]

For the classical linear combinatorial semi-bandits,
we make the following arguments.
First,
if without probabilistically triggered arms,
the adaptation used to prove Theorem~\ref{thm:relative.1-normTPM}
can be used in Theorem~\ref{thm:1-norm.nontriggering},
to get regret bound (see Eq.\eqref{eq:1-norm.nontriggering1} for the detail of the constant $O(1)$)
\[\sum_{i\not \in \ro}\frac{48K\ln T}{\Delta_{\min}^i}+O(1),\]
which asymptotically matches the result in \cite{KWAS15} and improves on the constant factor.
Second,
this shows that the classical linear combinatorial semi-bandits can be generalized to
the case with probabilistically triggered arms while
still enjoying the benefit that the regret bound takes summation only over non-optimal arms.
Recall the definition of $\Delta_{\min}^i$, which is the minimal gap for non-optimal actions that might trigger arm $i$.
If $i$ is in the optimal action, then very likely $\Delta_{\min}^i=\Delta_{\min}$.
That shows why it is important to exclude arms in the optimal action from the regret bound.


\qinshi{Matroid bandits' regret bound is $\sum\frac{\ln T}{\Delta_i}$, instead of $\sum\frac{K\ln T}{\Delta_i}$.
We cannot prove regret bound like that with 1-norm condition.
One might think using $\infty$-norm condition can fill this gap.
But I double checked that this is not possible, either.
The key of $\sum\frac{\ln T}{\Delta_i}$ regret is that
error on different arms cannot accumulate to mislead the player to play a non-optimal action.
If an non-optimal action is played, there must be an arm such that
$\bar{\vmu}_i\ge \vmu_{\text{least opt}}$.
}


For matroid bandits,
the analysis for classic combinatorial semi-bandits gives a regret bound
\[\sum_{i\not \in \ro}\frac{48K\ln T}{\Delta_{\min}^i}+O(1),\]
with an extra $O(K)$ factor comparing with \cite{KWAEE14}.
That is because of the special property of matroid 
that the error on each arm cannot accumulate.
So if a non-optimal action is played, that must be caused by the error on a single arm.
That makes it valid
to set the bound of sufficiently sampled threshold to
$\ell_{T}(\Delta_{\min}^i)=\Theta(\frac{\ln T}{(\Delta_{\min}^i)^2})$
instead of $\ell_{T}(\Delta_{\min}^i)=\Theta(\frac{K^2\ln T}{(\Delta_{\min}^i)^2})$.
Refining this part of analysis is possible,
but it would be too specific for the discussion in this paper.
}{\empty}
		
\end{document}